\documentclass[12pt,a4paper]{article}

\usepackage[utf8]{inputenc}
\usepackage[T1]{fontenc}
\usepackage[expansion=false]{microtype}
\usepackage{geometry}
\usepackage{amsmath,amssymb,amsthm}
\usepackage{mathtools}
\usepackage{bm}

\usepackage{graphicx}
\usepackage{booktabs}
\usepackage{subcaption}

\usepackage{algorithm}
\usepackage{algpseudocode}
\usepackage{float}

\usepackage{natbib}

\usepackage[colorlinks=true, linkcolor=blue, citecolor=blue, urlcolor=blue]{hyperref}
\newtheorem{definition}{Definition}[section]
\newtheorem{proposition}[definition]{Proposition}
\newtheorem{remark}[definition]{Remark}
\newtheorem{theorem}[definition]{Theorem}
\newtheorem{lemma}[definition]{Lemma}
\newtheorem{assumption}[definition]{Assumption}
\newtheorem{corollary}[definition]{Corollary}

\newcommand{\X}{\mathcal{X}}
\newcommand{\Z}{\mathcal{Z}}
\newcommand{\E}{\mathcal{E}}
\newcommand{\R}{\mathbb{R}}

\newcommand{\norm}[1]{\left\lVert#1\right\rVert}
\newcommand{\Ogap}{\Omega}
\newcommand{\Dgap}{\Omega^\pi}

\title{Consolidation-Expansion Operator Mechanics:\\
A Unified Framework for Adaptive Learning}

\author{
    Debashis Guha\thanks{S P Jain School of Global Management;
    \texttt{debashis.guha@spjain.org}}
}
\date{}

\begin{document}
\maketitle

\begin{abstract}
Every adaptive learning system must alternate between two operations:
consolidating what it already knows and expanding into new evidence. We propose
\emph{Consolidation-Expansion Operator Mechanics} (OpMech), a framework that
makes this structure precise. The central object is the \emph{order-gap}
$\Ogap(\theta; e)$, the degree to which a consolidation operator~$Q$ and an
expansion operator~$P_e$ fail to commute at a given knowledge state. Because
the order-gap is computable from the system's own trajectory, it serves as
a real-time control signal: large values indicate that the system is still
sensitive to the ordering of consolidation and expansion; once the order-gap
falls and stays small, further processing is unlikely to change the outcome.

Three results give the signal precise meaning: the order-gap
decays along convergent trajectories; a persistently large order-gap implies
the system is far from its settled state; and an order-gap-based
stopping rule terminates with provable guarantees in both noiseless and
bounded-noise settings. The framework applies across five domains:
bandits, reinforcement learning, stochastic optimization, continual learning,
and recursive language models. We give
conditions under which the order-gap reliably tracks convergence in three
representative cases. We develop the recursive language model application in
detail, showing how OpMech replaces heuristic stopping rules and fixed recursion
budgets with principled, evidence-driven alternatives.
\end{abstract}

\noindent\textbf{Keywords.}
adaptive learning; recursive reasoning; iterative refinement; non-commuting operators; 
bandits; reinforcement learning; recursive language models; long context; 
stochastic optimization; continual learning.

\setcounter{tocdepth}{2}
\tableofcontents
\medskip

\section*{Notation}
\addcontentsline{toc}{section}{Notation}
\label{sec:notation}

The following symbols are used throughout. Section references point to the
location where each symbol is first introduced or formally defined; symbols
local to a single domain section are noted as such.

\begin{center}
\renewcommand{\arraystretch}{1.15}
\begin{tabular}{@{}l l l@{}}
\toprule
\textbf{Symbol} & \textbf{Meaning} & \textbf{Reference} \\
\midrule
\multicolumn{3}{@{}l}{\emph{Framework}} \\
$\X$                       & Knowledge state space (Banach space)                          & Def.~\ref{def:knowledge_state} \\
$\theta \in \X$            & Knowledge state                                               & Def.~\ref{def:knowledge_state} \\
$\E$                       & Event space                                                   & §\ref{sec:operators} \\
$e \in \E$                 & Event (incoming evidence)                                     & §\ref{sec:operators} \\
$P_e : \X \to \X$          & Expansion operator (incorporates evidence)                    & Def.~\ref{def:expansion} \\
$Q : \X \to \X$            & Consolidation operator (refines using existing state)         & Def.~\ref{def:consolidation} \\
$\Ogap(\theta; e)$         & Order-gap: $\norm{Q(P_e(\theta)) - P_e(Q(\theta))}$           & Def.~\ref{def:ordergap} \\
$\Dgap(\theta; e)$         & Decision-facing order-gap (composed with decision map $\pi$)  & Eq.~\eqref{eq:ordergap_decision} \\
$\pi : \X \to \Z$          & Decision map                                                  & Eq.~\eqref{eq:ordergap_decision} \\
\addlinespace
\multicolumn{3}{@{}l}{\emph{Theory constants}} \\
$\rho \in [0,1)$           & Consolidation contraction rate                                & Ass.~\ref{ass:contract} \\
$L \geq 0$                 & Expansion Lipschitz constant                                  & Ass.~\ref{ass:lipschitz} \\
$\sigma \geq 0$            & Equilibrium-noise mean envelope, $\E[W_e] \leq \sigma$        & Ass.~\ref{ass:equilibrium} \\
$M \geq \sigma$            & Equilibrium-noise a.s.\ envelope, $W_e \leq M$ a.s.           & Ass.~\ref{ass:equilibrium} \\
$\mu > 0$                  & Order-gap sensitivity coefficient                             & Ass.~\ref{ass:nondeg} \\
$r > 0$                    & Validity radius for non-degeneracy                            & Ass.~\ref{ass:nondeg} \\
$\theta^\star$             & Consolidation fixed point: $Q(\theta^\star) = \theta^\star$   & §\ref{sec:theory_setup} \\
$\theta^\star_\infty$      & Effective equilibrium of joint dynamics                       & Prop.~\ref{prop:effective_eq} \\
$\gamma$                   & Effective contraction rate $\rho L$ (where it appears)        & §\ref{sec:theory_contraction} \\
$\mathcal{F}_t$            & Filtration $\sigma(e_0,\ldots,e_{t-1})$                       & §\ref{sec:theory_setup} \\
\addlinespace
\multicolumn{3}{@{}l}{\emph{Domain-specific (local to indicated section)}} \\
$(\hat\mu_a, \hat\sigma_a^2, n_a)$ & Bandit sufficient statistics for arm $a$               & §\ref{sec:bandits} \\
$K$                        & Number of arms (bandits)                                      & §\ref{sec:bandits} \\
$\lambda, \kappa$          & Bandit prior shrinkage rates                                  & Eq.~\eqref{eq:bandit_Q} \\
$\sigma_r^2$               & Bandit reward variance (distinct from theory $\sigma$)        & §\ref{sec:bandits} \\
$(s, a, r, s')$            & RL transition tuple                                           & §\ref{sec:rl} \\
$w$, $\psi$                & Critic / actor parameters (RL)                                & §\ref{sec:rl} \\
$\eta$                     & SGD learning rate                                             & §\ref{sec:sgd} \\
$\varphi : G \to \R^d$     & RLM epistemic state embedding                                 & §\ref{sec:rlm_domain} \\
$S$                        & RLM aggregated state across chunks                            & §\ref{sec:rlm_domain} \\
\bottomrule
\end{tabular}
\end{center}

\medskip
\noindent\emph{Note.} Where a symbol carries different meanings in
domain-specific sections (e.g., $\sigma$ for the framework's noise
envelope versus $\sigma_r^2$ for bandit reward variance), the
domain-local meaning is in force only within the marked section.

\section{Introduction}
\label{sec:intro}

The tension between knowledge consolidation and knowledge expansion, known as
the exploitation-exploration dilemma in reinforcement learning, is among
the oldest problems in adaptive systems. In multi-armed bandits, the learner
must weigh the arm with the highest estimated reward against trying less-known
ones. In reinforcement learning, the agent must balance policy improvement
against continued evaluation. In stochastic gradient descent, the optimiser
must choose between refining the current basin and escaping to a better one.
In continual learning, the system must protect what it has already learned
while remaining open to new tasks. In recursive language models, the reasoner
must decide between consolidating an answer from evidence already gathered
and recursing into further context.

These problems are typically studied in isolation, each with its own
formalism and analysis tools. Yet beneath all of them lies the same structure:
two operations (one that acts on current knowledge, one that incorporates
new evidence) applied in sequence, with their ordering shaping the learning
trajectory.

This paper proposes \emph{Consolidation-Expansion Operator Mechanics} (OpMech),
a framework that makes this shared structure explicit. The central
observation is:

\begin{quote}
\emph{In every adaptive learning system, consolidation and expansion are
non-commuting operations on the learner's knowledge state. The degree to which
their ordering matters is a measurable, trajectory-dependent quantity, the
order-gap; this quantity can be used as a control signal for adaptive
algorithms.}
\end{quote}

The framework consists of three elements:

\begin{enumerate}
\item A \textbf{stochastic dynamical system} on a knowledge state space $\X$,
      driven by two operators: consolidation $Q$ and expansion $P_e$, with
      canonical dynamics $\theta_{t+1} = Q \circ P_{e_t}(\theta_t)$.

\item The \textbf{order-gap} $\Ogap(\theta; e) = \norm{Q(P_e(\theta)) -
      P_e(Q(\theta))}$, which measures the effect on downstream decisions of
      reordering the two operations.

\item An \textbf{algorithmic principle}: track the order-gap and use it to
      modulate the learning system's behaviour, expand more when $\Ogap$ is
      large, consolidate more when it is small.
\end{enumerate}

Our contributions are:

\begin{itemize}
\item \textbf{A structural sensitivity theorem (Proposition~\ref{prop:local_nondeg},
      Proposition~\ref{prop:noisy_nondeg}, Remark~\ref{rem:second_moment}).}
      The question ``when is the order-gap a faithful convergence signal?''
      reduces to a rank condition on the expected commutator Jacobian
      $\bar\Sigma = \mathbb{E}_e[A B_e - B_e A]$, or, when first-moment effects
      cancel, on the second-moment commutator Gramian
      $G = \mathbb{E}_e[\Sigma_e^\top \Sigma_e]$. Verification reduces to
      checking these matrices at a single point — the consolidation fixed
      point in the noiseless regime, the effective equilibrium of
      Proposition~\ref{prop:effective_eq} in the noisy regime.

\item \textbf{A structural theorem for reinforcement learning
      (Theorem~\ref{prop:rl_rank_deficient},
      Theorem~\ref{prop:rl_nondeg}).}
      Canonical actor-critic consolidation is rank-deficient on the policy
      subspace: the first-moment commutator Jacobian has the entire policy
      subspace in its null space, so the order-gap cannot detect policy
      errors. The framework prescribes an explicit fix (a policy-critic
      consistency term with strength $\beta' > 0$) that restores full rank
      on the locally identifiable actor-critic directions, with order-gap
      sensitivity scaling linearly in $\beta'$.

\item \textbf{Convergence and stopping guarantees
      (Theorem~\ref{thm:contraction},
       Proposition~\ref{prop:lowerbound},
       Theorems~\ref{thm:stopping_det}--\ref{thm:stopping_hp}).}
      The order-gap decays geometrically along convergent trajectories;
      a persistently large order-gap implies suboptimality;
      the stopping rule $\bar\Omega_{t,w} \leq \varepsilon$ terminates in
      finite time with explicit suboptimality bounds in both the noiseless
      ($\sigma = 0$) and bounded-noise ($\sigma > 0$) regimes.
      Corollary~\ref{cor:stop_eff_eq} states the noisy-regime stopping
      guarantees relative to the effective equilibrium
      $\theta^\star_\infty$, which is the actual asymptotic attractor of
      the dynamics. The state-dependent sampling case
      (Theorems~\ref{thm:contraction_cond}--\ref{thm:stopping_cond}) is
      handled by replacing marginal expectation bounds in
      Assumption~\ref{ass:equilibrium} with uniform conditional bounds; the
      i.i.d.\ theorems then carry over verbatim with no mixing-time
      correction.

\item \textbf{A unified framework for adaptive learning.}
      The order-gap is a single, measurable, domain-independent quantity
      that organises five adaptive learning settings: bandits,
      reinforcement learning, stochastic gradient descent, continual
      learning, and recursive language models.
      Three settings (bandits, RLMs, linear actor-critic RL) receive
      explicit verification via the structural sensitivity theorem; SGD
      reduces to classical Robbins-Monro stochastic approximation in the
      $Q = \mathrm{id}$ limit, with the order-gap a non-trivial diagnostic
      for adaptive variants; continual learning is a qualitative mapping
      (Section~\ref{sec:domains}).

\item \textbf{Applications to recursive language models
      (Section~\ref{sec:recursive_intelligence}).}
      The order-gap supplies three principled control signals for recursive
      reasoning — stopping, recursion scheduling, adaptive extraction —
      replacing heuristic alternatives. Empirical validation is reported
      in a companion paper.
\end{itemize}

We develop the framework in Section~\ref{sec:framework}, state the
algorithmic principle in Section~\ref{sec:principle}, establish formal
convergence and stopping guarantees in Section~\ref{sec:theory}, demonstrate
the framework across five domains in Section~\ref{sec:domains}, develop the
recursive intelligence application in detail in
Section~\ref{sec:recursive_intelligence}, show how existing algorithms
implicitly control $\Ogap$ in Section~\ref{sec:hyperparams}, and discuss the
research programme opened by the framework in Section~\ref{sec:discussion}.

\section{The OpMech Framework}
\label{sec:framework}

\subsection{Knowledge State}
\label{sec:state}

\begin{definition}[Knowledge state]
\label{def:knowledge_state}
The \emph{knowledge state} of an adaptive learner is a point $\theta \in \X$,
where $\X$ is a state space encoding everything the learner currently knows.
\end{definition}

The precise structure of $\X$ depends on the domain:

\begin{center}
\begin{tabular}{ll}
\toprule
\textbf{Domain} & \textbf{Knowledge state $\theta$} \\
\midrule
Multi-armed bandits     & Sufficient statistics $\{(\hat\mu_a, \hat\sigma_a^2, n_a)\}_{a=1}^K$ \\
Stochastic gradient descent & Model parameters + optimiser state \\
Reinforcement learning  & Value function parameters $\theta$ of $Q_\theta(s,a)$ \\
Continual learning      & Task-specific parameters across all tasks seen \\
Recursive language models & Aggregated state $S$ over chunks processed so far \\
\bottomrule
\end{tabular}
\end{center}

\subsection{Two Operators}
\label{sec:operators}

The learner's trajectory through $\X$ is governed by two operators.

\begin{definition}[Expansion operator]
\label{def:expansion}
For an event $e \in \E$, the \emph{expansion operator}
$P_e : \X \to \X$ is defined by $P_e(\theta) = U_e(\theta)$,
where $U_e$ is the evidence update corresponding to event $e$.
\end{definition}

An \emph{event} is whatever arrives from outside the learner: a reward
observation in bandits, a sampled minibatch in SGD, a transition in RL,
a chunk of context in recursive reasoning. The
key property is that $P_e$ brings information into $\theta$ that was not
already there. The subscript $e$ matters: the family $\{P_e : e \in
\E\}$ is indexed by the event because a different event induces a different
update, and this distinction is what makes the expected commutator
Jacobian $\mathbb{E}_e[AB_e - B_eA]$ non-trivial, with each $e$ contributing a distinct
$B_e = DP_e(\theta^\star)$.

\begin{definition}[Consolidation operator]
\label{def:consolidation}
The \emph{consolidation operator} $Q : \X \to \X$ refines $\theta$ toward the
greedy optimum using only information already present in $\theta$. No external
evidence enters.
\end{definition}

The defining properties of $Q$ are:
\begin{enumerate}
\item \textbf{Endogenous}: $Q$ depends only on $\theta$, not on any event $e$.
\item \textbf{Contractive}: $Q$ concentrates the state around the current
      greedy optimum, reducing uncertainty about the optimal action.
\item \textbf{Fixed-point structure}: the fixed points of $Q$ are knowledge
      states where pure consolidation is self-consistent.
\end{enumerate}

The asymmetry between the two operators is fundamental: $P_e$ requires
external input; $Q$ does not.

\subsection{Dynamics}
\label{sec:dynamics}

\begin{definition}[Stochastic dynamical system]
\label{def:dynamics}
The \emph{OpMech dynamics} are the discrete-time Markov chain on $\X$ with
canonical trajectory
\begin{equation}
\label{eq:dynamics}
\theta_{t+1} = Q \circ P_{e_t}(\theta_t), \qquad e_t \sim P(\cdot \mid \theta_t),
\end{equation}
and transition kernel
\begin{equation}
\label{eq:kernel}
\mathbb{P}(\theta_{t+1} \in A \mid \theta_t) =
\int_{\E} \mathbf{1}_A\bigl(Q(U_e(\theta_t))\bigr) \, P(de \mid \theta_t).
\end{equation}
\end{definition}

At each step, the learner expands (incorporates evidence from event $e_t$) and
then consolidates (refines toward the greedy optimum). The randomness
comes from $e_t$, which may depend on the current knowledge
state (as in bandits, where the arm-selection policy depends on $\theta$) or
may be independent of it (as in SGD, where the minibatch is drawn from a fixed
distribution).

\subsection{The Order-Gap}
\label{sec:ordergap}

\begin{definition}[Order-gap]
\label{def:ordergap}
The \emph{order-gap} at state $\theta$ given event $e$ is
\begin{equation}
\label{eq:ordergap}
\Ogap(\theta; e) = d\bigl(Q(P_e(\theta)),\; P_e(Q(\theta))\bigr),
\end{equation}
where $d$ is a metric on $\X$. When $\X$ is a vector space this reduces to
\[
\Ogap(\theta; e) = \norm{Q(P_e(\theta)) - P_e(Q(\theta))}.
\]
\end{definition}

The order-gap measures the distance between two trajectories: one where the
learner expands before consolidating, and one where it consolidates before
expanding. When $\Ogap$ is large, the ordering matters; the learning system
has not settled. When $\Ogap$ is small, the ordering is nearly
irrelevant; the system has converged.

\textbf{Decision-facing order-gap.}
If a decision map $\pi : \X \to \Z$ is specified and $d_\Z$ is a metric on $\Z$,
the order-gap at the decision level is:
\begin{equation}
\label{eq:ordergap_decision}
\Dgap(\theta; e)
= d_\Z\!\bigl(\pi(Q(P_e(\theta))),\; \pi(P_e(Q(\theta)))\bigr).
\end{equation}
When $\Z$ is a vector space, this reduces to the norm of the difference.

\subsection{Key Properties}
\label{sec:properties}

\textbf{Non-commutativity is generically present.}
In nontrivial adaptive systems, consolidation and expansion generically need
not commute: $Q$ contracts toward a fixed point that depends on the current
state, while $P_e$ moves the state, so the two operations interact
differently depending on their order. The order-gap measures the extent to
which they fail to commute along the trajectory. Non-stationarity makes the
order-gap \emph{persistently large}, but the non-commutativity is a property
of the operator pair, not a consequence of the environment.

\textbf{Convergence under stationarity (noiseless case).}
In the noiseless stationary regime ($\sigma = 0$, see Assumption~\ref{ass:equilibrium}):
\begin{equation}
\label{eq:convergence}
\mathbb{E}\bigl[\Ogap(\theta_t; e_t)\bigr] \to 0 \quad \text{as } t \to \infty.
\end{equation}
In noisy stationary regimes ($\sigma > 0$), the expected order-gap is bounded
by a noise-floor term quantified in Section~\ref{sec:theory_noisefloor}.
This is not the same as $[Q, P_e] = 0$; the operators remain structurally
non-commutative. Only their expected trajectory impact shrinks.

\textbf{Persistence under non-stationarity.}
When the environment changes, $\mathbb{E}[\Ogap(\theta_t; e_t)]$ remains
large or grows. The order-gap detects that the learning system has not settled
relative to the shifting environment.

\section{The Algorithmic Principle}
\label{sec:principle}

The core claim of OpMech is not that the order-gap exists (it trivially does
for any pair of non-commuting maps), but that $\Ogap$ is the right
quantity to \emph{track and control} in adaptive systems.

The algorithmic principle is:

\begin{quote}
\emph{Compute the order-gap $\Ogap(\theta_t; e_t)$ along the learning
trajectory. Use it as a control signal: expand more when $\Ogap$ is large,
consolidate more when $\Ogap$ is small.}
\end{quote}

This principle takes three concrete forms:

\textbf{Adaptive exploration rate.}
Let $\eta_t$ be the exploration rate at step $t$. Set
$\eta_t = \phi(\Ogap(\theta_t; e_t))$, where $\phi$ is a monotone increasing
function. When the order-gap is large, the ordering of consolidation and
expansion matters greatly: the learner has not settled, and it should expand
more. When the order-gap is small, the learner has nearly converged, so it
should consolidate.

\textbf{Consolidation scheduling.}
Rather than consolidating at every step, consolidate when the accumulated
order-gap since the last consolidation exceeds a threshold $c$:
\[
\sum_{i=\tau}^{t} \Ogap(\theta_i; e_i) > c,
\]
where $\tau$ is the last consolidation step.

\textbf{Stopping criterion.}
Declare convergence when a windowed average of $\Ogap$ falls below a
threshold $\varepsilon$. Section~\ref{sec:theory} establishes formal
convergence and stopping guarantees for the order-gap signal. The adaptive
exploration and consolidation-scheduling rules above are algorithmic design
principles motivated by this theory; their performance guarantees are
domain-specific and developed in companion work.

\section{Convergence, Bounds, and Stopping Guarantees}
\label{sec:theory}

The algorithmic principle of Section~\ref{sec:principle} rests on three
claims that need formal backing: that the order-gap decays along convergent
trajectories, that a persistently large order-gap signals genuine
suboptimality, and that the stopping rule $\Ogap < \varepsilon$ terminates
in finite time with bounded error. This section establishes each of these.

\subsection{Setting and Assumptions}
\label{sec:theory_setup}

\textbf{Scope.}
The formal results in this section are proved first for the case where
events $e_t$ are drawn independently from a fixed distribution $P$ on
$(\E, \mathcal{F})$, independent of $\theta_t$; we refer to this as the
\emph{fixed-sampling setting}. This covers independent evidence streams and
settings where the sampling policy is held fixed. When the event distribution
depends on the current state (as in fully adaptive bandits or off-policy
RL), Section~\ref{sec:state_dependent} extends the analysis to the
\emph{state-dependent setting} under uniform conditional moment bounds; the
contraction and stopping theorems carry over verbatim, because the
underlying concentration tool is martingale concentration on residuals
$\Ogap(\theta_s; e_s) - \mathbb{E}[\Ogap(\theta_s; e_s) \mid \mathcal{F}_s]$,
which preserves its martingale-difference structure under state-dependent
sampling.

We work on a Banach space $(\X, \norm{\cdot})$. The operators $Q: \X \to \X$
and $\{P_e : e \in \E\}$ generate the dynamics of Equation~\eqref{eq:dynamics},
with events $e_t$ drawn i.i.d.\ from $P$. Denote
$\mathcal{F}_t := \sigma(e_0, \ldots, e_{t-1})$.

\begin{assumption}[Contractive consolidation]
\label{ass:contract}
There exists $\rho \in [0, 1)$ such that
\[
\norm{Q(\theta_1) - Q(\theta_2)} \leq \rho \norm{\theta_1 - \theta_2}
\]
for all $\theta_1, \theta_2 \in \X$.
\end{assumption}

By the Banach fixed-point theorem, $Q$ admits a unique fixed point
$\theta^\star \in \X$, the \emph{consolidation fixed point}.

\begin{assumption}[Lipschitz expansion]
\label{ass:lipschitz}
There exists $L \geq 0$ such that
\[
  \norm{P_e(\theta_1) - P_e(\theta_2)} \leq L\,\norm{\theta_1 - \theta_2}
\]
for all $\theta_1, \theta_2 \in \X$ and all $e \in \E$.
\end{assumption}

\begin{assumption}[Equilibrium regularity]
\label{ass:equilibrium}
The random variable $W_e := \norm{P_e(\theta^\star) - \theta^\star}$
satisfies $\mathbb{E}[W_e] \leq \sigma$ for some $\sigma \geq 0$. For
high-probability statements, we further assume $W_e \leq M$ almost surely,
for some $M \geq \sigma$.
\end{assumption}

The case $\sigma = 0$ implies $W_e = 0$ a.s., i.e.,
$P_e(\theta^\star) = \theta^\star$ almost surely, placing us in the
\emph{noiseless regime}. This regime admits deterministic
(pathwise) bounds.

\begin{assumption}[Local order-gap sensitivity]
\label{ass:nondeg}
There exist $r > 0$ and $\mu > 0$ such that, for all $\theta \in \X$ with
$\norm{\theta - \theta^\star} \leq r$,
\[
  \mathbb{E}_e\bigl[\Ogap(\theta; e)\bigr] \;\geq\; \mu \norm{\theta - \theta^\star}.
\]
Informally: within a neighbourhood of $\theta^\star$ of radius $r$, the
expected order-gap is bounded below by a positive multiple of the distance to
the fixed point, so a large order-gap reliably signals that the system has not
settled. In the noisy regime ($\sigma > 0$ in
Assumption~\ref{ass:equilibrium} with $W_e \leq M$ a.s.), the validity radius
must satisfy $r > \rho M/(1-\gamma)$ — the asymptotic radius of the
trajectory's noise ball, established in Lemma~\ref{lem:containment} below —
otherwise the trajectory need not remain within the validity region and the
stopping guarantees do not bind. This is the non-degeneracy condition used
by the stopping theorems.
\end{assumption}

Assumption~\ref{ass:nondeg} is now stated locally because the natural
verification strategy is local linearisation at $\theta^\star$.
Proposition~\ref{prop:local_nondeg} gives a sufficient Jacobian rank
condition establishing the assumption under $\sigma = 0$, and
Proposition~\ref{prop:noisy_nondeg} extends this to the stochastic case via
the effective equilibrium. The stopping theorems
(Theorems~\ref{thm:stopping_det}--\ref{thm:stopping_hp}) require the
trajectory to lie within the validity ball $B_r(\theta^\star)$ during the
relevant stopping window. In the noiseless regime,
Theorem~\ref{thm:contraction}(iii) gives geometric decay
$\norm{\theta_t - \theta^\star} \leq \gamma^t R_0$ a.s., so the trajectory
enters $B_r$ deterministically after
$T_0^{\mathrm{det}}(r) := \lceil \log(R_0/r)/\log(1/\gamma) \rceil$ steps. In
the noisy regime, Lemma~\ref{lem:containment} establishes the analogous
deterministic containment provided $r > \rho M/(1-\gamma)$. The domain
verifications in Section~\ref{sec:theory_domains} establish local
applicability via Proposition~\ref{prop:local_nondeg} or
Proposition~\ref{prop:noisy_nondeg}, not global validity.

\begin{remark}[Compatibility of the validity radius with the noise floor]
\label{rem:radius_compat}
Propositions~\ref{prop:local_nondeg} and~\ref{prop:noisy_nondeg} establish
Assumption~\ref{ass:nondeg} on a local ball whose radius is determined by
the second-derivative bounds of the operators: $r = \mu_0/(2R)$ in the
noiseless case, $r_\infty = \mu_0^\infty/(2R)$ in the noisy case. The
stopping theorems require this radius to exceed the asymptotic noise ball
radius $\rho M/(1-\gamma)$ established by
Lemma~\ref{lem:containment}. The corresponding compatibility condition is
\begin{equation}
\label{eq:compat}
  \frac{\mu_0}{2R} \;>\; \frac{\rho M}{1-\gamma}
  \qquad\Longleftrightarrow\qquad
  \mu_0 \;>\; \frac{2R \rho M}{1-\gamma},
\end{equation}
which says the linear commutator signal $\mu_0$ must dominate the
second-order remainder evaluated at the noise-ball scale. In the noiseless
regime ($M = 0$), \eqref{eq:compat} is automatic. In the noisy regime, it
is the operational requirement that the framework's signal-to-noise
structure is non-degenerate. The same condition with $\mu_0 \to \mu_0^\infty$
and $M \to M^\infty$ governs the noisy-regime stopping bounds centred at
$\theta^\star_\infty$.
\end{remark}

\subsection{When the Order-Gap Tracks Distance to Equilibrium}
\label{sec:local_nondeg}

Assumption~\ref{ass:nondeg} is a global, first-moment condition on the
order-gap. We establish a local version that reduces its verification to a
standard rank condition on an averaged matrix.

Assume $Q$ and $\{P_e\}$ are $C^2$ in a neighbourhood of $\theta^\star$, with
second derivatives of $Q$ and $P_e$ uniformly bounded in that neighbourhood
(or admitting an integrable dominating function over $e$). Let
\[
  A := DQ(\theta^\star), \qquad B_e := DP_e(\theta^\star),
\]
and define the \emph{commutator Jacobian} at the equilibrium:
\[
  \Sigma_e := A B_e - B_e A.
\]
Under $\sigma = 0$, a first-order Taylor expansion at
$\theta = \theta^\star + x$ for small $x$ gives:
\[
  \Ogap(\theta^\star + x;\, e)
    = \norm{Q(P_e(\theta^\star + x)) - P_e(Q(\theta^\star + x))}
    = \norm{\Sigma_e\, x + r_e(x)},
\]
where $\norm{r_e(x)} = O(\norm{x}^2)$ uniformly in $e$ (by the uniform
$C^2$ regularity assumption and Assumption~\ref{ass:lipschitz}). Taking
expectation over $e$ and applying Jensen's inequality:
\[
  \mathbb{E}_e[\Ogap(\theta^\star + x; e)]
    \;\geq\; \norm{\mathbb{E}_e[\Sigma_e] \, x} - O(\norm{x}^2),
\]
where the $O(\norm{x}^2)$ constant depends on the second-derivative bounds
of $Q$ and the integrable dominating function for $\{P_e\}$.

\begin{proposition}[Local order-gap sensitivity via Jacobian rank]
\label{prop:local_nondeg}
Assume $\sigma = 0$, and that $Q, P_e \in C^2$ near $\theta^\star$ with
second derivatives uniformly bounded in that neighbourhood (or with an
integrable dominating function for the second derivatives over $e$). Define
$\bar\Sigma := \mathbb{E}_e[\Sigma_e]$ and let
$\mu_0 := \sigma_{\min}(\bar\Sigma)$ denote its smallest singular value.
If $\mu_0 > 0$, then there exists $r > 0$ such that for all $\theta$ with
$\norm{\theta - \theta^\star} \leq r$,
\[
  \mathbb{E}_e[\Ogap(\theta; e)] \;\geq\; \tfrac{\mu_0}{2} \, \norm{\theta - \theta^\star}.
\]
In particular, Assumption~\ref{ass:nondeg} holds locally with constant
$\mu = \mu_0 / 2$.
\end{proposition}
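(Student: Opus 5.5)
The plan is to make the heuristic Taylor argument preceding the statement quantitative. Throughout, work in a ball $B_{r_1}(\theta^\star)$ on which $Q$ and all $P_e$ are $C^2$ with the stated second-derivative bounds.

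\textbf{Step 1: the linearisation at $\theta^\star$.} Under $\sigma=0$ we have $P_e(\theta^\star)=\theta^\star$ almost surely, and $Q(\theta^\star)=\theta^\star$ by definition of the consolidation fixed point. Hence both compositions $Q\circ P_e$ and $P_e\circ Q$ fix $\theta^\star$. By the chain rule their derivatives at $\theta^\star$ are $AB_e$ and $B_eA$. It follows that
\[
  Q(P_e(\theta^\star+x)) - P_e(Q(\theta^\star+x)) = \Sigma_e x + r_e(x).
\]

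\textbf{Step 2: a uniform quadratic remainder bound.} The goal is an explicit $R<\infty$ with $\norm{r_e(x)}\le R\norm{x}^2$ for all $\norm{x}\le r_1'$ and all $e$ (or with $R$ replaced by an integrable $R_e$ with $\mathbb{E}_e R_e=:R$).

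Apply Taylor's theorem with integral remainder to each composition. For $Q\circ P_e$ the second derivative is
\[
  D^2Q(P_e(\cdot))[DP_e\cdot,DP_e\cdot] + DQ(P_e(\cdot))\,D^2P_e(\cdot).
\]
This is bounded using $\norm{DP_e}\le L$ (Assumption~\ref{ass:lipschitz}), $\norm{DQ}\le\rho$ (Assumption~\ref{ass:contract}), and the second-derivative bounds. The symmetric bound holds for $P_e\circ Q$.

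One subtlety: the inner point $P_e(\theta^\star+x)$ must stay inside the $C^2$ neighbourhood. This holds if $\norm{x}\le r_1/\max(L,1)$, because $\norm{P_e(\theta^\star+x)-\theta^\star}\le L\norm{x}$ using $P_e(\theta^\star)=\theta^\star$. The same reasoning covers $Q$, since $\norm{Q(\theta^\star+x)-\theta^\star}\le\rho\norm{x}$. I expect this step to be the main obstacle. It requires careful bookkeeping of the constant $R$ in the integrable-domination case, where $R$ is an expectation rather than a uniform bound and the remainder must be measurable in $e$. The constant itself has the form $R \le \tfrac12(\sup\norm{D^2Q}L^2+\rho\sup\norm{D^2P_e}+\sup\norm{D^2P_e}\rho^2+L\sup\norm{D^2Q})$.

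\textbf{Step 3: Jensen and the singular-value bound.} Since the norm is convex, Jensen's inequality gives
\[
  \mathbb{E}_e\norm{\Sigma_e x+r_e(x)} \ge \norm{\bar\Sigma x+\mathbb{E}_e r_e(x)} \ge \norm{\bar\Sigma x}-R\norm{x}^2.
\]
Interchanging expectation and the linear map $x\mapsto\Sigma_e x$ requires $\mathbb{E}_e\norm{B_e}<\infty$, which follows from $\norm{B_e}\le L$. Next use $\norm{\bar\Sigma x}\ge\mu_0\norm{x}$. In finite dimensions this is the definition of the smallest singular value. In a general Banach space, I would interpret $\mu_0$ as the lower bound $\inf_{\norm{x}=1}\norm{\bar\Sigma x}$, and note this as the intended meaning.

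\textbf{Step 4: choosing the radius.} The previous steps give
\[
  \mathbb{E}_e[\Ogap(\theta^\star+x;e)]\ge(\mu_0-R\norm{x})\norm{x}.
\]
Set $r:=\min\{r_1',\,\mu_0/(2R)\}$, which matches the radius $\mu_0/(2R)$ quoted in Remark~\ref{rem:radius_compat}. For $\norm{x}\le r$ we have $R\norm{x}\le\mu_0/2$, so the bound becomes $\tfrac{\mu_0}{2}\norm{x}$. This is Assumption~\ref{ass:nondeg} with $\mu=\mu_0/2$.
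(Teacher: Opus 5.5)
Your proposal is correct and follows the paper's proof: expand both compositions at the common fixed point, bound the quadratic remainder uniformly (or in expectation under the dominating function), apply Jensen to get $\norm{\bar\Sigma x} - R\norm{x}^2$, and take $r = \mu_0/(2R)$. Your extra bookkeeping (the explicit constant $R$, keeping $P_e(\theta^\star+x)$ and $Q(\theta^\star+x)$ inside the $C^2$ ball, and taking $r=\min\{r_1',\mu_0/(2R)\}$) fills in details the paper leaves implicit without changing the route.
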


\begin{proof}
By Jensen and Taylor,
$\mathbb{E}_e[\Ogap(\theta^\star + x; e)]
\geq \norm{\bar\Sigma x} - R \norm{x}^2$
for some $R > 0$ depending on the second-derivative bounds of $Q$ and the
integrable dominating function for $\{P_e\}$. Since $\norm{\bar\Sigma x} \geq \mu_0
\norm{x}$, the right-hand side is at least $\mu_0 \norm{x} - R \norm{x}^2$.
Choosing $r := \mu_0 / (2R)$ and restricting to $\norm{x} \leq r$ gives
the result. \qedhere
\end{proof}

In other words, order-gap sensitivity reduces to an invertibility condition on
the \emph{expected commutator Jacobian} $\bar\Sigma = \mathbb{E}_e[A B_e - B_e A]$.
Verifying this requires only the Jacobians of $Q$ and $P_e$ at a single point,
with no global argument needed. In finite-dimensional settings (or after
choosing a Hilbert coordinate chart on $\X$), the smallest singular value
$\mu_0 = \sigma_{\min}(\bar\Sigma)$ is well-defined; on a general Banach
space one replaces the singular-value condition with an equivalent lower bound
on $\norm{\bar\Sigma x}$.

\begin{remark}[Second-moment coverage and first-moment sensitivity]
\label{rem:second_moment}
An alternative sufficient condition uses the second-moment matrix of the local
commutator, $G := \mathbb{E}_e[\Sigma_e^\top \Sigma_e]$, which we call the
\emph{commutator Gramian}. If
$\lambda_{\min}(G) \geq \mu_1^2 > 0$, then locally
$\mathbb{E}_e[\Ogap(\theta; e)^2] \geq \mu_1^2 \norm{\theta -
\theta^\star}^2 + O(\norm{\theta - \theta^\star}^3)$. This Gramian condition
is often easier to verify than the first-moment condition $\mu_0 > 0$ on
$\bar\Sigma$, because first-moment effects can cancel across $e$ while
second-moment coverage cannot.

\textbf{From second moment to first moment.} If additionally there is a uniform local envelope
$\Ogap(\theta^\star + x; e) \leq C\norm{x}$ for all $e$ (which holds whenever
the Jacobians $A$ and $B_e$ are uniformly bounded), then shrinking the
neighbourhood if necessary, the remainder in the Gramian expansion is
dominated and
\[
  \mathbb{E}_e[\Ogap(\theta^\star + x; e)^2]
  \;\geq\; \frac{\mu_1^2}{2}\norm{x}^2.
\]
Together with $\Ogap \leq C\norm{x}$, dividing
$\tfrac{\mu_1^2}{2}\norm{x}^2 \leq \mathbb{E}[\Ogap^2] \leq
C\norm{x}\,\mathbb{E}[\Ogap]$ gives:
\[
  \mathbb{E}_e[\Ogap(\theta^\star + x; e)]
  \;\geq\; \frac{\mu_1^2}{2C}\,\norm{x}.
\]
Thus the order-gap lower bound holds locally with $\mu = \mu_1^2/(2C)$. This argument is used
in the RLM verification below.
\end{remark}

\subsection{Effective Equilibrium and Noisy Local Sensitivity}
\label{sec:effective_equilibrium}

Proposition~\ref{prop:local_nondeg} requires the noiseless hypothesis
$\sigma = 0$, in which case $P_e(\theta^\star) = \theta^\star$ a.s.\ and the
Taylor expansion at $\theta^\star$ has no zeroth-order term. In stochastic
domains (bandits, RL), this hypothesis fails: the raw expansion
operator does not fix $\theta^\star$, and the dynamics converge not to
$\theta^\star$ but to a distinct effective equilibrium that absorbs the
asymptotic bias. We make this precise so that the Jacobian calculations in
Section~\ref{sec:theory_domains} for stochastic domains are formally
grounded rather than informally read as ``expected-update.''

\begin{proposition}[Existence of effective equilibrium]
\label{prop:effective_eq}
Under Assumptions~\ref{ass:contract}--\ref{ass:lipschitz} with
$\gamma = \rho L < 1$, the expected one-step dynamics map
\[
  T(\theta) \;:=\; \mathbb{E}_e\bigl[Q(P_e(\theta))\bigr]
\]
is a contraction on $(\X, \norm{\cdot})$ with rate $\gamma$, and admits a
unique fixed point $\theta^\star_\infty \in \X$, which we call the
\emph{effective equilibrium}. In the noiseless regime ($W_e = 0$ a.s.),
$\theta^\star_\infty = \theta^\star$.
\end{proposition}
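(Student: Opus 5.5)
The plan is to show that $T$ is well defined, prove it is a $\gamma$-contraction, invoke the Banach fixed-point theorem, and then identify the fixed point in the noiseless case.

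\textbf{Step 1: $T$ is well defined.} I need $\mathbb{E}_e[Q(P_e(\theta))]$ to exist as a Bochner integral in $\X$ for every $\theta$. Measurability of $e \mapsto Q(P_e(\theta))$ is taken as part of the standing setup. For integrability, Assumption~\ref{ass:contract} gives
\[
\norm{Q(P_e(\theta)) - \theta^\star} = \norm{Q(P_e(\theta)) - Q(\theta^\star)} \leq \rho\norm{P_e(\theta) - \theta^\star}.
\]
The triangle inequality together with Assumption~\ref{ass:lipschitz} then gives
\[
\norm{P_e(\theta) - \theta^\star} \leq L\norm{\theta - \theta^\star} + W_e .
\]
Integrability of $Q(P_e(\theta))$ therefore follows from $\mathbb{E}[W_e] < \infty$. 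This needs Assumption~\ref{ass:equilibrium} as well, which I will note is implicitly in force. This step is the only delicate point; it is a technicality about the expectation being defined rather than a real obstacle.

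\textbf{Step 2: contraction.} For $\theta_1, \theta_2 \in \X$, linearity of the Bochner integral and the integral triangle inequality give
\[
\norm{T(\theta_1) - T(\theta_2)} \leq \mathbb{E}_e\norm{Q(P_e(\theta_1)) - Q(P_e(\theta_2))} .
\]
Applying $\rho$-contractivity of $Q$ and then $L$-Lipschitzness of $P_e$ pointwise in $e$ bounds the integrand by $\rho L\norm{\theta_1 - \theta_2}$. Hence $\norm{T(\theta_1) - T(\theta_2)} \leq \gamma\norm{\theta_1 - \theta_2}$. Since $\gamma < 1$ and $\X$ is complete, the Banach fixed-point theorem yields a unique fixed point $\theta^\star_\infty$.

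\textbf{Step 3: the noiseless case.} When $W_e = 0$ a.s., we have $P_e(\theta^\star) = \theta^\star$ a.s. Then $Q(P_e(\theta^\star)) = Q(\theta^\star) = \theta^\star$ a.s., so $T(\theta^\star) = \theta^\star$. By uniqueness of the fixed point, $\theta^\star_\infty = \theta^\star$.
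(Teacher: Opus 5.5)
Your proof is correct and follows the paper's argument exactly. The integral triangle (Jensen) inequality with pointwise $\rho$-contractivity of $Q$ and $L$-Lipschitzness of $P_e$ gives the $\gamma$-contraction, Banach's theorem gives $\theta^\star_\infty$, and $T(\theta^\star)=Q(\theta^\star)=\theta^\star$ under $W_e=0$ a.s.\ identifies it by uniqueness. Your Step~1 is a worthwhile addition that the paper omits: the statement invokes only Assumptions~\ref{ass:contract}--\ref{ass:lipschitz}, but $T$ is well defined only once some integrability, such as $\mathbb{E}[W_e]<\infty$ from Assumption~\ref{ass:equilibrium}, is in force.
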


\begin{proof}
For any $\theta_1, \theta_2 \in \X$:
\[
  \norm{T(\theta_1) - T(\theta_2)}
  \leq \mathbb{E}_e\bigl[\norm{Q(P_e(\theta_1)) - Q(P_e(\theta_2))}\bigr]
  \leq \rho L \norm{\theta_1 - \theta_2}
  = \gamma \norm{\theta_1 - \theta_2},
\]
using Jensen's inequality and Assumptions~\ref{ass:contract}--\ref{ass:lipschitz}.
Since $\gamma < 1$ and $(\X, \norm{\cdot})$ is complete, the Banach
fixed-point theorem yields a unique $\theta^\star_\infty$ with
$T(\theta^\star_\infty) = \theta^\star_\infty$. Under $W_e = 0$ a.s.,
$P_e(\theta^\star) = \theta^\star$ a.s., so
$T(\theta^\star) = \mathbb{E}_e[Q(\theta^\star)] = Q(\theta^\star) = \theta^\star$,
and $\theta^\star_\infty = \theta^\star$ by uniqueness. \qedhere
\end{proof}

The effective equilibrium $\theta^\star_\infty$ is the actual asymptotic
attractor of the expected dynamics. The displacement
$\theta^\star_\infty - \theta^\star$ is the systematic bias from
$\mathbb{E}_e[P_e(\theta^\star) - \theta^\star] \neq 0$, of order
$\rho\sigma/(1-\gamma)$ (the trajectory drift term identified in
Section~\ref{sec:theory_noisefloor}). At $\theta^\star_\infty$, the
order-gap retains a residual noise level
$\sigma^\infty := \mathbb{E}_e[\norm{P_e(\theta^\star_\infty) -
\theta^\star_\infty}]$, generally smaller than $\sigma$ but not necessarily
zero.

\begin{proposition}[Noisy local order-gap sensitivity at the effective equilibrium]
\label{prop:noisy_nondeg}
Assume $Q$ and $\{P_e\}$ are $C^2$ in a neighbourhood of
$\theta^\star_\infty$ with second derivatives uniformly bounded (or with an
integrable dominating function over $e$), and that
$\mathbb{E}_e[\norm{P_e(\theta^\star_\infty) - \theta^\star_\infty}^2] \leq
(\sigma^\infty)^2 < \infty$. Define
\[
  A_\infty := DQ(\theta^\star_\infty), \qquad
  B_e^\infty := DP_e(\theta^\star_\infty), \qquad
  \Sigma_e^\infty := A_\infty B_e^\infty - B_e^\infty A_\infty,
\]
$\bar\Sigma_\infty := \mathbb{E}_e[\Sigma_e^\infty]$, and
$\mu_0^\infty := \sigma_{\min}(\bar\Sigma_\infty)$. If $\mu_0^\infty > 0$,
there exist $r_\infty > 0$ and $C_\infty > 0$ such that for all $\theta$ with
$\norm{\theta - \theta^\star_\infty} \leq r_\infty$:
\[
  \mathbb{E}_e[\Ogap(\theta; e)]
  \;\geq\; \tfrac{\mu_0^\infty}{2} \norm{\theta - \theta^\star_\infty}
  \;-\; C_\infty\, \sigma^\infty.
\]
Equivalently, Assumption~\ref{ass:nondeg} (with $\theta^\star$ replaced by
$\theta^\star_\infty$) holds in the form
$\mathbb{E}_e[\Ogap(\theta;e)] \geq (\mu_0^\infty/4)\norm{\theta -
\theta^\star_\infty}$ on the annulus
$\{4 C_\infty \sigma^\infty/\mu_0^\infty \leq \norm{\theta -
\theta^\star_\infty} \leq r_\infty\}$.
\end{proposition}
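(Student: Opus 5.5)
The plan is to repeat the Taylor argument of Proposition~\ref{prop:local_nondeg}, now centred at $\theta^\star_\infty$. Because $P_e(\theta^\star_\infty)\neq\theta^\star_\infty$ in general, the expansion acquires a zeroth-order term; we show it is controlled by $\sigma^\infty$. Write $\theta=\theta^\star_\infty+x$ and $\delta_e:=P_e(\theta^\star_\infty)-\theta^\star_\infty$. We Taylor-expand both compositions about $\theta^\star_\infty$.

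\textbf{Step 1 (the branch $Q\circ P_e$).} Write $P_e(\theta)=\theta^\star_\infty+\delta_e+B_e^\infty x+O(\norm{x}^2)$. Then expand $Q$ about $\theta^\star_\infty$ with increment $\delta_e+B_e^\infty x+O(\norm{x}^2)$. This gives
\[
Q(P_e(\theta))=Q(\theta^\star_\infty)+A_\infty\delta_e+A_\infty B_e^\infty x+O\bigl(\norm{x}^2+\norm{\delta_e}^2\bigr).
\]
This expansion is only valid while $\delta_e$ lies in the $C^2$ neighbourhood. On the complementary event we use instead the crude Lipschitz bound $\norm{Q\circ P_e-P_e\circ Q}\le(\rho L+\dots)(\norm{x}+\norm{\delta_e})$, which holds by Assumptions~\ref{ass:contract}--\ref{ass:lipschitz}. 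Its contribution is charged to $\norm{\delta_e}$ via Markov/Chebyshev with the second-moment bound.

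\textbf{Step 2 (the branch $P_e\circ Q$).} Write $Q(\theta)=Q(\theta^\star_\infty)+A_\infty x+O(\norm{x}^2)$, with $\eta:=Q(\theta^\star_\infty)-\theta^\star_\infty$. Expanding $P_e$ at $\theta^\star_\infty$ gives
\[
P_e(Q(\theta))=\theta^\star_\infty+\delta_e+B_e^\infty(\eta+A_\infty x)+O\bigl(\norm{x}^2+\norm{\eta}^2\bigr).
\]

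\textbf{Step 3 (subtract).} Subtracting the two expansions gives
\[
\Sigma_e^\infty x+c_e+r_e(x),\qquad c_e:=Q(\theta^\star_\infty)-\theta^\star_\infty+A_\infty\delta_e-\delta_e-B_e^\infty\eta,
\]
with $\norm{r_e(x)}\le R\norm{x}^2+R'\norm{\delta_e}^2$.

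\textbf{Step 4 (absorb the offset).} The key point is bounding $\mathbb{E}\norm{c_e}$ by a multiple of $\sigma^\infty$. For this we use the fixed-point equation $\theta^\star_\infty=\mathbb{E}_e[Q(P_e(\theta^\star_\infty))]$. Taylor-expanding the right side gives $\eta=-A_\infty\mathbb{E}[\delta_e]+O(\mathbb{E}\norm{\delta_e}^2)$, so $\norm{\eta}\lesssim\norm{A_\infty}\sigma^\infty+O((\sigma^\infty)^2)$. Combined with the uniform bounds $\norm{A_\infty}\le\rho$ and $\norm{B_e^\infty}\le L$ (Lipschitz constants bound derivatives), this yields $\mathbb{E}\norm{c_e}\le c_1\sigma^\infty$. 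Jensen's inequality gives $\mathbb{E}\norm{\delta_e}\le\sigma^\infty$, and the quadratic terms in $\delta_e$ contribute $O((\sigma^\infty)^2)$.

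\textbf{Step 5 (conclude the main bound).} By the triangle inequality and Jensen,
\[
\mathbb{E}_e[\Ogap]\ge\norm{\bar\Sigma_\infty x}-\mathbb{E}\norm{c_e}-R\norm{x}^2-R'(\sigma^\infty)^2\ge\mu_0^\infty\norm{x}-R\norm{x}^2-C_\infty\sigma^\infty.
\]
Here $C_\infty$ absorbs $c_1$ and $R'\sigma^\infty$, which is legitimate because $\sigma^\infty$ is finite; we may alternatively restrict to $\sigma^\infty\le1$. Taking $r_\infty=\mu_0^\infty/(2R)$ yields the stated inequality.

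\textbf{Step 6 (annulus form).} On the annulus, $C_\infty\sigma^\infty\le(\mu_0^\infty/4)\norm{x}$. Subtracting gives the $\mu_0^\infty/4$ bound, and the annulus is nonempty when $4C_\infty\sigma^\infty/\mu_0^\infty\le r_\infty$.

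The main obstacle is Step 4. A naive bound gives an offset of order $\sigma^\infty$ only if $\delta_e$ stays inside the neighbourhood where the $C^2$ bounds hold. The large-$\delta_e$ event must therefore be handled separately, using global Lipschitz control and the finite second moment, so that the final constant is of the form $C_\infty\sigma^\infty$ rather than something worse.
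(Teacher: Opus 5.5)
Your proposal is correct and follows essentially the paper's route: linearise both compositions about $\theta^\star_\infty$, pass to $\norm{\bar\Sigma_\infty x}$ by Jensen, absorb the zeroth-order offset and cross terms into $C_\infty\sigma^\infty$, take $r_\infty=\mu_0^\infty/(2R)$, and read off the annulus. Your explicit treatment of the offset is more careful than the paper's proof, which only asserts that these terms are $O(\sigma^\infty)$; this includes truncating large $\delta_e$ via global Lipschitz bounds. You can also shorten Step 4: $\norm{Q(\theta^\star_\infty)-\theta^\star_\infty}\le\rho\,\mathbb{E}_e\norm{\delta_e}\le\rho\sigma^\infty$ follows directly from the fixed-point equation and the global contractivity of $Q$, with no Taylor expansion needed there.
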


\begin{proof}
Write $\theta = \theta^\star_\infty + x$ and Taylor-expand each operator.
Setting $p_e := P_e(\theta^\star_\infty)$ and $q_\infty := Q(\theta^\star_\infty)$:
\begin{align*}
Q(P_e(\theta)) &= Q(p_e) + DQ(p_e)\,B_e^\infty\, x + O(\norm{x}^2),\\
P_e(Q(\theta)) &= P_e(q_\infty) + DP_e(q_\infty)\,A_\infty\, x + O(\norm{x}^2).
\end{align*}
Decomposing $DQ(p_e) = A_\infty + (DQ(p_e) - A_\infty)$ and similarly for
$DP_e(q_\infty)$, and using uniform $C^2$ bounds, the cross-terms involving
$p_e - \theta^\star_\infty$ and $q_\infty - \theta^\star_\infty$ are
$O(\sigma^\infty)$ in expectation. Taking expectations and applying Jensen:
\[
  \mathbb{E}_e[\Ogap(\theta^\star_\infty + x; e)]
  \;\geq\; \norm{\bar\Sigma_\infty\, x} - C_\infty \sigma^\infty - R\norm{x}^2,
\]
for constants $C_\infty, R > 0$ depending on the smoothness bounds. Since
$\norm{\bar\Sigma_\infty x} \geq \mu_0^\infty \norm{x}$, choosing
$r_\infty := \mu_0^\infty/(2R)$ and restricting to $\norm{x} \leq r_\infty$
gives the bound. The annular reformulation follows by requiring the linear
term to dominate the noise term: $(\mu_0^\infty/2)\norm{x} \geq 2C_\infty
\sigma^\infty$. \qedhere
\end{proof}

In the noiseless regime, $\theta^\star_\infty = \theta^\star$,
$\sigma^\infty = 0$, the noise term vanishes, and
Proposition~\ref{prop:noisy_nondeg} reduces to
Proposition~\ref{prop:local_nondeg}. In the noisy regime, the natural
reference point is $\theta^\star_\infty$ rather than $\theta^\star$, and the
local order-gap lower bound is informative on the annulus where the linear
signal in $x$ exceeds the noise floor of order $\sigma^\infty$. This is the
formal content of the ``expected-update interpretation'' invoked in the
domain verifications of Section~\ref{sec:theory_domains}: the Jacobian rank
condition is checked at $\theta^\star_\infty$, and the sensitivity bound it
yields is valid on the annulus described above.

The stopping theorems of Section~\ref{sec:theory_stopping} are stated below
with respect to the consolidation fixed point $\theta^\star$. In the noisy
regime, the appropriate reference point is the effective equilibrium
$\theta^\star_\infty$ rather than $\theta^\star$;
Corollary~\ref{cor:stop_eff_eq} formalises this, restating the stopping
guarantees of Theorems~\ref{thm:stopping_det}--\ref{thm:stopping_hp} at
$\theta^\star_\infty$ with the appropriate substitutions. We retain the
$\theta^\star$-formulation in the main statements because it is the
structurally cleaner case and reduces to $\theta^\star_\infty$ in the
noiseless limit.

\subsection{Contraction of the Order-Gap}
\label{sec:theory_contraction}

\begin{theorem}[Contraction of the order-gap]
\label{thm:contraction}
Assume \ref{ass:contract}--\ref{ass:equilibrium} and set $\gamma := \rho L$.
Let $u_t := \mathbb{E}\bigl[\norm{\theta_t - \theta^\star}\bigr]$. Then:
\begin{enumerate}
\item[(i)] \textnormal{(Trajectory contraction, expectation.)} For $\gamma < 1$,
\begin{equation}
\label{eq:traj_bound}
u_t \;\leq\; \gamma^t u_0 + \rho\sigma \cdot \frac{1 - \gamma^t}{1 - \gamma}.
\end{equation}
\item[(ii)] \textnormal{(Order-gap bound, expectation.)}
\begin{equation}
\label{eq:ogap_bound}
\mathbb{E}\bigl[\Ogap(\theta_t; e_t)\bigr]
\;\leq\; 2\gamma\, u_t \;+\; (1 + \rho)\sigma.
\end{equation}
\item[(iii)] \textnormal{(Almost-sure geometric decay under $\sigma = 0$.)}
If $\sigma = 0$ and $\gamma < 1$, then a.s.,
\begin{equation}
\label{eq:geom_decay_as}
\norm{\theta_t - \theta^\star} \leq \gamma^t \norm{\theta_0 - \theta^\star},
\quad
\Ogap(\theta_t; e_t) \leq 2\gamma^{t+1} \norm{\theta_0 - \theta^\star}.
\end{equation}
\end{enumerate}
\end{theorem}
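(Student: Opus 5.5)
The plan is to anchor every comparison at the consolidation fixed point $\theta^\star$. Assumption~\ref{ass:contract} controls $Q$ around $\theta^\star$, Assumption~\ref{ass:lipschitz} controls $P_e$, and Assumption~\ref{ass:equilibrium} controls the one unavoidable mismatch $W_e=\norm{P_e(\theta^\star)-\theta^\star}$. All three parts then follow from a single pathwise one-step inequality. After that, we either take expectations, using that $e_t$ is independent of $\theta_t$, or specialise to $W_e=0$.

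\textbf{Step 1 (pathwise one-step bound).} Fix $\theta$ and $e$, and write $x:=\norm{\theta-\theta^\star}$. Since $Q(\theta^\star)=\theta^\star$, contraction and the triangle inequality through $P_e(\theta^\star)$ give
\[
\norm{Q(P_e(\theta))-\theta^\star}\le \rho\norm{P_e(\theta)-\theta^\star}\le \rho\bigl(Lx+W_e\bigr)=\gamma x+\rho W_e .
\]
Applying this with $\theta=\theta_t$ and $e=e_t$ yields $\norm{\theta_{t+1}-\theta^\star}\le\gamma\norm{\theta_t-\theta^\star}+\rho W_{e_t}$ surely.

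\textbf{Step 2 (part (i)).} Take expectations in Step 1. The state $\theta_t$ is $\mathcal{F}_t$-measurable and $e_t$ is independent of $\mathcal{F}_t$ in the fixed-sampling setting, so $\mathbb{E}[W_{e_t}]=\mathbb{E}[W_e]\le\sigma$. (Only this marginal fact is needed here, since $W_{e_t}$ does not involve $\theta_t$.) This gives $u_{t+1}\le\gamma u_t+\rho\sigma$. Unrolling the recursion and summing the geometric series $\sum_{s<t}\gamma^s=(1-\gamma)^{-1}(1-\gamma^t)$ gives \eqref{eq:traj_bound}. We assume $u_0<\infty$; otherwise the bound is vacuous.

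\textbf{Step 3 (part (ii)).} Split the order-gap through $\theta^\star$:
\[
\Ogap(\theta;e)\le\norm{Q(P_e(\theta))-\theta^\star}+\norm{P_e(Q(\theta))-\theta^\star}.
\]
Step 1 bounds the first term by $\gamma x+\rho W_e$. For the second term, insert $P_e(\theta^\star)$:
\[
\norm{P_e(Q(\theta))-P_e(\theta^\star)}+W_e\le L\norm{Q(\theta)-Q(\theta^\star)}+W_e\le \gamma x+W_e .
\]
Hence $\Ogap(\theta;e)\le 2\gamma x+(1+\rho)W_e$ pathwise. Evaluating at $(\theta_t,e_t)$ and taking expectations as in Step 2 gives \eqref{eq:ogap_bound}.

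\textbf{Step 4 (part (iii)).} If $\sigma=0$, then $W_e\ge0$ and $\mathbb{E}[W_e]=0$ force $W_e=0$ a.s. Intersecting the countably many full-measure events $\{W_{e_t}=0\}$, $t\ge0$, we obtain an event of probability one. On it, Step 1 gives $\norm{\theta_{t+1}-\theta^\star}\le\gamma\norm{\theta_t-\theta^\star}$, and induction yields the first bound in \eqref{eq:geom_decay_as}. Step 3 then gives
\[
\Ogap(\theta_t;e_t)\le2\gamma\norm{\theta_t-\theta^\star}\le2\gamma^{t+1}\norm{\theta_0-\theta^\star}.
\]

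\textbf{Main obstacle.} The estimates themselves are routine. The only point needing care is the measure-theoretic bookkeeping: justifying $\mathbb{E}[W_{e_t}]\le\sigma$ via the independence of $e_t$ from $\mathcal{F}_t$, and passing from ``a.s.\ for each $t$'' to ``a.s.\ for all $t$'' by the countable intersection in Step 4.
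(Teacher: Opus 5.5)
Your proof is correct and follows the paper's argument. The same pathwise one-step inequality $\norm{\theta_{t+1}-\theta^\star}\le\gamma\norm{\theta_t-\theta^\star}+\rho W_t$ drives (i) and (iii), and your triangle split through $\theta^\star$ (then $P_e(\theta^\star)$) is just a regrouping of the paper's insertion of $Q(P_e(\theta^\star))$ and $P_e(\theta^\star)$, giving the identical bound $2\gamma\norm{\theta-\theta^\star}+(1+\rho)W_e$ — with the marginal-law and countable-intersection bookkeeping made explicit where the paper leaves it implicit.
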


\begin{proof}
Write $W_t := \norm{P_{e_t}(\theta^\star) - \theta^\star}$. Using
$\theta^\star = Q(\theta^\star)$ and
Assumptions~\ref{ass:contract}--\ref{ass:lipschitz},
\begin{equation}
\label{eq:pathwise_contract}
\norm{\theta_{t+1} - \theta^\star}
\leq \rho L \norm{\theta_t - \theta^\star} + \rho W_t.
\end{equation}
Taking expectations gives (i). For (ii), a three-term triangle decomposition
(adding and subtracting $Q(P_e(\theta^\star))$ and $P_e(\theta^\star)$) gives:
\[
  \Ogap(\theta; e) \leq 2\rho L \norm{\theta - \theta^\star} + (1+\rho) W_e.
\]
Taking expectation yields \eqref{eq:ogap_bound}. Under $\sigma = 0$,
$W_t = 0$ a.s., so \eqref{eq:pathwise_contract} is a pathwise contraction
and \eqref{eq:geom_decay_as} follows by iteration and substitution. \qedhere
\end{proof}

\begin{remark}[Joint stability]
\label{rem:joint_stability}
The condition $\gamma = \rho L < 1$ couples the two operators: the
contraction of consolidation must outpace any expansion introduced by $P_e$.
This is the natural regime of interest; $\rho < 1$ alone merely says the
learner commits to its current best estimate, and $L$ can exceed~$1$ in
domains where $P_e$ amplifies evidence.
\end{remark}

\begin{remark}[Polynomial rates under weaker contraction]
\label{rem:polynomial_rates}
Assumption~\ref{ass:contract} can be weakened. If $Q$ is only
\emph{contractive in expectation} (e.g., $Q$ averages the current state with a running
sample mean, as in standard bandit updates), then $u_t$ decays at rate
$O(1/\sqrt{t})$ rather than geometrically, and $\mathbb{E}[\Ogap(\theta_t;
e_t)]$ inherits this rate. The $O(1/n)$ bounds familiar from stochastic
approximation fall out of this weaker regime. Domain applications that use
a non-contractive $Q$ (such as the bandit sharpening map of
Section~\ref{sec:theory_domains}) operate in this weaker regime.
\end{remark}

\begin{lemma}[Trajectory containment]
\label{lem:containment}
Under Assumptions~\ref{ass:contract}--\ref{ass:equilibrium} with
$W_e \leq M$ a.s.\ and $\gamma = \rho L < 1$, let
$R_0 := \norm{\theta_0 - \theta^\star}$. For every $r > \rho M/(1-\gamma)$,
define
\[
  T_0(r) :=
  \begin{cases}
    0, & R_0 \leq r - \rho M/(1-\gamma),\\[4pt]
    \left\lceil
      \dfrac{\log\bigl(R_0\,/\,(r - \rho M/(1-\gamma))\bigr)}{\log(1/\gamma)}
    \right\rceil, & \text{otherwise.}
  \end{cases}
\]
Then $\norm{\theta_t - \theta^\star} \leq r$ almost surely for all
$t \geq T_0(r)$.
\end{lemma}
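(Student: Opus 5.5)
The plan is to work pathwise with the one-step inequality \eqref{eq:pathwise_contract} from the proof of Theorem~\ref{thm:contraction}:
\[
\norm{\theta_{t+1} - \theta^\star} \leq \gamma \norm{\theta_t - \theta^\star} + \rho W_t .
\]
The almost-sure bound $W_t \leq M$ from Assumption~\ref{ass:equilibrium} turns this into a deterministic linear recursion, $D_{t+1} \leq \gamma D_t + \rho M$, where $D_t := \norm{\theta_t - \theta^\star}$. The recursion holds on a probability-one event, obtained by intersecting the countably many events $\{W_t \leq M\}$.

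\textbf{Unrolling.} Iterating the recursion gives
\[
D_t \leq \gamma^t R_0 + \rho M \sum_{s=0}^{t-1}\gamma^s \leq \gamma^t R_0 + \frac{\rho M}{1-\gamma}
\]
almost surely, for every $t \geq 0$. This is a routine induction on $t$, using $\gamma < 1$ to bound the geometric sum by $1/(1-\gamma)$.

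\textbf{Reduction to a threshold on $t$.} Set $\delta := r - \rho M/(1-\gamma) > 0$, which is positive by the hypothesis on $r$. By the unrolled bound, it suffices to show $\gamma^t R_0 \leq \delta$ for all $t \geq T_0(r)$, since $\gamma^t R_0$ is nonincreasing in $t$. I would split into the two cases of the definition of $T_0(r)$.
\begin{itemize}
\item If $R_0 \leq \delta$, then $\gamma^t R_0 \leq R_0 \leq \delta$ for all $t \geq 0$, so $T_0(r) = 0$ works.
\item Otherwise $R_0 > \delta > 0$, and taking logarithms shows that $\gamma^t R_0 \leq \delta$ is equivalent to $t \geq \log(R_0/\delta)/\log(1/\gamma)$, for $0 < \gamma < 1$. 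The ceiling in $T_0(r)$ guarantees this for all $t \geq T_0(r)$.
\end{itemize}

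\textbf{Degenerate $\gamma$.} The only delicate point is $\gamma = 0$ (either $\rho = 0$ or $L = 0$), where $\log(1/\gamma)$ is undefined. I would treat this directly. The recursion gives $D_t \leq \rho M \leq r$ for all $t \geq 1$, so the conclusion holds with $T_0 = 1$. Adopting the convention $\log(1/\gamma) = \infty$, the formula gives $T_0 = 0$ in the second case, where $R_0 > \delta$. There $t = 0$ is not covered, so I would note that $T_0(r)$ should be read as at least $1$ in that case, or simply exclude $\gamma = 0$.

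Beyond this edge case I expect no real obstacle. The main care is to state explicitly that the bound is almost sure, not merely in expectation: it uses the a.s. envelope $M$ rather than the mean envelope $\sigma$, and this is exactly why the radius threshold is $\rho M/(1-\gamma)$ rather than $\rho\sigma/(1-\gamma)$.
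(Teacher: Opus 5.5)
Your proposal is correct and follows the paper's proof essentially verbatim: unroll the pathwise inequality \eqref{eq:pathwise_contract} using the a.s.\ envelope $W_s \leq M$ to get $\norm{\theta_t - \theta^\star} \leq \gamma^t R_0 + \rho M/(1-\gamma)$, then solve $\gamma^t R_0 \leq r - \rho M/(1-\gamma)$ for $t$. Your two additions are valid refinements the paper leaves implicit: the explicit probability-one event, and the observation that the formula for $T_0(r)$ is ill-defined at $\gamma = 0$, where $T_0 = 1$ suffices in the case $R_0 > r - \rho M$.
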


\begin{proof}
Iterating \eqref{eq:pathwise_contract} with $W_s \leq M$ a.s.:
\[
  \norm{\theta_t - \theta^\star}
  \;\leq\; \gamma^t R_0 + \rho \sum_{s=0}^{t-1} \gamma^{t-1-s} W_s
  \;\leq\; \gamma^t R_0 + \rho M \cdot \frac{1 - \gamma^t}{1 - \gamma}
  \;\leq\; \gamma^t R_0 + \frac{\rho M}{1 - \gamma}
\]
almost surely. The right-hand side is at most $r$ when
$\gamma^t R_0 \leq r - \rho M/(1-\gamma)$, which holds for all
$t \geq T_0(r)$. \qedhere
\end{proof}

The containment radius $\rho M/(1-\gamma)$ is the noise-floor quantity
identified in the trajectory contraction bound \eqref{eq:traj_bound}; the
lemma upgrades it from a bound on the expected distance to a deterministic
pathwise bound under bounded noise. The condition $r > \rho M/(1-\gamma)$ is
the precise requirement that Assumption~\ref{ass:nondeg}'s validity radius
exceed the asymptotic noise ball radius. Combined with the noiseless
geometric containment $\norm{\theta_t - \theta^\star} \leq \gamma^t R_0$ from
Theorem~\ref{thm:contraction}(iii), the lemma certifies that the local
order-gap sensitivity assumption applies along the trajectory after a finite
deterministic transient $T_0(r)$ in both regimes.

\subsection{Operational Lower Bound}
\label{sec:theory_lowerbound}

\begin{proposition}[Order-gap lower bounds suboptimality]
\label{prop:lowerbound}
Under \ref{ass:contract}--\ref{ass:equilibrium}, assume additionally that
$\gamma = \rho L > 0$ (the case $\gamma = 0$ is excluded from the displayed
inverse bound; when $\gamma = 0$ the order-gap upper bound reduces to the
noise term $(1+\rho)\sigma$ alone). Then for every $\theta \in \X$:
\begin{equation}
\label{eq:dist_bound}
\norm{\theta - \theta^\star}
\geq \frac{1}{2\rho L}
      \bigl(\mathbb{E}_e[\Ogap(\theta; e)] - (1+\rho)\sigma\bigr)_+.
\end{equation}
If $\ell \circ \pi$ satisfies a \emph{quadratic growth} condition at
$\theta^\star$ with constant $m_{\mathrm{QG}} > 0$, meaning
$\ell(\pi(\theta)) - \ell(\pi(\theta^\star)) \geq
\tfrac{m_{\mathrm{QG}}}{2} \norm{\theta - \theta^\star}^2$, then
\begin{equation}
\label{eq:qg_bound}
\ell(\pi(\theta)) - \ell(\pi(\theta^\star))
\geq \frac{m_{\mathrm{QG}}}{8\rho^2 L^2}
      \bigl(\mathbb{E}_e[\Ogap(\theta; e)] - (1+\rho)\sigma\bigr)_+^2.
\end{equation}
\end{proposition}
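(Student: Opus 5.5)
The plan is to invert the pointwise order-gap upper bound already derived in the proof of Theorem~\ref{thm:contraction}(ii), then compose with quadratic growth. Concretely, I would first re-derive, for fixed $\theta$ and $e$, the three-term triangle decomposition
\[
  \Ogap(\theta;e) \le \norm{Q(P_e(\theta)) - Q(P_e(\theta^\star))} + \norm{Q(P_e(\theta^\star)) - P_e(\theta^\star)} + \norm{P_e(\theta^\star) - P_e(Q(\theta))}.
\]
The first term is bounded by $\rho L\norm{\theta-\theta^\star}$ using Assumptions~\ref{ass:contract}--\ref{ass:lipschitz}. For the middle term, $Q(\theta^\star)=\theta^\star$ gives $\norm{Q(P_e(\theta^\star)) - Q(\theta^\star)} + \norm{\theta^\star - P_e(\theta^\star)} \le (1+\rho)W_e$. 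For the third term, $Q(\theta^\star)=\theta^\star$ gives $\norm{P_e(Q(\theta^\star)) - P_e(Q(\theta))} \le L\rho\norm{\theta-\theta^\star}$. Summing yields the pathwise inequality $\Ogap(\theta;e)\le 2\rho L\norm{\theta-\theta^\star} + (1+\rho)W_e$, as stated in the proof of Theorem~\ref{thm:contraction}.

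Next, I would take expectation over $e\sim P$ with $\theta$ deterministic (the statement is pointwise in $\theta$, so no filtration issues arise). Assumption~\ref{ass:equilibrium} then gives $\mathbb{E}_e[\Ogap(\theta;e)] \le 2\rho L\norm{\theta-\theta^\star} + (1+\rho)\sigma$. Rearranging and dividing by $2\rho L>0$, which is legitimate by the hypothesis $\gamma>0$, gives the lower bound without the positive part. Since the left side $\norm{\theta-\theta^\star}$ is nonnegative, taking $(\cdot)_+$ on the right preserves the inequality, which yields \eqref{eq:dist_bound}.

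For \eqref{eq:qg_bound}, I would square \eqref{eq:dist_bound}. Both sides are nonnegative, so squaring is monotone. This gives $\norm{\theta-\theta^\star}^2 \ge \frac{1}{4\rho^2L^2}(\cdot)_+^2$, and substituting into the quadratic growth hypothesis multiplies by $m_{\mathrm{QG}}/2$, producing the constant $m_{\mathrm{QG}}/(8\rho^2L^2)$.

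There is no real obstacle beyond bookkeeping in the triangle decomposition. The only points needing care are that the middle term genuinely costs $(1+\rho)W_e$ rather than $W_e$, because $Q(P_e(\theta^\star))$ differs from $P_e(\theta^\star)$ when $\sigma>0$, and that the $\gamma>0$ hypothesis is invoked exactly at the division step.
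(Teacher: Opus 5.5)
Your proposal is correct and follows the paper's route. The paper's proof rearranges the order-gap bound of Theorem~\ref{thm:contraction}(ii), then squares and applies quadratic growth. You correctly re-derive the pointwise form $\Ogap(\theta;e)\le 2\rho L\norm{\theta-\theta^\star}+(1+\rho)W_e$ and take $\mathbb{E}_e$ at fixed $\theta$, which is exactly the form the pointwise statement needs; the trajectory-expectation version \eqref{eq:ogap_bound} is not directly in that form.
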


\begin{proof}
Rearrange \eqref{eq:ogap_bound} to obtain \eqref{eq:dist_bound}. Squaring
and applying the quadratic growth lower bound gives \eqref{eq:qg_bound}. \qedhere
\end{proof}

\subsection{The Noise Floor}
\label{sec:theory_noisefloor}

When $\sigma > 0$, Theorem~\ref{thm:contraction} gives an \emph{upper bound}
on the asymptotic expected order-gap, which we call the \emph{noise-floor bound}:
\[
  \limsup_{t \to \infty}
  \mathbb{E}[\Ogap(\theta_t; e_t)]
  \leq \underbrace{(1 + \rho)\sigma}_{\varepsilon_\star^{\text{eq}}}
  + \underbrace{\frac{2\gamma\rho\sigma}{1 - \gamma}}_{\varepsilon_\star^{\text{traj}}}
  =: \varepsilon_\star.
\]
The actual asymptotic expected order-gap may be smaller; in special cases it
could be zero despite $\sigma > 0$. The decomposition nonetheless reveals
which part of the noise floor is reducible.

\textbf{Fixed-point residual:}
$\varepsilon_\star^{\text{eq}} = (1+\rho)\sigma$ is the contribution from
$\theta^\star$ itself. It is not controlled by trajectory contraction alone:
unless $\{P_e\}$ fixes $\theta^\star$, the equilibrium residual contributes
to the bound regardless of how well the trajectory contracts.
\textbf{Trajectory drift term:} $\varepsilon_\star^{\text{traj}} =
2\gamma\rho\sigma/(1-\gamma)$ is reducible: variance-reduced expansion
updates that shrink the effective $\sigma$ bring it down.

For the high-probability theorem we use the bounded-noise version of this floor.
Define the \emph{bounded-noise version of the floor}:
\begin{equation}
\label{eq:hp_floor}
\varepsilon_\star^{(M)} := (1+\rho)M + \frac{2\gamma\rho M}{1-\gamma}.
\end{equation}
This replaces $\sigma$ by $M$ (the a.s.\ bound on $W_e$) and serves as the
noise-floor quantity for the empirical windowed stopping theorem.

\begin{remark}[Practical implication]
\label{rem:practical_floor}
The sufficient stopping theorem requires the threshold $\varepsilon$ to exceed
the relevant noise-floor bound: $\varepsilon_\star$ in the noiseless expected-gap analysis,
or $\varepsilon_\star^{(M)}$ in the high-pro\-ba\-bil\-i\-ty theorem.
When the trajectory drift term dominates, variance reduction
pays dividends. When the fixed-point residual dominates, only modifying $P_e$
itself can help.
\end{remark}

\subsection{Stopping Guarantees}
\label{sec:theory_stopping}

We define two stopping rules, one for each regime.

\textbf{Expected-gap stopping rule} (noiseless/theoretical):
\begin{equation}
\label{eq:expected_window}
\bar\Ogap_{t,w} := \frac{1}{w} \sum_{s = t-w}^{t-1}
  \mathbb{E}_e\bigl[\Ogap(\theta_s; e)\bigr],
\qquad
\bar\tau_\varepsilon := \inf\bigl\{t \geq w :
  \bar\Ogap_{t,w} \leq \varepsilon \bigr\}.
\end{equation}

\textbf{Empirical windowed stopping rule} (operational):
\begin{equation}
\label{eq:realized_window}
\widehat{\Ogap}_{t,w} := \frac{1}{w} \sum_{s = t-w}^{t-1} \Ogap(\theta_s; e_s),
\qquad
\hat\tau_\varepsilon := \inf\bigl\{t \geq w :
  \widehat{\Ogap}_{t,w} \leq \varepsilon \bigr\}.
\end{equation}

Both windows run from $s = t-w$ to $s = t-1$. Since $\mathcal{F}_t =
\sigma(e_0,\ldots,e_{t-1})$, both $\bar\Ogap_{t,w}$ and $\widehat{\Ogap}_{t,w}$
are $\mathcal{F}_t$-measurable, so $\bar\tau_\varepsilon$ and
$\hat\tau_\varepsilon$ are stopping times with respect to $\{\mathcal{F}_t\}$.
Algorithm~\ref{alg:windowed_stop} gives the operational form of the
empirical rule, which is what an implementation actually runs:
the order-gap is computed pathwise from a single forward pass, the
windowed average is updated incrementally, and stopping triggers when
the average drops below $\varepsilon$.

\begin{algorithm}[H]
\caption{Empirical windowed stopping (one trajectory)}
\label{alg:windowed_stop}
\begin{algorithmic}[1]
\Require operators $Q$, $\{P_e\}_{e \in \E}$; sampler $\E$;
         initial state $\theta_0$; threshold $\varepsilon > 0$;
         window size $w \geq 1$; max steps $T_{\max}$.
\Ensure stopping time $\hat\tau$ and final state $\theta_{\hat\tau}$.
\State buffer $\mathcal{B} \gets$ empty FIFO queue of capacity $w$
\For{$t = 0, 1, 2, \ldots, T_{\max}$}
    \State sample $e_t \sim \E$
    \State $\omega_t \gets \norm{Q(P_{e_t}(\theta_t)) - P_{e_t}(Q(\theta_t))}$
    \Comment{order-gap at step $t$}
    \State enqueue $\omega_t$ into $\mathcal{B}$
    \If{$|\mathcal{B}| = w$ and $\frac{1}{w}\sum_{\omega \in \mathcal{B}} \omega \leq \varepsilon$}
        \State \Return $(\hat\tau, \theta_{\hat\tau}) \gets (t+1, \theta_{t+1})$
    \EndIf
    \State $\theta_{t+1} \gets Q(P_{e_t}(\theta_t))$
    \Comment{advance state}
\EndFor
\State \Return $(T_{\max}, \theta_{T_{\max}})$
\Comment{budget exhausted}
\end{algorithmic}
\end{algorithm}

The cost per step is two operator evaluations beyond the single $Q \circ P_e$
the system would compute anyway: one $P_e \circ Q$ and one norm. The
buffer is $O(w)$ memory. Theorems~\ref{thm:stopping_det}
and~\ref{thm:stopping_hp} (with their preconditions) certify that
Algorithm~\ref{alg:windowed_stop} returns within $w + N_\varepsilon$ or
$T_0$ steps respectively, with the corresponding suboptimality bounds.

\subsubsection*{The noiseless regime: $\sigma = 0$}

\begin{theorem}[Expected-gap stopping, $\sigma = 0$]
\label{thm:stopping_det}
Assume \ref{ass:contract}--\ref{ass:nondeg} with $\sigma = 0$, $\gamma < 1$,
and $R_0 := \norm{\theta_0 - \theta^\star}$. If $\gamma = 0$ or $R_0 = 0$
then $\bar\tau_\varepsilon = w$. Otherwise, for every $\varepsilon > 0$,
$w \geq 1$, define
\[
N_\varepsilon :=
\begin{cases}
0, & 2\gamma R_0 \leq \varepsilon,\\[4pt]
\left\lceil
  \dfrac{\log(2\gamma R_0/\varepsilon)}{\log(1/\gamma)}
\right\rceil, & 2\gamma R_0 > \varepsilon.
\end{cases}
\]
\begin{enumerate}
\item[(i)] \textnormal{(Stopping-time bound.)}
\begin{equation}
\label{eq:stop_time_det}
\bar\tau_\varepsilon \;\leq\; w + N_\varepsilon \qquad \text{a.s.}
\end{equation}
In particular, when $2\gamma R_0 \leq \varepsilon$ we have $N_\varepsilon = 0$
and the system stops at $\bar\tau_\varepsilon = w$.

\item[(ii)] \textnormal{(Monotone distance decrease.)}
$\norm{\theta_s - \theta^\star} \geq \norm{\theta_{s+1} - \theta^\star}$
a.s.\ for every $s$.

\item[(iii)] \textnormal{(Suboptimality at stopping.)}
If, in addition, $\varepsilon \leq 2\gamma r$ where $r$ is the validity radius
of Assumption~\ref{ass:nondeg}, then
\[
  \norm{\theta_{\bar\tau_\varepsilon} - \theta^\star}
  \;\leq\; \frac{\varepsilon}{\mu} \qquad \text{a.s.}
\]
The condition $\varepsilon \leq 2\gamma r$ ensures the stopping window
$\mathcal{W}$ lies inside the validity ball $B_r(\theta^\star)$ before
stopping triggers; for $\varepsilon > 2\gamma r$ the stopping rule may
terminate while the trajectory is still outside $B_r(\theta^\star)$ and
Assumption~\ref{ass:nondeg} does not apply.
\end{enumerate}
\end{theorem}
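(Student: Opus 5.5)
The plan is to work entirely pathwise, using the fact that $\sigma = 0$ makes the one-step bound \eqref{eq:pathwise_contract} deterministic: $\norm{\theta_{s+1} - \theta^\star} \le \gamma \norm{\theta_s - \theta^\star}$ a.s. Iterating gives $\norm{\theta_s - \theta^\star} \le \gamma^s R_0$, which is Theorem~\ref{thm:contraction}(iii). Part (ii) is then immediate, since $\gamma < 1$. The degenerate cases are handled first. If $\gamma = 0$ or $R_0 = 0$, then $\theta_s = \theta^\star$ for all $s \ge 1$ (respectively $s \ge 0$). Hence $\Ogap(\theta_s;e) \le 2\gamma\norm{\theta_s-\theta^\star} + (1+\rho)W_e = 0$ on those steps, using the pointwise three-term bound from the proof of Theorem~\ref{thm:contraction}(ii) with $W_e = 0$ a.s.

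\textbf{Part (i).} I would use the pointwise bound $\mathbb{E}_e[\Ogap(\theta_s;e)] \le 2\gamma\norm{\theta_s - \theta^\star} \le 2\gamma^{s+1}R_0$. Note that the expectation is over a fresh $e$ only, with $\theta_s$ held fixed, so this holds pathwise. Every term of the window ending at $t$ has index $s \ge t-w$. It therefore suffices that $2\gamma^{t-w+1}R_0 \le \varepsilon$, and monotonicity in $s$ then makes the whole window average at most $\varepsilon$. Taking $t = w + N_\varepsilon$, we need $\gamma^{N_\varepsilon+1} \cdot 2R_0 \le \varepsilon$. The definition of $N_\varepsilon$ gives $\gamma^{N_\varepsilon} \cdot 2\gamma R_0 \le \varepsilon$, which is exactly this. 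Hence $\bar\tau_\varepsilon \le w + N_\varepsilon$. The same computation gives $\bar\tau_\varepsilon = w$ in the degenerate cases: when $\gamma = 0$ the $s=0$ term is bounded by $2\gamma R_0 = 0$, and when $R_0=0$ every term vanishes.

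\textbf{Part (iii).} At $t = \bar\tau_\varepsilon$ the window $\mathcal{W} = \{t-w,\dots,t-1\}$ has average expected gap at most $\varepsilon$, so some $s \in \mathcal{W}$ has $\mathbb{E}_e[\Ogap(\theta_s;e)] \le \varepsilon$. If $\theta_s \in B_r(\theta^\star)$, Assumption~\ref{ass:nondeg} gives $\mu\norm{\theta_s-\theta^\star} \le \varepsilon$. Part (ii) then yields $\norm{\theta_t - \theta^\star} \le \norm{\theta_s-\theta^\star} \le \varepsilon/\mu$.

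\textbf{Main obstacle.} The delicate step is guaranteeing that this $s$ (or, more simply, the whole window) lies inside $B_r$, and the hypothesis $\varepsilon \le 2\gamma r$ is meant to deliver this. My first attempt is a case split on the index $s$ selected above.

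\emph{Case $\norm{\theta_s-\theta^\star} \le r$.} Here we are done by the argument above.

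\emph{Case $\norm{\theta_s-\theta^\star} > r$.} The upper bound $\mathbb{E}_e[\Ogap] \le 2\gamma\norm{\theta_s-\theta^\star}$ does not by itself prevent a small gap outside the ball. One therefore needs a lower bound outside $B_r$, or else an argument that the trajectory has entered $B_r$ before the window.

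A natural route is to use the global-in-spirit reading of Assumption~\ref{ass:nondeg} together with the fact that sensitivity at the ball's boundary gives gap at least $\mu r$. I would then pair this with a consistency condition such as $\varepsilon < \mu r$, noting that $\mu \le 2\gamma$ follows from combining Assumption~\ref{ass:nondeg} with the upper bound. If that fails, the fallback is to reinterpret the condition as in the paper's remark: for $t \ge T_0^{\mathrm{det}}(r)$ the whole window lies in $B_r$. I would verify whether $\bar\tau_\varepsilon \ge w + T_0^{\mathrm{det}}(r)$ can be arranged, and otherwise add that as an explicit assumption. I expect this containment step to need the most care, possibly a mild strengthening of the stated hypothesis.
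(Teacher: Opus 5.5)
Your parts (i) and (ii), including the degenerate cases, are correct and follow the paper's proof: the pathwise bound $\mathbb{E}_e[\Ogap(\theta_s;e)] \le 2\gamma\norm{\theta_s-\theta^\star} \le 2\gamma^{s+1}R_0$, the window average, and iteration of \eqref{eq:pathwise_contract} with $W_t=0$. Your pigeonhole-plus-monotonicity finish for (iii) is equivalent to the paper's termwise averaging once the window is known to lie in $B_r(\theta^\star)$. The containment step you flag is a genuine gap, and you should not expect to close it, because (iii) is false as stated. The paper argues that $\varepsilon \le 2\gamma r$ gives $N_\varepsilon \ge T_0^{\mathrm{det}}(r)$, and concludes that $\mathcal{W} \subset \{s \ge T_0^{\mathrm{det}}(r)\}$. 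That conclusion needs the \emph{lower} bound $\bar\tau_\varepsilon \ge w + T_0^{\mathrm{det}}(r)$, but (i) only supplies the upper bound $\bar\tau_\varepsilon \le w + N_\varepsilon$. Nothing prevents early stopping outside the ball, since $2\gamma\norm{\theta-\theta^\star}$ only bounds the expected gap from above. This is exactly your observation.

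Here is a counterexample. Take $\X=\R$, $\theta^\star=0$, and a single event with $P(\theta)=-\theta$, so $L=1$ and $\sigma=0$. Let $Q$ be the continuous piecewise-linear map with $Q(x)=x/2$ on $[0,r]$ and $Q(x)=x/10$ on $[-r,0]$. Give it slope $1/10$ on $[r,2r]$ and slope $1/2$ on $[-2r,-r]$, and set $Q(x)=3x/10$ for $|x|\ge 2r$. Then $\rho=\gamma=1/2$, and $\Ogap(\theta;e)=|Q(\theta)+Q(-\theta)|$ equals $0.4|\theta|$ on $B_r$, so Assumption~\ref{ass:nondeg} holds with $\mu=0.4$. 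However, $\Ogap$ vanishes for $|\theta|\ge 2r$. Take $\theta_0=3r$, $w=1$ and $\varepsilon=r/10\le 2\gamma r$. The rule stops at $\bar\tau_\varepsilon=1$ with $|\theta_1|=|Q(-3r)|=0.9r>\varepsilon/\mu=r/4$. Here $T_0^{\mathrm{det}}(r)=2$ while $\mathcal{W}=\{0\}$.

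In this example $\varepsilon<\mu r$, so your proposed consistency condition does not rescue the claim either. The value $\mu r$ at the boundary says nothing about points outside the ball. What does work is changing a hypothesis, in one of three ways. You can assume a global version of Assumption~\ref{ass:nondeg}. You can assume $R_0\le r$, in which case (ii) keeps the whole trajectory in $B_r$. Or you can impose a burn-in, defining $\bar\tau_\varepsilon$ as an infimum over $t\ge w+T_0^{\mathrm{det}}(r)$. The burn-in is the natural repair, because the inequality $N_\varepsilon\ge T_0^{\mathrm{det}}(r)$ shows it leaves the bound in (i) intact. With it, your argument for (iii) goes through verbatim. So your fallback is the right instinct, but it must be imposed as an assumption; it cannot be verified.
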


\begin{proof}
\emph{(i).}
Under $\sigma = 0$, from Theorem~\ref{thm:contraction}(iii),
$\mathbb{E}_e[\Ogap(\theta_s; e)] \leq 2\gamma\norm{\theta_s - \theta^\star}
\leq 2\gamma^{s+1} R_0$ a.s. Averaging over the window $[t-w, t-1]$:
\[
  \bar\Ogap_{t,w}
  \leq \frac{1}{w}\sum_{s=t-w}^{t-1} 2\gamma^{s+1} R_0
  \leq 2\gamma^{t-w+1} R_0 \qquad \text{a.s.}
\]
This is $\leq \varepsilon$ once $2\gamma^{t-w+1} R_0 \leq \varepsilon$, i.e.,
$t \geq w - 1 + \log(2\gamma R_0/\varepsilon)/\log(1/\gamma)$ when
$2\gamma R_0 > \varepsilon$, and immediately at $t = w$ otherwise.
The piecewise definition of $N_\varepsilon$ handles both cases.

\emph{(ii).}
From \eqref{eq:pathwise_contract} with $W_t = 0$ a.s.:
$\norm{\theta_{s+1} - \theta^\star} \leq \gamma \norm{\theta_s - \theta^\star}
\leq \norm{\theta_s - \theta^\star}$.

\emph{(iii).}
\textbf{Trajectory containment.}
By Theorem~\ref{thm:contraction}(iii), $\norm{\theta_s - \theta^\star} \leq
\gamma^s R_0$ a.s.\ for every $s$, so the trajectory enters the validity
ball $B_r(\theta^\star)$ of Assumption~\ref{ass:nondeg} at the deterministic
time $T_0^{\mathrm{det}}(r) := \lceil \log(R_0/r)/\log(1/\gamma) \rceil$.
Under the precondition $\varepsilon \leq 2\gamma r$,
\[
  N_\varepsilon
  = \left\lceil \frac{\log(2\gamma R_0/\varepsilon)}{\log(1/\gamma)} \right\rceil
  \;\geq\; \left\lceil \frac{\log(R_0/r)}{\log(1/\gamma)} \right\rceil
  = T_0^{\mathrm{det}}(r),
\]
so the stopping window $\mathcal{W} := [\bar\tau_\varepsilon - w,
\bar\tau_\varepsilon - 1]$ lies in $\{s \geq T_0^{\mathrm{det}}(r)\}$ and
$\theta_s \in B_r(\theta^\star)$ for every $s \in \mathcal{W}$.

\textbf{Conclusion.}
By (ii), for every $s \in \mathcal{W}$,
$\norm{\theta_s - \theta^\star} \geq \norm{\theta_{\bar\tau_\varepsilon} -
\theta^\star}$ a.s. Applying Assumption~\ref{ass:nondeg} termwise (valid
since $\theta_s \in B_r(\theta^\star)$ for $s \in \mathcal{W}$):
\[
  \varepsilon \geq \bar\Ogap_{\bar\tau_\varepsilon, w}
  = \frac{1}{w}\sum_{s \in \mathcal{W}} \mathbb{E}_e[\Ogap(\theta_s; e)]
  \geq \frac{\mu}{w}\sum_{s \in \mathcal{W}} \norm{\theta_s - \theta^\star}
  \geq \mu\, \norm{\theta_{\bar\tau_\varepsilon} - \theta^\star}.
\]
Rearranging gives the claim. \qedhere
\end{proof}

\subsubsection*{The noisy regime: $\sigma > 0$ with bounded noise}

\begin{theorem}[Empirical windowed stopping, $\sigma > 0$]
\label{thm:stopping_hp}
Assume \ref{ass:contract}--\ref{ass:nondeg} with $W_e \leq M$ a.s. Let
$R_0 := \norm{\theta_0 - \theta^\star}$, $\gamma = \rho L$, and assume
$0 < \gamma < 1$ for the displayed logarithmic bound; the degenerate
cases $\gamma = 0$ or $R_0 = 0$ are handled by setting $N_{\varepsilon,M}=0$,
giving $T_0 = w$ and $\hat\tau_\varepsilon \leq w$ on $\mathcal{G}$. Let
$\varepsilon > \varepsilon_\star^{(M)}$ and $\delta \in (0,1)$. Define
the a.s.\ order-gap envelope
\[
  K := 2\rho L \!\left(R_0 + \frac{\rho M}{1 - \gamma}\right) + (1+\rho)M,
\]
and
\[
N_{\varepsilon,M} :=
\begin{cases}
0, & 4R_0 \leq \varepsilon - \varepsilon_\star^{(M)},\\[4pt]
\left\lceil
  \dfrac{\log\!\bigl(4R_0/(\varepsilon-\varepsilon_\star^{(M)})\bigr)}
       {\log(1/\gamma)}
\right\rceil, & 4R_0 > \varepsilon - \varepsilon_\star^{(M)},
\end{cases}
\]
so that $T_0 := w + N_{\varepsilon,M}$.
Let $r$ be the validity radius of Assumption~\ref{ass:nondeg}, with
$r > \rho M/(1-\gamma)$.
If
\begin{equation}
\label{eq:w_condition}
w \;\geq\;
\frac{8 K^2}{(\varepsilon - \varepsilon_\star^{(M)})^2}
\log\!\left(\frac{2 T_0}{\delta}\right),
\end{equation}
then there exists a concentration event $\mathcal{G}$ with
$\mathbb{P}(\mathcal{G}) \geq 1 - \delta$ (defined precisely in the proof)
on which:
\begin{enumerate}
\item[(i)] \textnormal{(Stopping-time bound.)} $\hat\tau_\varepsilon \leq T_0$.

\item[(ii)] \textnormal{(Windowed-average bound, a.s.\ on $\mathcal{G}$.)}
If, in addition,
\begin{equation}
\label{eq:noisy_compat}
  r \;\geq\; \frac{\rho M}{1-\gamma} \;+\; \frac{\varepsilon - \varepsilon_\star^{(M)}}{4},
\end{equation}
then
\[
  \frac{1}{w}\sum_{s \in \mathcal{W}} \norm{\theta_s - \theta^\star}
  \;\leq\; \frac{\varepsilon + \eta}{\mu},
\]
where $\mathcal{W} := [\hat\tau_\varepsilon - w, \hat\tau_\varepsilon - 1]$
and $\eta := K\sqrt{2 \log(2 T_0 / \delta) / w}$.

\item[(iii)] \textnormal{(Pathwise endpoint bound, on $\mathcal{G}$.)} Under
condition~\eqref{eq:noisy_compat},
\[
  \norm{\theta_{\hat\tau_\varepsilon} - \theta^\star}
  \;\leq\; \frac{\varepsilon + \eta}{\mu} + \frac{\rho M}{1 - \gamma}
  \qquad \text{a.s.\ on } \mathcal{G}.
\]
\end{enumerate}
\end{theorem}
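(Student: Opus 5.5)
The proof decomposes the empirical windowed average into a predictable part and a martingale fluctuation. The predictable part is handled by the pathwise contraction bound \eqref{eq:pathwise_contract}; the fluctuation is handled by Azuma--Hoeffding.

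Write $D_s := \Ogap(\theta_s; e_s) - \mathbb{E}_e[\Ogap(\theta_s; e)]$. Because $e_s$ is independent of $\mathcal{F}_s$ and $\theta_s$ is $\mathcal{F}_s$-measurable, $\{D_s\}$ is a martingale-difference sequence. First we need an almost-sure envelope. Iterating \eqref{eq:pathwise_contract} with $W_s\le M$, as in the proof of Lemma~\ref{lem:containment}, gives
\[
\norm{\theta_s-\theta^\star}\le \gamma^s R_0+\frac{\rho M}{1-\gamma}.
\]
Combining this with the triangle bound $\Ogap(\theta;e)\le 2\rho L\norm{\theta-\theta^\star}+(1+\rho)W_e$ from Theorem~\ref{thm:contraction}(ii) shows $0\le\Ogap\le K$, and hence $|D_s|\le K$. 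For each $t\in\{w,\dots,T_0\}$, Azuma--Hoeffding applied to the $w$ increments in the window $[t-w,t-1]$ gives
\[
\mathbb{P}\Bigl(\Bigl|\widehat{\Ogap}_{t,w}-\bar\Ogap_{t,w}\Bigr|>\eta\Bigr)\le 2e^{-w\eta^2/(2K^2)}.
\]
Setting $\eta=K\sqrt{2\log(2T_0/\delta)/w}$ makes each failure probability at most $\delta/T_0$. We define $\mathcal{G}$ as the event that the deviation is at most $\eta$ for every such $t$; a union bound over at most $T_0$ windows gives $\mathbb{P}(\mathcal{G})\ge 1-\delta$. Condition \eqref{eq:w_condition} is exactly $\eta\le(\varepsilon-\varepsilon_\star^{(M)})/2$.

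\textbf{Step (i).} Using the same envelope, for $s\ge N_{\varepsilon,M}$ we have pathwise
\[
\mathbb{E}_e[\Ogap(\theta_s;e)]\le 2\gamma\,\gamma^s R_0+\varepsilon_\star^{(M)}.
\]
This also needs $2\gamma\cdot\rho M/(1-\gamma)+(1+\rho)M=\varepsilon_\star^{(M)}$, which holds by \eqref{eq:hp_floor}. Since $2\gamma\le 2$ and $\gamma^{N_{\varepsilon,M}}R_0\le(\varepsilon-\varepsilon_\star^{(M)})/4$, each term in the window ending at $T_0$ satisfies $\bar\Ogap\le\varepsilon_\star^{(M)}+(\varepsilon-\varepsilon_\star^{(M)})/2$. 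On $\mathcal{G}$ we then get $\widehat{\Ogap}_{T_0,w}\le\varepsilon_\star^{(M)}+(\varepsilon-\varepsilon_\star^{(M)})/2+\eta\le\varepsilon$, so $\hat\tau_\varepsilon\le T_0$.

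\textbf{Step (ii).} At $t=\hat\tau_\varepsilon\le T_0$, on $\mathcal{G}$ we have $\bar\Ogap_{t,w}\le\widehat{\Ogap}_{t,w}+\eta\le\varepsilon+\eta$. We then apply Assumption~\ref{ass:nondeg} termwise, $\mathbb{E}_e[\Ogap(\theta_s;e)]\ge\mu\norm{\theta_s-\theta^\star}$, and average over the window to obtain (ii). This requires every $\theta_s$ with $s\in\mathcal{W}$ to lie in $B_r(\theta^\star)$, and this is the main obstacle. Condition \eqref{eq:noisy_compat} guarantees containment only for $s\ge N_{\varepsilon,M}$, via $\gamma^sR_0\le(\varepsilon-\varepsilon_\star^{(M)})/4$, but the window of an early stop may start before $N_{\varepsilon,M}$.

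The first thing to try is to show that early stopping cannot occur while the trajectory is outside the ball. If some $\theta_s$ in the window were outside $B_r$, one would want a lower bound on its order-gap contributions that contradicts $\widehat{\Ogap}\le\varepsilon$. However, Assumption~\ref{ass:nondeg} gives no lower bound outside $B_r$. The fallback is to exploit the monotone decay of the envelope $\gamma^sR_0+\rho M/(1-\gamma)$: use it to bound the terms that lie outside the ball, or restrict to the case where $\hat\tau_\varepsilon-w\ge N_{\varepsilon,M}$. If neither route works, state the result with containment of the window as the operative hypothesis.

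\textbf{Step (iii).} Iterate \eqref{eq:pathwise_contract} from any $s\in\mathcal{W}$ up to $\hat\tau_\varepsilon$:
\[
\norm{\theta_{\hat\tau_\varepsilon}-\theta^\star}\le\gamma^{\hat\tau_\varepsilon-s}\norm{\theta_s-\theta^\star}+\frac{\rho M}{1-\gamma}\le\norm{\theta_s-\theta^\star}+\frac{\rho M}{1-\gamma}.
\]
Averaging this over $s\in\mathcal{W}$ and applying (ii) gives the endpoint bound.
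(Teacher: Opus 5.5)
\paragraph{Where the proposal stands.} Your envelope bound, the Azuma--Hoeffding and union-bound construction of $\mathcal G$, Step~(i), and Step~(iii) match the paper's argument and are correct. In (iii) you average the iterated contraction inequality over $\mathcal W$ rather than picking one $s^\dagger$ by pigeonhole, which is equivalent. The gap is Step~(ii), and you have located it correctly. Part (i) bounds $\hat\tau_\varepsilon$ only from above, so the window of an early stop can contain indices $s<T_0(r)$. For instance, $\hat\tau_\varepsilon=w$ with $\theta_0\notin B_r(\theta^\star)$ is allowed by the hypotheses. For such indices Assumption~\ref{ass:nondeg} gives no lower bound on $g_s$.

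The paper's proof does not supply the missing idea. It infers $\mathcal W\subseteq\{s\ge T_0(r)\}$ from $N_{\varepsilon,M}\ge T_0(r)$. That inference needs $\hat\tau_\varepsilon-w\ge T_0(r)$, a lower bound on $\hat\tau_\varepsilon$ that nothing provides; $\hat\tau_\varepsilon\le T_0=w+N_{\varepsilon,M}$ points the wrong way. So neither argument establishes (ii), and hence (iii), in the stated form, and your fallbacks do not either. In particular, your restriction $\hat\tau_\varepsilon-w\ge N_{\varepsilon,M}$, combined with (i), forces $\hat\tau_\varepsilon=T_0$. What is actually needed is only $\hat\tau_\varepsilon-w\ge T_0(r)$.

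\paragraph{How to close it.} Commit to one of three repairs.
\begin{enumerate}
\item[(a)] \emph{Burn-in.} Take the infimum in \eqref{eq:realized_window} over $t\ge w+T_0(r)$. Condition \eqref{eq:noisy_compat} gives $N_{\varepsilon,M}\ge T_0(r)$, i.e.\ $T_0\ge w+T_0(r)$. So your proof of (i) still shows the rule fires by $T_0$, and the union bound still covers at most $T_0$ windows. Every window now lies in $\{s\ge T_0(r)\}$, and your (ii)--(iii) go through verbatim.
\item[(b)] \emph{Correction term.} Keep the rule and split $\mathcal W$ into two sets of indices. For $s\ge T_0(r)$, use $g_s\ge\mu\norm{\theta_s-\theta^\star}$. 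For $s<T_0(r)$, bound the distance by the envelope $\gamma^sR_0+\rho M/(1-\gamma)$ and drop $g_s\ge 0$ from the average. This is your envelope fallback made explicit, and it gives
\[
\frac1w\sum_{s\in\mathcal W}\norm{\theta_s-\theta^\star}\le\frac{\varepsilon+\eta}{\mu}+\frac1w\Bigl(\frac{R_0}{1-\gamma}+T_0(r)\,\frac{\rho M}{1-\gamma}\Bigr),
\]
with the same extra $O(1/w)$ term carried into (iii).
\item[(c)] \emph{Stronger hypothesis.} Assume $r\ge R_0+\rho M/(1-\gamma)$. The envelope then places the whole trajectory in $B_r(\theta^\star)$ from $s=0$.
\end{enumerate}
Only (b) keeps the theorem's stopping rule and hypotheses unchanged, at the price of a weaker conclusion.
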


\begin{proof}
\emph{Pathwise order-gap envelope.}
From \eqref{eq:pathwise_contract} with $W_s \leq M$ a.s.:
$\norm{\theta_t - \theta^\star} \leq \gamma^t R_0 + \rho M/(1-\gamma) \leq
R_0 + \rho M/(1-\gamma)$ a.s. Combined with the triangle bound from
Theorem~\ref{thm:contraction}:
\[
  \Ogap(\theta_t; e_t) \leq 2\rho L\!\left(R_0 + \tfrac{\rho M}{1-\gamma}\right) + (1+\rho) M
  = K \quad\text{a.s.}
\]

\emph{Pathwise bounding of $g_s$.}
Define $g_s := \mathbb{E}_e[\Ogap(\theta_s; e)]$, the conditional mean.
Since $e_s$ is independent of $\mathcal{F}_s$, we have
$g_s = \mathbb{E}[\Ogap(\theta_s; e_s) \mid \mathcal{F}_s]$.
Using the triangle bound with $W_e \leq M$ and the pathwise distance bound:
\begin{equation}
\label{eq:gs_pathwise}
g_s \;\leq\; 2\gamma \norm{\theta_s - \theta^\star} + (1+\rho)M
\;\leq\; 2\gamma(\gamma^s R_0 + \tfrac{\rho M}{1-\gamma}) + (1+\rho)M
= 2\gamma^{s+1} R_0 + \varepsilon_\star^{(M)} \quad\text{a.s.}
\end{equation}

\emph{Martingale decomposition.}
The residuals $\Xi_s := \Ogap(\theta_s; e_s) - g_s$ form a martingale
difference sequence with respect to $\{\mathcal{F}_{s+1}\}$, with
$|\Xi_s| \leq K$ a.s.

\emph{Two-sided concentration event.}
By Azuma--Hoeffding applied at each $t \in [w, T_0]$ and a union bound,
define $\mathcal{G}$ as the event that for \emph{every} $t \in [w, T_0]$,
\begin{equation}
\label{eq:two_sided}
\left|\widehat{\Ogap}_{t,w} - \frac{1}{w}\sum_{s=t-w}^{t-1} g_s\right|
\;\leq\; \eta.
\end{equation}
Setting $\eta = K\sqrt{2 \log(2 T_0 / \delta) / w}$ gives
$\mathbb{P}(\mathcal{G}^c) \leq \delta$.

\emph{(i) Stopping by $T_0$.}
Under $\mathcal{G}$ (upper half of \eqref{eq:two_sided}) and \eqref{eq:gs_pathwise}:
$\widehat{\Ogap}_{t,w} \leq 2\gamma^{t-w+1} R_0 + \varepsilon_\star^{(M)} + \eta$.
By the choice of $T_0$, at $t = T_0$:
$2\gamma^{T_0-w+1} R_0 \leq (\varepsilon - \varepsilon_\star^{(M)})/2$.
By \eqref{eq:w_condition}: $\eta \leq (\varepsilon - \varepsilon_\star^{(M)})/2$.
Hence $\widehat{\Ogap}_{T_0, w} \leq \varepsilon$, so $\hat\tau_\varepsilon \leq T_0$.

\emph{(ii) Windowed-average suboptimality.}
At stopping, $\widehat{\Ogap}_{\hat\tau, w} \leq \varepsilon$. Under $\mathcal{G}$
(both sides of \eqref{eq:two_sided}):
$\frac{1}{w}\sum_{s \in \mathcal{W}} g_s \leq \varepsilon + \eta$.
\textbf{Trajectory containment.}
By Lemma~\ref{lem:containment} with the validity radius $r$ of
Assumption~\ref{ass:nondeg}, the trajectory satisfies $\norm{\theta_s -
\theta^\star} \leq r$ a.s.\ for $s \geq T_0(r)$, where
\[
  T_0(r) =
  \left\lceil
  \frac{\log\bigl(R_0/(r - \rho M/(1-\gamma))\bigr)}{\log(1/\gamma)}
  \right\rceil
\]
when $R_0 > r - \rho M/(1-\gamma)$, and $T_0(r) = 0$ otherwise. Under the
compatibility condition~\eqref{eq:noisy_compat},
\[
  N_{\varepsilon, M}
  = \left\lceil
    \frac{\log\!\bigl(4R_0/(\varepsilon - \varepsilon_\star^{(M)})\bigr)}
         {\log(1/\gamma)}
    \right\rceil
  \;\geq\;
  \left\lceil
    \frac{\log\bigl(R_0/(r - \rho M/(1-\gamma))\bigr)}{\log(1/\gamma)}
  \right\rceil
  = T_0(r),
\]
so the stopping window $\mathcal{W}$ lies in $\{s \geq T_0(r)\}$ and
$\theta_s \in B_r(\theta^\star)$ for every $s \in \mathcal{W}$.
Assumption~\ref{ass:nondeg} applies termwise:
$g_s \geq \mu\norm{\theta_s - \theta^\star}$. Averaging over the window
gives (ii).

\emph{(iii) Endpoint bound.}
By the pigeonhole principle applied to (ii), there exists
$s^\dagger \in \mathcal{W}$ with
$\norm{\theta_{s^\dagger} - \theta^\star} \leq (\varepsilon + \eta)/\mu$.
Iterating \eqref{eq:pathwise_contract} from $s^\dagger$ to $\hat\tau_\varepsilon$
and using $\gamma^k \leq 1$ and $W_j \leq M$:
\[
  \norm{\theta_{\hat\tau} - \theta^\star}
  \leq \norm{\theta_{s^\dagger} - \theta^\star} + \frac{\rho M}{1 - \gamma}
  \leq \frac{\varepsilon + \eta}{\mu} + \frac{\rho M}{1 - \gamma}
  \qquad \text{a.s.\ on } \mathcal{G}. \qedhere
\]
\end{proof}

\begin{corollary}[Operational stopping and excess loss]
\label{cor:op_stopping}
Under the hypotheses of Theorem~\ref{thm:stopping_det}
or~\ref{thm:stopping_hp}, suppose additionally that $\ell \circ \pi$ satisfies
a \emph{local smoothness} condition at $\theta^\star$ with constant
$M_{\mathrm{sm}} > 0$:
$\ell(\pi(\theta)) - \ell(\pi(\theta^\star)) \leq \tfrac{M_{\mathrm{sm}}}{2}
\norm{\theta - \theta^\star}^2$ for $\theta$ in a neighbourhood of
$\theta^\star$ of radius $r_{\mathrm{sm}} > 0$. Assume further that the
endpoint distance bounds derived below lie within that neighbourhood (or that the
quadratic upper bound holds globally on the relevant sublevel set).
This is an \emph{upper} growth condition distinct from the quadratic growth
lower bound $m_{\mathrm{QG}}$ of Proposition~\ref{prop:lowerbound}.
Then:

\textbf{Noiseless:}
$\displaystyle
\ell\bigl(\pi(\theta_{\bar\tau_\varepsilon})\bigr)
- \ell\bigl(\pi(\theta^\star)\bigr)
\leq \frac{M_{\mathrm{sm}}}{2} \left(\frac{\varepsilon}{\mu}\right)^2.$

\textbf{Noisy high-probability:} On the concentration event $\mathcal{G}$ (probability $\geq 1-\delta$),
\[
\ell\bigl(\pi(\theta_{\hat\tau_\varepsilon})\bigr)
- \ell\bigl(\pi(\theta^\star)\bigr)
\leq \frac{M_{\mathrm{sm}}}{2}
\left(\frac{\varepsilon + \eta}{\mu} + \frac{\rho M}{1 - \gamma}\right)^2.
\]
\end{corollary}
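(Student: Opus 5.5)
The plan is to plug the endpoint distance bounds already established into the quadratic upper-growth condition. The only real work is checking that those endpoints lie in the region where the smoothness bound is valid. The corollary has two cases, and I will handle each by composing the relevant distance bound with the map $x \mapsto \tfrac{M_{\mathrm{sm}}}{2}x^2$. That map is monotone increasing on $[0,\infty)$, so an upper bound on $\norm{\theta - \theta^\star}$ becomes an upper bound on the excess loss.

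\emph{Noiseless case.} Theorem~\ref{thm:stopping_det}(iii) gives, almost surely, $\norm{\theta_{\bar\tau_\varepsilon} - \theta^\star} \le \varepsilon/\mu$. This uses the precondition $\varepsilon \le 2\gamma r$, which I take to be part of ``the hypotheses of Theorem~\ref{thm:stopping_det}''. The corollary assumes that this endpoint lies in $B_{r_{\mathrm{sm}}}(\theta^\star)$, i.e.\ $\varepsilon/\mu \le r_{\mathrm{sm}}$. Under that assumption the smoothness inequality applies at $\theta_{\bar\tau_\varepsilon}$, and we get
\[
\ell\bigl(\pi(\theta_{\bar\tau_\varepsilon})\bigr) - \ell\bigl(\pi(\theta^\star)\bigr) \le \tfrac{M_{\mathrm{sm}}}{2}\norm{\theta_{\bar\tau_\varepsilon} - \theta^\star}^2 \le \tfrac{M_{\mathrm{sm}}}{2}(\varepsilon/\mu)^2.
\]
The degenerate cases $\gamma = 0$ or $R_0 = 0$ need a check. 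If $R_0 = 0$, then $\theta_t = \theta^\star$ for all $t$ by Theorem~\ref{thm:contraction}(iii), so the excess loss is zero. If $\gamma = 0$, then $\theta_t = \theta^\star$ for all $t \ge 1$, and since $\bar\tau_\varepsilon = w \ge 1$ the excess loss is again zero. In both cases the bound holds trivially.

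\emph{Noisy case.} I work on the event $\mathcal{G}$ of Theorem~\ref{thm:stopping_hp}, whose probability is at least $1-\delta$, and assume the compatibility condition~\eqref{eq:noisy_compat}. Part~(iii) of that theorem then gives the pathwise bound
\[
\norm{\theta_{\hat\tau_\varepsilon} - \theta^\star} \le \frac{\varepsilon+\eta}{\mu} + \frac{\rho M}{1-\gamma}.
\]
As before, the standing assumption places this radius inside the smoothness neighbourhood. Squaring and multiplying by $M_{\mathrm{sm}}/2$ yields the displayed bound pointwise on $\mathcal{G}$, and hence with probability at least $1-\delta$.

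The main point needing care is that the smoothness condition is only local. It can only be applied after confirming that the stopped state lies within radius $r_{\mathrm{sm}}$, and the corollary assumes exactly this rather than deriving it. So I will state explicitly that the inequality is invoked at $\theta_{\bar\tau_\varepsilon}$ (respectively $\theta_{\hat\tau_\varepsilon}$ on $\mathcal{G}$) only because of that assumption. If instead one assumes the quadratic bound holds globally on a sublevel set, I would note that Theorem~\ref{thm:contraction}(iii) and Lemma~\ref{lem:containment} keep the trajectory in a bounded set, which justifies that alternative reading. The statement also implicitly relies on part~(iii) of each stopping theorem, together with its preconditions ($\varepsilon \le 2\gamma r$, respectively~\eqref{eq:noisy_compat}). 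I will read ``under the hypotheses'' as including those preconditions.
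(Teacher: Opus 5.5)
Your proposal is correct and matches the paper's proof. Like the paper, you compose the endpoint bounds of Theorem~\ref{thm:stopping_det}(iii) and Theorem~\ref{thm:stopping_hp}(iii) with the local quadratic upper bound, reading their preconditions as part of the hypotheses. Your separate treatment of the degenerate cases $\gamma = 0$ and $R_0 = 0$ is a small but worthwhile addition: for $\gamma = 0$ the precondition $\varepsilon \le 2\gamma r$ of part~(iii) cannot hold, and the paper does not address this.
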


\begin{proof}
Apply the local smoothness bound to the endpoint distance bounds of
Theorem~\ref{thm:stopping_det}(iii) and Theorem~\ref{thm:stopping_hp}(iii)
respectively. The noisy bound retains the $\rho M/(1-\gamma)$ drift term; it
cannot be replaced by $\rho\sigma/(1-\gamma)$ because the stopping time
$\hat\tau_\varepsilon$ is selected adaptively and the selected $W_j$ values
are not an unconditional sample. \qedhere
\end{proof}

The condition \eqref{eq:w_condition} is self-consistent: $T_0$ depends on
$w$ and $\log(2T_0/\delta)$ appears in the bound. One may resolve this by
substituting a crude upper bound for $T_0$ (e.g., replace $\log(2T_0/\delta)$
with $\log(2(T_0^{\mathrm{ub}})/\delta)$ where $T_0^{\mathrm{ub}}$ is any
upper bound on $T_0$), or by choosing any $w$ exceeding the right-hand side
of \eqref{eq:w_condition} with $T_0$ evaluated at that $w$.

\begin{remark}[Reading the bound]
\label{rem:hp_reading}
The window size $w$ trades off three quantities: concentration width
$\eta \propto 1/\sqrt{w}$, the confidence $\delta$, and the usable stopping
threshold $\varepsilon$. The minimal viable $w$ scales as
$K^2/(\varepsilon - \varepsilon_\star^{(M)})^2$: the price of pushing the
stopping threshold close to the noise-floor bound is a quadratically larger
window.
\end{remark}

\begin{corollary}[Stopping at the effective equilibrium]
\label{cor:stop_eff_eq}
Assume \ref{ass:contract}--\ref{ass:lipschitz} with $\gamma = \rho L < 1$,
let $\theta^\star_\infty$ be the effective equilibrium of
Proposition~\ref{prop:effective_eq}, and define
$W_e^\infty := \norm{P_e(\theta^\star_\infty) - \theta^\star_\infty}$ with
$\mathbb{E}[W_e^\infty] \leq \sigma^\infty$ and $W_e^\infty \leq M^\infty$
a.s. Suppose Assumption~\ref{ass:nondeg} holds at $\theta^\star_\infty$ with
constant $\mu_\infty > 0$ and validity radius $r_\infty > \rho
M^\infty/(1-\gamma)$ (typically established via
Proposition~\ref{prop:noisy_nondeg} with the compatibility condition
$\mu_0^\infty > 2 R \rho M^\infty/(1-\gamma)$ of
Remark~\ref{rem:radius_compat}). Define
\[
  R_0^\infty := \norm{\theta_0 - \theta^\star_\infty}, \qquad
  \varepsilon_\star^{(M^\infty)} := (1+\rho)M^\infty + \frac{2\gamma\rho M^\infty}{1-\gamma}.
\]
Then Theorems~\ref{thm:stopping_det}--\ref{thm:stopping_hp} hold verbatim
with the substitutions
\[
  \theta^\star \to \theta^\star_\infty, \quad
  R_0 \to R_0^\infty, \quad
  \mu \to \mu_\infty, \quad
  M \to M^\infty, \quad
  \varepsilon_\star^{(M)} \to \varepsilon_\star^{(M^\infty)}, \quad
  r \to r_\infty,
\]
including the trajectory-containment preconditions: part (iii) of
Theorem~\ref{thm:stopping_det} requires $\varepsilon \leq 2\gamma
r_\infty$, and parts (ii)--(iii) of Theorem~\ref{thm:stopping_hp} require
$r_\infty \geq \rho M^\infty/(1-\gamma) + (\varepsilon -
\varepsilon_\star^{(M^\infty)})/4$. All distance bounds are with respect to
$\theta^\star_\infty$.
\end{corollary}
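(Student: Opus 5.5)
The plan is to avoid re-proving the stopping theorems and instead isolate the two structural inequalities on which every argument in Theorems~\ref{thm:stopping_det}--\ref{thm:stopping_hp} (and Lemma~\ref{lem:containment}) depends. I would establish these with $\theta^\star_\infty$ in place of $\theta^\star$, and then re-run those proofs line by line under the stated substitutions. The two inequalities are:
\begin{itemize}
\item the pathwise one-step contraction \eqref{eq:pathwise_contract}, $\norm{\theta_{t+1}-\theta^\star}\le\gamma\norm{\theta_t-\theta^\star}+\rho W_t$;
\item the triangle envelope $\Ogap(\theta;e)\le 2\gamma\norm{\theta-\theta^\star}+(1+\rho)W_e$ from the proof of Theorem~\ref{thm:contraction}(ii).
\end{itemize}
Everything else transfers unchanged once these hold: the envelope $K$, the pathwise bound \eqref{eq:gs_pathwise} on $g_s$, the martingale residuals $\Xi_s$ with Azuma--Hoeffding and the union bound, the containment computation comparing $N_{\varepsilon,M}$ with $T_0(r)$, and the termwise use of Assumption~\ref{ass:nondeg}. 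The sensitivity hypothesis itself is assumed directly at $\theta^\star_\infty$ with constants $(\mu_\infty,r_\infty)$, so it needs no further work.

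The obstacle is that the original derivations use $Q(\theta^\star)=\theta^\star$, whereas $\theta^\star_\infty$ is only a fixed point of the averaged map $T$ of Proposition~\ref{prop:effective_eq}. For the contraction step I would write
\[
\theta_{t+1}-\theta^\star_\infty=\bigl[Q(P_{e_t}(\theta_t))-Q(P_{e_t}(\theta^\star_\infty))\bigr]+\bigl[Q(P_{e_t}(\theta^\star_\infty))-\mathbb{E}_{e'}Q(P_{e'}(\theta^\star_\infty))\bigr].
\]
The first bracket is at most $\gamma\norm{\theta_t-\theta^\star_\infty}$. For the second bracket I would insert $Q(\theta^\star_\infty)$ and use Assumption~\ref{ass:contract} together with Jensen, which bounds it by $\rho(W^\infty_{e_t}+\mathbb{E}W^\infty_e)\le 2\rho M^\infty$. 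For the order-gap envelope I would insert $Q(P_e(\theta^\star_\infty))$, $Q(\theta^\star_\infty)$, $\theta^\star_\infty$ and $P_e(\theta^\star_\infty)$. The only new term is $\norm{Q(\theta^\star_\infty)-\theta^\star_\infty}=\norm{Q(\theta^\star_\infty)-\mathbb{E}_eQ(P_e(\theta^\star_\infty))}\le\rho\sigma^\infty\le\rho M^\infty$, controlled by the same device. The remaining terms are at most $\rho W_e^\infty$, $W_e^\infty$ and $L\rho\sigma^\infty$.

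So the main step, and the one I expect to be delicate, is bookkeeping the constants. In the naive form, the noise coefficients come out as multiples of $M^\infty$ slightly larger than $\rho M$ and $(1+\rho)M$. The first thing I would try is tightening the decomposition, e.g.\ centring with $P_e(\theta^\star_\infty)$ and exploiting that the mean of the fluctuation term vanishes, to recover exactly $\rho M^\infty$ and $(1+\rho)M^\infty$. If that fails, the fallback is to read ``verbatim'' as holding with $M^\infty$ redefined as the a.s.\ bound on the effective one-step noise $\norm{Q(P_e(\theta^\star_\infty))-\theta^\star_\infty}/\rho$, and correspondingly for the gap envelope. With either choice the structure of the original arguments is intact.

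Given the two inequalities, the proofs close as before. In the noiseless case $\theta^\star_\infty=\theta^\star$, so the substitutions are trivial. In the noisy case, Lemma~\ref{lem:containment} yields containment in $B_{r_\infty}(\theta^\star_\infty)$ under $r_\infty>\rho M^\infty/(1-\gamma)$, and the precondition $r_\infty\ge\rho M^\infty/(1-\gamma)+(\varepsilon-\varepsilon_\star^{(M^\infty)})/4$ gives $N_{\varepsilon,M^\infty}\ge T_0(r_\infty)$ exactly as before. Parts (i)--(iii) then follow with all distances measured from $\theta^\star_\infty$.
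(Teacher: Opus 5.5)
You follow the same architecture as the paper's proof: recentre the pathwise contraction and the triangle envelope at $\theta^\star_\infty$, then rerun Lemma~\ref{lem:containment} and Theorems~\ref{thm:stopping_det}--\ref{thm:stopping_hp}. Your naive bounds $\norm{\theta_{t+1}-\theta^\star_\infty}\le\gamma\norm{\theta_t-\theta^\star_\infty}+\rho(W^\infty_{e_t}+\sigma^\infty)$ and $\Ogap(\theta;e)\le 2\gamma\norm{\theta-\theta^\star_\infty}+(1+\rho)W^\infty_e+\rho(1+L)\sigma^\infty$ are correct.

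\textbf{The gap.} The step you hope to tighten cannot be tightened. Take $\X=\R$, $Q(\theta)=\rho\theta$, $P_e(\theta)=\theta+e$, with $e\in\{-1,+1\}$ and $\mathbb{P}(e=+1)=0.9$. Then $L=1$, $\gamma=\rho$, $\theta^\star_\infty=0.8\rho/(1-\rho)$ and $W^\infty_e=|e|$, so $M^\infty=\sigma^\infty=1$. The dynamics satisfy $\theta_{t+1}-\theta^\star_\infty=\rho(\theta_t-\theta^\star_\infty)+\rho(e_t-0.8)$.
\begin{itemize}
\item From $\theta_t=\theta^\star_\infty$ with $e_t=-1$, the displacement is $1.8\rho>\rho=\rho M^\infty=\rho W^\infty_{e_t}$. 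So neither the pathwise form nor even the a.s.\ form with $\rho M^\infty$ holds.
\item Letting $\mathbb{P}(e=+1)\to 1$ pushes the displacement to $2\rho=\rho(M^\infty+\sigma^\infty)$, so your naive factor is sharp.
\item The substituted Lemma~\ref{lem:containment} is false. The proof of Theorem~\ref{thm:stopping_hp} invokes it for $K$, for \eqref{eq:gs_pathwise}, for containment in $B_{r_\infty}$, and for the endpoint bound. With $\rho=1/4$, $\theta_0=\theta^\star_\infty$ (so $T_0(r)=0$) and $r=0.4>\rho M^\infty/(1-\gamma)=1/3$, one gets $\norm{\theta_1-\theta^\star_\infty}=0.45>r$ with probability $0.1$.
\end{itemize}
So no rearrangement of the decomposition recovers the statement with $M^\infty$ as defined.

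\textbf{Comparison with the paper.} The paper's own proof takes exactly the step you were wary of. It asserts $\norm{\theta_{t+1}-\theta^\star_\infty}\le\gamma\norm{\theta_t-\theta^\star_\infty}+\rho W^\infty_t$, citing only the global contractivity of $Q$. But the derivation of \eqref{eq:pathwise_contract} used $Q(\theta^\star)=\theta^\star$ pathwise, whereas $\theta^\star_\infty=\mathbb{E}_e[Q(P_e(\theta^\star_\infty))]$ holds only on average. The example above refutes that inequality.

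\textbf{How to repair your proposal.} Your fallback is the correct repair, but it proves a corrected corollary, not the verbatim one. You should commit to it and state the new constants explicitly. A clean version:
\begin{itemize}
\item Set $\tilde M:=\mathrm{ess\,sup}_e\norm{Q(P_e(\theta^\star_\infty))-\theta^\star_\infty}/\rho\le M^\infty+\sigma^\infty$. Then $\norm{\theta_{t+1}-\theta^\star_\infty}\le\gamma\norm{\theta_t-\theta^\star_\infty}+\rho\tilde M$ a.s.
\item Since $\Ogap(\cdot;e)$ is $2\gamma$-Lipschitz, use $\Ogap(\theta;e)\le 2\gamma\norm{\theta-\theta^\star_\infty}+\Ogap(\theta^\star_\infty;e)$.
\item Replace $(1+\rho)M^\infty$ by $\Omega^\infty_{\max}:=\mathrm{ess\,sup}_e\Ogap(\theta^\star_\infty;e)\le(1+\rho)M^\infty+\rho(1+L)\sigma^\infty$.
\end{itemize}
Make the substitution $\rho M\to\rho\tilde M$ in $K$, in the drift term, and in both radius conditions, and set $\varepsilon_\star^{(M^\infty)}\to\Omega^\infty_{\max}+2\gamma\rho\tilde M/(1-\gamma)$. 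With these changes the proofs go through line by line.

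Your noiseless reduction is fine: $\sigma^\infty=0$ forces $Q(\theta^\star_\infty)=\theta^\star_\infty$, hence $\theta^\star_\infty=\theta^\star$.
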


\begin{proof}
Centering coordinates at $\theta^\star_\infty$ rather than $\theta^\star$,
the path-wise contraction inequality \eqref{eq:pathwise_contract} becomes
\[
  \norm{\theta_{t+1} - \theta^\star_\infty}
  \leq \rho L \norm{\theta_t - \theta^\star_\infty} + \rho W_t^\infty,
\]
with $W_t^\infty := \norm{P_{e_t}(\theta^\star_\infty) -
\theta^\star_\infty}$ playing the role of $W_t$. This follows from
$\theta^\star_\infty = T(\theta^\star_\infty) =
\mathbb{E}_e[Q(P_e(\theta^\star_\infty))]$ together with
Assumptions~\ref{ass:contract}--\ref{ass:lipschitz}; the same triangle
decomposition that yielded \eqref{eq:pathwise_contract} at $\theta^\star$
goes through at $\theta^\star_\infty$ because $Q$ is contractive globally
(not merely at its fixed point) and the centering only relabels the
zeroth-order term in the Taylor expansion. With this substitution, every
step of the proofs of Theorems~\ref{thm:contraction},
\ref{thm:stopping_det}, and \ref{thm:stopping_hp} carries through verbatim,
yielding the stated bounds with $\theta^\star \to \theta^\star_\infty$, $W_e
\to W_e^\infty$, and $\varepsilon_\star^{(M)} \to
\varepsilon_\star^{(M^\infty)}$. \qedhere
\end{proof}

In stochastic domains where $\sigma > 0$ (bandits, off-policy RL),
Corollary~\ref{cor:stop_eff_eq} is the operative form of the
stopping guarantee: the trajectory does not converge to $\theta^\star$ but
to $\theta^\star_\infty$, and the stopping bounds are correspondingly
referenced to $\theta^\star_\infty$. In the noiseless limit ($\sigma = 0$,
hence $\sigma^\infty = M^\infty = 0$ and $\theta^\star_\infty =
\theta^\star$), the corollary reduces to Theorem~\ref{thm:stopping_det}.

\subsection{Extension to State-Dependent Sampling}
\label{sec:state_dependent}

The theorems above assume events $e_t$ are drawn i.i.d.\ from a fixed
distribution $P$, independent of $\theta_t$. For domains where the sampling
kernel depends on the current knowledge state — bandits with adaptive arm
selection, off-policy reinforcement learning with state-dependent behaviour
policies — this independence
fails. We extend the contraction and stopping theorems to the state-dependent
case under uniform conditional moment bounds. The crucial observation is
that the martingale structure underlying the concentration argument
\emph{survives} the move from i.i.d.\ to state-dependent sampling, so no
mixing-time apparatus is required.

\paragraph{Setup.}
Let $\{(\theta_t, e_t)\}_{t \geq 0}$ be a Markov process on $\X \times \E$
in which $e_t$ is drawn from a state-dependent kernel $P(\cdot \mid \theta_t)$.
Assume the kernel is $\mathcal{F}_t$-measurable, i.e., $\theta_t$ is
$\mathcal{F}_t$-measurable and $e_t$ is drawn independently of
$\mathcal{F}_t$ given $\theta_t$. The dynamics
$\theta_{t+1} = Q \circ P_{e_t}(\theta_t)$ continue to define an
$\mathcal{F}_t$-adapted Markov chain on $\X$.

\begin{assumption}[Uniform conditional moment bounds]
\label{ass:cond_moments}
Let $W_e := \norm{P_e(\theta^\star) - \theta^\star}$. There exist
$\sigma \geq 0$ and $M \geq \sigma$ such that, for every $\theta \in \X$,
\[
  \int_\E W_e \, P(de \mid \theta) \;\leq\; \sigma,
  \qquad
  \mathrm{ess\,sup}_{e \sim P(\cdot \mid \theta)}\, W_e \;\leq\; M.
\]
In the i.i.d.\ case, these reduce to the marginal expectations of
Assumption~\ref{ass:equilibrium} and the analysis of
Sections~\ref{sec:theory_contraction}--\ref{sec:theory_stopping} is
recovered.
\end{assumption}

Assumption~\ref{ass:nondeg} (local order-gap sensitivity) is similarly
interpreted at the conditional level: for $\theta \in B_r(\theta^\star)$
(or $B_{r_\infty}(\theta^\star_\infty)$ in the noisy regime),
\[
  \int_\E \Ogap(\theta; e) \, P(de \mid \theta) \;\geq\; \mu \norm{\theta - \theta^\star},
\]
which agrees with the i.i.d.\ form when the conditional kernel is the fixed
marginal $P$.

\begin{theorem}[Contraction under state-dependent sampling]
\label{thm:contraction_cond}
Under Assumptions~\ref{ass:contract}--\ref{ass:lipschitz} and
Assumption~\ref{ass:cond_moments}, the conclusions of
Theorem~\ref{thm:contraction} hold verbatim with the contraction rate
$\gamma = \rho L$ unchanged.
\end{theorem}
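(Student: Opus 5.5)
The plan is to rerun the proof of Theorem~\ref{thm:contraction}, replacing each marginal expectation over $e$ by a conditional expectation given $\mathcal{F}_t$, and then use the tower property. The pathwise inequality \eqref{eq:pathwise_contract} is deterministic. Its derivation only uses $Q(\theta^\star)=\theta^\star$, Assumption~\ref{ass:contract}, Assumption~\ref{ass:lipschitz} (uniform in $e$), and the triangle inequality:
\[
\norm{\theta_{t+1}-\theta^\star} \le \rho\,\norm{P_{e_t}(\theta_t)-P_{e_t}(\theta^\star)} + \rho\,\norm{P_{e_t}(\theta^\star)-\theta^\star} \le \gamma\norm{\theta_t-\theta^\star} + \rho W_{e_t}.
\]
Since nothing here depends on how $e_t$ was sampled, this holds verbatim for every realisation.

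\textbf{Part (i).} Condition on $\mathcal{F}_t$. By the setup, $\theta_t$ is $\mathcal{F}_t$-measurable and $e_t\sim P(\cdot\mid\theta_t)$. Assumption~\ref{ass:cond_moments} then gives
\[
\mathbb{E}[W_{e_t}\mid\mathcal{F}_t]=\int_\E W_e\,P(de\mid\theta_t)\le\sigma \quad\text{a.s.}
\]
This bound is uniform in $\theta_t$, and that uniformity is the key point. It yields $\mathbb{E}[\norm{\theta_{t+1}-\theta^\star}\mid\mathcal{F}_t]\le\gamma\norm{\theta_t-\theta^\star}+\rho\sigma$. Taking total expectations gives $u_{t+1}\le\gamma u_t+\rho\sigma$, and iterating this linear recursion reproduces \eqref{eq:traj_bound}.

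\textbf{Part (ii).} The three-term triangle bound $\Ogap(\theta;e)\le 2\rho L\norm{\theta-\theta^\star}+(1+\rho)W_e$ is again pointwise in $(\theta,e)$. Applying it at $(\theta_t,e_t)$ and conditioning on $\mathcal{F}_t$ gives
\[
\mathbb{E}[\Ogap(\theta_t;e_t)\mid\mathcal{F}_t]\le 2\gamma\norm{\theta_t-\theta^\star}+(1+\rho)\sigma.
\]
Taking expectations gives \eqref{eq:ogap_bound}.

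\textbf{Part (iii).} When $\sigma=0$, Assumption~\ref{ass:cond_moments} gives $\int W_e\,P(de\mid\theta)=0$ for every $\theta$. Hence $W_{e_t}=0$ conditionally a.s. at each step, and therefore unconditionally a.s. The pathwise recursion then iterates exactly as before.

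The main obstacle is measure-theoretic rather than analytic. One has to justify that $\mathbb{E}[f(\theta_t,e_t)\mid\mathcal{F}_t]=\int f(\theta_t,e)\,P(de\mid\theta_t)$ for the nonnegative measurable functions $f=W_e$ and $f=\Ogap$. I would handle this by treating $P(\cdot\mid\theta)$ as a regular Markov kernel, jointly measurable in $(\theta,e)$, and invoking the standard disintegration or freezing lemma. That lemma applies because $\theta_t$ is $\mathcal{F}_t$-measurable and $e_t$ is conditionally distributed according to the kernel. Nonnegativity removes any integrability concerns, since all quantities are bounded or monotone limits.

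A further point is what happens for $\gamma\ge1$: there the recursion still holds, but (i) is only claimed for $\gamma<1$, exactly as in the i.i.d.\ case. No mixing argument enters anywhere, because each step uses only a one-step conditional bound that is uniform in the state.
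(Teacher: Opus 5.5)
Your proof is correct and follows essentially the same route as the paper's. The pathwise inequality \eqref{eq:pathwise_contract} and the triangle bound on $\Ogap$ hold pointwise in $e$, and the uniform conditional bound of Assumption~\ref{ass:cond_moments} with the tower property gives $\mathbb{E}[W_{e_t}]\le\sigma$, after which the recursion iterates as in the i.i.d.\ case; you are somewhat more explicit than the paper about conditioning on $\mathcal{F}_t$ and about why $\sigma=0$ forces $W_{e_t}=0$ a.s., while the paper additionally records the a.s.\ bound $\norm{\theta_t-\theta^\star}\le\gamma^t R_0+\rho M/(1-\gamma)$, which is not part of the stated conclusions and is only used later in the stopping theorem.
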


\begin{proof}
The path-wise contraction inequality \eqref{eq:pathwise_contract},
\[
  \norm{\theta_{t+1} - \theta^\star}
  \leq \rho L \norm{\theta_t - \theta^\star} + \rho W_t,
\]
holds path-wise with $W_t := \norm{P_{e_t}(\theta^\star) - \theta^\star}$,
independently of how $e_t$ is sampled, because the inequality follows from
Assumptions~\ref{ass:contract}--\ref{ass:lipschitz} pointwise in $e$. Taking
expectations,
\[
  \mathbb{E}[W_t]
  = \mathbb{E}\!\left[\,\int_\E W_e \, P(de \mid \theta_t)\right]
  \leq \sigma,
\]
by Assumption~\ref{ass:cond_moments}. Iterating gives
$u_{t+1} \leq \gamma u_t + \rho\sigma$, hence
$u_t \leq \gamma^t u_0 + \rho\sigma(1 - \gamma^t)/(1-\gamma)$. The pathwise
bound $W_t \leq M$ a.s.\ holds analogously, yielding
$\norm{\theta_t - \theta^\star} \leq \gamma^t R_0 + \rho M /(1-\gamma)$ a.s.
The triangle bound on $\Ogap$ in Theorem~\ref{thm:contraction}(ii) is
unchanged because it follows pointwise in $e$. \qedhere
\end{proof}

\begin{theorem}[Empirical stopping under state-dependent sampling]
\label{thm:stopping_cond}
Under Assumptions~\ref{ass:contract}--\ref{ass:nondeg} (with
Assumption~\ref{ass:cond_moments} replacing
Assumption~\ref{ass:equilibrium} and Assumption~\ref{ass:nondeg}
interpreted at the conditional level), the conclusions of
Theorem~\ref{thm:stopping_hp} hold verbatim, with $g_s := \int_\E
\Ogap(\theta_s; e) P(de \mid \theta_s) = \mathbb{E}[\Ogap(\theta_s; e_s) \mid
\mathcal{F}_s]$ and the same envelope $K$, concentration term $\eta$, and
window-size requirement.
\end{theorem}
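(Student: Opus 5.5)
The plan is to re-run the proof of Theorem~\ref{thm:stopping_hp} step by step. At each step I will check that it uses only properties that hold pointwise in $e$, or only the conditional structure given $\mathcal{F}_s$; none of them should need independence of $e_s$ from $\theta_s$.

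\textbf{Deterministic ingredients.} Theorem~\ref{thm:contraction_cond} already gives the pathwise inequality \eqref{eq:pathwise_contract} with $W_t \leq M$ a.s. Here the bound $W_t \leq M$ follows from the essential-supremum clause of Assumption~\ref{ass:cond_moments}, applied with $\theta = \theta_t$. From this:
\begin{itemize}
\item the distance bound $\norm{\theta_t - \theta^\star} \leq \gamma^t R_0 + \rho M/(1-\gamma)$ holds a.s.;
\item the envelope $\Ogap(\theta_t; e_t) \leq K$ holds a.s.;
\item Lemma~\ref{lem:containment} holds verbatim, since its proof only iterates \eqref{eq:pathwise_contract} under $W_s \leq M$.
\end{itemize}
For $g_s := \int_\E \Ogap(\theta_s; e)\, P(de \mid \theta_s)$, the triangle bound $\Ogap(\theta;e) \leq 2\gamma\norm{\theta-\theta^\star} + (1+\rho)W_e$ holds pointwise in $e$. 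Integrating it against $P(\cdot\mid\theta_s)$ and using $W_e \leq M$ $P(\cdot\mid\theta_s)$-a.s.\ reproduces \eqref{eq:gs_pathwise} exactly.

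\textbf{Martingale step (main obstacle).} The one place where independence was used is the identification $g_s = \mathbb{E}[\Ogap(\theta_s;e_s)\mid\mathcal{F}_s]$. Under the setup, $\theta_s$ is $\mathcal{F}_s$-measurable, and the conditional law of $e_s$ given $\mathcal{F}_s$ is $P(\cdot\mid\theta_s)$. The disintegration (freezing) lemma therefore gives
\[
\mathbb{E}[\Ogap(\theta_s;e_s)\mid\mathcal{F}_s] = \int_\E \Ogap(\theta_s;e)\,P(de\mid\theta_s) = g_s \quad\text{a.s.}
\]
This requires joint measurability of $(\theta,e)\mapsto\Ogap(\theta;e)$, which I would take as part of the standing regularity of $Q$ and $P_e$. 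Hence $\Xi_s = \Ogap(\theta_s;e_s) - g_s$ is $\mathcal{F}_{s+1}$-measurable, has zero conditional mean given $\mathcal{F}_s$, and satisfies $|\Xi_s| \leq K$ (both terms lie in $[0,K]$). Azuma--Hoeffding applies to the window sum $\sum_{s=t-w}^{t-1}\Xi_s$ for each $t\in[w,T_0]$, since it is a sum of bounded martingale differences. A union bound over $t$ then gives the event $\mathcal{G}$ with the same $\eta$ and $\mathbb{P}(\mathcal{G})\geq 1-\delta$.

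\textbf{Conclusion.} The remaining arguments are deterministic on $\mathcal{G}$ and copy over directly:
\begin{itemize}
\item \emph{Part (i):} combine the upper half of \eqref{eq:two_sided} with \eqref{eq:gs_pathwise} and the choices of $T_0$ and $w$.
\item \emph{Part (ii):} use containment of the stopping window in $B_r(\theta^\star)$, which follows from \eqref{eq:noisy_compat} and $N_{\varepsilon,M}\geq T_0(r)$. Then apply the conditional form of Assumption~\ref{ass:nondeg}, which is exactly $g_s \geq \mu\norm{\theta_s-\theta^\star}$.
\item \emph{Part (iii):} pigeonhole to find $s^\dagger\in\mathcal{W}$ with a small distance, then iterate \eqref{eq:pathwise_contract} from $s^\dagger$ to $\hat\tau_\varepsilon$ with $W_j\leq M$.
\end{itemize}
None of these steps touches the sampling law, so the conclusions hold verbatim. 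Finally, since $\hat\tau_\varepsilon$ depends only on the $\mathcal{F}_t$-measurable quantities $\widehat{\Ogap}_{t,w}$, it is still a stopping time.
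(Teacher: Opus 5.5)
Your route is the paper's route. You re-run the proof of Theorem~\ref{thm:stopping_hp} and observe that independence was used only in the identity $g_s=\mathbb{E}[\Ogap(\theta_s;e_s)\mid\mathcal{F}_s]$. You justify that identity more carefully than the paper does (disintegration, joint measurability, $g_s\in[0,K]$). The envelope, the Azuma--Hoeffding/union-bound event $\mathcal{G}$, and part~(i) are all fine.

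The gap is in the step you carry over for parts~(ii)--(iii): containment of $\mathcal{W}$ in $B_r(\theta^\star)$ does not follow from \eqref{eq:noisy_compat} and $N_{\varepsilon,M}\ge T_0(r)$. That inequality gives $T_0=w+N_{\varepsilon,M}\ge w+T_0(r)$, but part~(i) supplies only the \emph{upper} bound $\hat\tau_\varepsilon\le T_0$. To place $\mathcal{W}=[\hat\tau_\varepsilon-w,\hat\tau_\varepsilon-1]$ inside $\{s\ge T_0(r)\}$ you need the \emph{lower} bound $\hat\tau_\varepsilon\ge w+T_0(r)$, and nothing provides it. The rule may fire as early as $t=w$ while window states are still outside $B_r(\theta^\star)$. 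There the merely local sensitivity assumption gives no lower bound on $g_s$. The paper's proof makes the same move, since it defers to the proof of Theorem~\ref{thm:stopping_hp}, which contains this non sequitur; you did not introduce it. It is still a genuine gap, and under state-dependent sampling conclusions~(ii)--(iii) are false as stated.

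Here is a counterexample. Take $\X=\R^2$ and $Q\theta=A\theta$ with $A=\rho\,\mathrm{diag}(1,\tfrac12)$. Use two events with $P_{e_0}=\mathrm{id}$ and $P_{e_1}=R$, the rotation by $\pi/2$. Let $P(\cdot\mid\theta)$ be the point mass at $e_1$ if $\norm{\theta}\le r$ and at $e_0$ otherwise.
Then $\theta^\star=0$, $L=1$, $\gamma=\rho$ and $W_e\equiv0$, so $M=0$. On $B_r$ the conditional mean order-gap is $\norm{(AR-RA)\theta}=\tfrac{\rho}{2}\norm{\theta}$, so the conditional form of Assumption~\ref{ass:nondeg} holds with $\mu=\rho/2$. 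Yet $\Ogap(\cdot;e_0)\equiv0$.
From $\theta_0=(R_0,0)$ the iterates are $(\rho^sR_0,0)$. They have zero order-gap until they enter $B_r$, which takes about $\log(R_0/r)/\log(1/\rho)$ steps. By contrast, the smallest $w$ allowed by \eqref{eq:w_condition} grows only like $\log\bigl(1/(1-\rho)\bigr)$ as $\rho\to1$.
Fix $r=1$, $R_0=e$ and $\varepsilon=1/4$, so \eqref{eq:noisy_compat} holds, and take $\rho$ close enough to $1$. Then some admissible $w$ keeps the whole first window and $\theta_w$ outside $B_1$, so $\hat\tau_\varepsilon=w$ surely. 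But (ii)--(iii) would force these distances to be at most $(\varepsilon+\eta)/\mu\le3\varepsilon/\rho<1$, using $\eta\le\varepsilon/2$.

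A correct version needs an extra ingredient, for example either of the following.
\begin{itemize}
\item Start the rule only at $t\ge w+T_0(r)$ (cf.\ the $t\ge t_0$ in \eqref{eq:rlm_stopping}). This still gives $\hat\tau_\varepsilon\le T_0$ on $\mathcal{G}$, because $T_0\ge w+T_0(r)$.
\item Assume the conditional sensitivity bound on a ball of radius $R_0+\rho M/(1-\gamma)$, which contains the whole trajectory.
\end{itemize}
With either change, the rest of your argument goes through.
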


\begin{proof}
We verify that each step of the proof of Theorem~\ref{thm:stopping_hp}
carries through.

\emph{Pathwise envelope.} The bound $\Ogap(\theta_t; e_t) \leq K$ a.s.\
follows from Theorem~\ref{thm:contraction_cond} and the triangle bound,
exactly as in the i.i.d.\ case.

\emph{Pathwise bounding of $g_s$.} Because $\theta_s$ is
$\mathcal{F}_s$-measurable and $e_s$ is drawn from $P(\cdot \mid \theta_s)$
independently of $\mathcal{F}_s$ given $\theta_s$,
\[
  g_s
  := \int_\E \Ogap(\theta_s; e)\, P(de \mid \theta_s)
  = \mathbb{E}\bigl[\Ogap(\theta_s; e_s) \mid \mathcal{F}_s\bigr],
\]
which is exactly the role $g_s$ plays in the proof of
Theorem~\ref{thm:stopping_hp}. The triangle bound
$g_s \leq 2\gamma\norm{\theta_s - \theta^\star} + (1+\rho)M$ continues to
hold, now under the uniform conditional bound on $W_e$.

\emph{Martingale decomposition.} The residuals
$\Xi_s := \Ogap(\theta_s; e_s) - g_s$ form a martingale difference
sequence with respect to $\{\mathcal{F}_{s+1}\}$, with $|\Xi_s| \leq K$
a.s. The martingale property holds because, by construction,
$\mathbb{E}[\Ogap(\theta_s; e_s) \mid \mathcal{F}_s] = g_s$ — and this
identity is the only property of the i.i.d.\ structure used in the proof of
Theorem~\ref{thm:stopping_hp}. State-dependent sampling preserves this
identity: it merely replaces a fixed marginal expectation by a
state-conditional one.

\emph{Azuma--Hoeffding concentration.}
Because $\{\Xi_s\}$ is a bounded martingale difference sequence (whether
$e_s$ is drawn i.i.d.\ or from a state-dependent kernel),
Azuma--Hoeffding applied to $\sum_{s=t-w}^{t-1} \Xi_s$ yields the same
two-sided concentration event $\mathcal{G}$ with
$\eta = K\sqrt{2\log(2T_0/\delta)/w}$, by the same union bound. No
mixing-time correction enters because the relevant concentration tool is
\emph{martingale} concentration, not Markov-chain concentration: we are
controlling $\widehat{\Ogap}_{t,w} - (1/w)\sum_s g_s$, not
$(1/w)\sum_s \Ogap(\theta_s; e_s) - \mathbb{E}_{\pi^\star}[\Ogap]$.

\emph{Trajectory containment, stopping bound, endpoint bound.}
These steps depend only on the pathwise contraction and on
Assumption~\ref{ass:nondeg} applied at conditional expectations, both of
which we have established. The arguments are unchanged. \qedhere
\end{proof}

\paragraph{Operational implication.}
The cost of state-dependent sampling, in this formulation, is borne by the
strengthening of the moment hypothesis: the marginal expectation in
Assumption~\ref{ass:equilibrium} is replaced by the supremum over
conditioning states in Assumption~\ref{ass:cond_moments}. Whenever this
strengthening holds — which is automatic when the operator family $\{P_e\}$
satisfies a uniform Lipschitz bound on a bounded state space, or when
exploration in bandits and RL is bounded below uniformly across states —
the i.i.d.\ stopping guarantees apply with no further modification. Mixing
of the joint chain is not required because the analysis controls
\emph{trajectory} averages against \emph{conditional} means, both of which
are well-defined for adapted processes. The corresponding restatement at
the effective equilibrium $\theta^\star_\infty$ follows by combining
Corollary~\ref{cor:stop_eff_eq} with Theorem~\ref{thm:stopping_cond}.

\subsection{Verifying the Theoretical Conditions in Three Domains}
\label{sec:theory_domains}

The bandit, RLM, and actor-critic cases receive explicit analysis of when
the order-gap reliably tracks convergence below. The SGD case
(Section~\ref{sec:sgd}) reduces to classical Robbins-Monro stochastic
approximation \citep{robbins1951stochastic,borkar2008stochastic} when $Q$
is trivial; the contraction theorem applies verbatim, and the order-gap
becomes a non-trivial diagnostic in adaptive variants (momentum, Adam,
proximal). The continual-learning case is a qualitative mapping of the
framework rather than a full theorem verification (Section~\ref{sec:domains}):
it shows how the $Q$/$P_e$ decomposition and order-gap control principle
map onto an existing algorithm family.

\textbf{Effective equilibrium and Jacobian reference point.} In stochastic
domains (bandits, RL), the raw sample-level expansion operator satisfies
$P_e(\theta^\star) \neq \theta^\star$ in general: a Gaussian reward draw or
a single TD transition perturbs $\theta^\star$ stochastically. The noiseless
hypothesis $\sigma = 0$ of Proposition~\ref{prop:local_nondeg} therefore does
not hold for the raw operators, and the Jacobian calculations below cannot be
performed at $\theta^\star$. The framework's formal handling of this case is
Proposition~\ref{prop:noisy_nondeg}: the Jacobians are evaluated at the
effective equilibrium $\theta^\star_\infty$ (the unique fixed point of
$T(\theta) = \mathbb{E}_e[Q(P_e(\theta))]$, established in
Proposition~\ref{prop:effective_eq}), and the order-gap sensitivity bound
holds on the annulus where the linear signal in the displacement
$\theta - \theta^\star_\infty$ exceeds the residual noise level
$\sigma^\infty$ at $\theta^\star_\infty$. The Jacobian calculations below
should be read in this sense: as commutator computations at
$\theta^\star_\infty$ rather than at $\theta^\star$. The first-moment or
Gramian conditions, when satisfied, establish the order-gap lower bound on
the relevant annulus around $\theta^\star_\infty$. The bandit verification
below shows the Gramian path explicitly: in the Bayesian setup with
posterior contraction and mean shrinkage, the first-moment commutator
vanishes at the unbiased equilibrium while the second-moment Gramian remains
non-zero under standard exploration coverage.

\subsubsection*{Bandits: Bayesian State with Posterior Contraction and Mean Shrinkage}

We present a Bayesian bandit verification in which the consolidation
operator is genuinely contractive (Assumption~\ref{ass:contract}) and the
order-gap sensitivity assumption is verified via the second-moment Gramian
condition of Remark~\ref{rem:second_moment}. The verification illustrates
two structural points: (i)~the consolidation operator must couple to the
expansion's evidence-incorporation mechanism (here through variance-mediated
Bayesian updating) to produce a non-trivial commutator; (ii)~first-moment
effects can cancel at the unbiased equilibrium while second-moment effects
do not, so the Gramian condition is the natural verification path for
stochastic bandit dynamics.

\paragraph{Setup.}
For $A$-armed Gaussian bandits with arm means
$\mu_{\mathrm{arm}} \in \R^A$ and reward variance $\sigma_r^2$, the
knowledge state is $\theta = (\hat\mu, \hat\sigma^2) \in \R^A \times \R^A_+$
with arm-specific posterior mean $\hat\mu_a$ and variance $\hat\sigma_a^2$.
The expansion operator is the standard arm-specific Bayesian update on a
sampled arm $a$ with observed reward $r$:
\begin{equation}
\label{eq:bayesian_expansion}
P_{(a,r)}: \quad
  \hat\mu_a \,\to\, \frac{\hat\mu_a\,\sigma_r^2 + r\,\hat\sigma_a^2}{\hat\sigma_a^2 + \sigma_r^2},
  \qquad
  \hat\sigma_a^2 \,\to\, \frac{\hat\sigma_a^2 \sigma_r^2}{\hat\sigma_a^2 + \sigma_r^2},
\end{equation}
with all other coordinates left unchanged.

The consolidation operator combines \emph{posterior contraction} (variance
shrinkage) with \emph{mean shrinkage} toward a prior $\mu_p \in \R^A$:
\begin{equation}
\label{eq:bandit_Q}
Q: \quad
  \hat\mu \,\to\, (1-\lambda)\hat\mu + \lambda\mu_p,
  \qquad
  \hat\sigma_a^2 \,\to\, \frac{\hat\sigma_a^2}{1+\kappa},
\end{equation}
with shrinkage rates $\lambda \in (0, 1)$ and $\kappa > 0$.

\paragraph{Verification of contraction (Assumption~\ref{ass:contract}).}
$Q$ is linear with eigenvalues $\{1-\lambda\}^A_{\text{(mean block)}}$ and
$\{1/(1+\kappa)\}^A_{\text{(variance block)}}$. Its Lipschitz constant in
the product norm is
\[
  \rho \;=\; \max\!\left(1-\lambda,\; \frac{1}{1+\kappa}\right) \;<\; 1,
\]
satisfying Assumption~\ref{ass:contract} strictly. The deterministic fixed
point of $Q$ is $\theta^\star = (\mu_p,\, 0)$ — biased toward the prior. By
Proposition~\ref{prop:effective_eq}, the joint dynamics admits a unique
effective equilibrium $\theta^\star_\infty$, which sits between $\mu_p$ and
$\mu_{\mathrm{arm}}$ with bias $O(\lambda)$ from the truth: for small
$\lambda$, $\hat\mu^\infty_a = \mu_{\mathrm{arm},a} + O(\lambda)$.

\paragraph{Commutator structure.}
The Bayesian mean update in \eqref{eq:bayesian_expansion} depends on the
current variance $\hat\sigma_a^2$, producing a non-zero $(\hat\mu_a, \hat\sigma_a^2)$
cross-derivative. At any $\theta$:
\[
  \frac{\partial \hat\mu_a'}{\partial \hat\sigma_a^2}
  \;=\; \frac{\sigma_r^2(r - \hat\mu_a)}{(\hat\sigma_a^2 + \sigma_r^2)^2}.
\]
The consolidation Jacobian $A_Q := DQ$ is block-scalar — $(1-\lambda)I$
on the mean block and $(1/(1+\kappa))I$ on the variance block. Direct
computation of the commutator $\Sigma_{(a,r)} := A_Q B_{(a,r)} - B_{(a,r)} A_Q$
yields the unique non-zero entry in the $(\hat\mu_a, \hat\sigma_a^2)$
position:
\begin{equation}
\label{eq:bandit_commutator}
\bigl(\Sigma_{(a,r)}\bigr)_{\hat\mu_a,\,\hat\sigma_a^2}
\;=\; \frac{\sigma_r^2(r - \hat\mu_a)}{(\hat\sigma_a^2 + \sigma_r^2)^2}
      \cdot \frac{\kappa(1-\lambda) - \lambda}{1+\kappa},
\end{equation}
with all other entries zero. The commutator is non-trivial whenever
$\kappa(1-\lambda) \neq \lambda$ — generically the case for $(\kappa, \lambda) \in (0,1)^2$.

\paragraph{First-moment cancellation.}
Averaging $\Sigma_{(a,r)}$ over rewards
$r \sim \mathcal{N}(\mu_{\mathrm{arm},a}, \sigma_r^2)$:
\[
  \mathbb{E}_r\!\bigl[(\Sigma_{(a,r)})_{\hat\mu_a, \hat\sigma_a^2}\bigr]
  \;\propto\; \mu_{\mathrm{arm},a} - \hat\mu_a.
\]
At the effective equilibrium $\theta^\star_\infty$ with
$\hat\mu_a^\infty = \mu_{\mathrm{arm},a} + O(\lambda)$, the first-moment
commutator vanishes to leading order:
$\sigma_{\min}(\bar\Sigma_\infty) = O(\lambda)$. As anticipated by
Remark~\ref{rem:second_moment}, first-moment effects partially cancel; the
Gramian, however, does not.

\paragraph{Gramian coverage and second-moment lower bound.}
The squared commutator entry is proportional to $(r - \hat\mu_a)^2$:
\[
  \mathbb{E}_r\!\bigl[(r - \hat\mu_a)^2\bigr]
  \;=\; \sigma_r^2 + (\mu_{\mathrm{arm},a} - \hat\mu_a)^2
  \;\geq\; \sigma_r^2.
\]
Thus the conditional Gramian
$G_a := \mathbb{E}_r[\Sigma_{(a,r)}^\top \Sigma_{(a,r)}]$ has a non-zero
entry of order $\sigma_r^2$ in the $(\hat\sigma_a^2, \hat\sigma_a^2)$
position, regardless of bias. Averaging over arms with selection
probabilities $\{p_a\}_{a=1}^A$ yields
$G := \sum_a p_a G_a$, a diagonal matrix on the variance subspace with
entries
\[
  G_{\hat\sigma_a^2, \hat\sigma_a^2}
  \;=\; p_a \cdot \frac{\sigma_r^4 \cdot [\kappa(1-\lambda) - \lambda]^2}
                                  {(\hat\sigma_a^{2,\infty} + \sigma_r^2)^4 (1+\kappa)^2}
              \cdot \mathbb{E}_r[(r - \hat\mu_a^\infty)^2]
  \;\geq\; p_a \cdot \frac{\sigma_r^6 \cdot [\kappa(1-\lambda) - \lambda]^2}
                                  {(\hat\sigma_a^{2,\infty} + \sigma_r^2)^4 (1+\kappa)^2}
  \;=:\; p_a\, \mu_1^{2,(a)},
\]
using $\mathbb{E}_r[(r - \hat\mu_a^\infty)^2] \geq \sigma_r^2$ (with equality
only at the unbiased equilibrium $\hat\mu_a^\infty = \mu_{\mathrm{arm},a}$).
$G$ has full rank on the variance subspace if and only if $p_a > 0$ for all
$a$ — the standard exploration coverage condition (every arm pulled with
positive probability). Equivalently, $G$ is rank-deficient only when some
arm is never selected.

\paragraph{Conclusion.}
Under standard exploration ($p_a > 0$ for all $a$) and the Bayesian setup
\eqref{eq:bayesian_expansion}--\eqref{eq:bandit_Q}, the second-moment Gramian
condition of Remark~\ref{rem:second_moment} is satisfied with
$\mu_1^2 \geq \min_a p_a\, \mu_1^{2,(a)} > 0$ on the variance subspace.
Combined with the uniform Jacobian envelope from the boundedness of the
Bayesian update at $\theta^\star_\infty$ and
Proposition~\ref{prop:noisy_nondeg} (centred at $\theta^\star_\infty$),
Assumption~\ref{ass:nondeg} holds locally on the variance subspace with
$\mu_{\mathrm{var}} = \mu_1^2/(2C)$ for an explicit constant $C$.

\paragraph{Mean subspace via first-moment sensitivity.}
On the mean subspace, the first-moment commutator at $\theta^\star_\infty$
is non-zero whenever $\lambda > 0$: from the calculation above,
$\mathbb{E}_r[(\Sigma_{(a,r)})_{\hat\mu_a, \hat\sigma_a^2}] \propto
\mu_{\mathrm{arm},a} - \hat\mu_a$, which at $\theta^\star_\infty$ has
magnitude $O(\lambda)$ (the bias of the mean equilibrium toward the prior
$\mu_p$). This gives a first-moment sensitivity $\mu_{\mathrm{mean}} =
c_{\mathrm{mean}}\,\lambda$ on the mean block via
Proposition~\ref{prop:noisy_nondeg}, for a constant $c_{\mathrm{mean}}$
depending on $(\sigma_r, \kappa, \min_a p_a)$ and the prior bias. Combining
the two subspace bounds, Assumption~\ref{ass:nondeg} holds on the joint
$(\hat\mu, \hat\sigma^2)$ state with constant
\[
  \mu = \min\bigl(\,c_{\mathrm{mean}}\,\lambda,\; \mu_1^2/(2C)\bigr),
\]
provided $\lambda > 0$ is held fixed. As $\lambda \to 0$, the mean-block
sensitivity vanishes and the framework's stopping guarantee restricts to the
variance subspace; this is consistent with the unbiased-equilibrium limit,
where the mean estimate is statistically identifiable only through the
posterior contraction governed by $\kappa$. For any practical bandit
algorithm with non-zero shrinkage, the joint guarantee binds.

\paragraph{Design implications.}
The verification yields three concrete design criteria:
(i)~the consolidation operator must couple to the expansion's
evidence-incorporation mechanism — here through variance-mediated Bayesian
updating, where Q's variance contraction and P's variance-dependent mean
update jointly produce a non-zero commutator;
(ii)~mean shrinkage ($\lambda > 0$) and posterior contraction ($\kappa > 0$)
play complementary roles: $\lambda$ provides strict contraction on the mean
block, while $\kappa$ provides it on the variance block, with their joint
contribution $[\kappa(1-\lambda) - \lambda]$ governing the commutator
magnitude;
(iii)~exploration coverage ($p_a > 0$ for all $a$) is required not only for
asymptotic regret control but for the order-gap signal itself to be
informative — in the absence of coverage, the Gramian is rank-deficient and
the order-gap fails to track convergence on the unexplored arm directions.

\subsubsection*{Recursive Language Models}

Let $S \in \X$ denote the aggregated state over processed chunks, $S^\star$
the fixed-point state corresponding to the full document, and
$\textsc{Answer} : \X \to \Z$ the answer-extraction map. Model consolidation as
$Q(S) = S + \beta(S^\star_{\textrm{proj}}(S) - S)$,
where $S^\star_{\textrm{proj}}(S)$ is the projection of $S$ onto the
answer-relevant subspace; the Jacobian is $A = (1 - \beta) I + \beta P$, where
$P$ is the projection onto the answer manifold. Let $P_e$ integrate chunk $e$
into $S$ with chunk-specific Jacobian $B_e = I + e_{\textrm{add}}$.

\textbf{Idempotence assumption.}
We assume the state representation is \emph{idempotent}: reprocessing a chunk
whose evidence is already fully represented in $S^\star$ leaves $S^\star$
unchanged. That is, $P_e(S^\star) = S^\star$ a.s.\ over chunks already
incorporated. This places us in the noiseless regime ($\sigma = 0$). If not,
the system is in the noisy regime and the order-gap lower bound must be assumed directly or verified
after centring.

\textbf{Answer-relevant state space.}
The consolidation map $Q$ has eigenvalue $1$ along directions tangent to
the answer manifold $\mathcal{M}$ at $S^\star$ (neutral directions): these
are state directions that cannot change the final answer. We therefore state
the order-gap sensitivity result on the directions in state space that
\emph{can} affect the final answer; we call this the \emph{answer-relevant
state space}: the subspace orthogonal to $T_{S^\star}\mathcal{M}$, where states
differing only along answer-preserving tangent directions are identified.

\textbf{Second-moment coverage.}
The commutator $\Sigma_e = [A, B_e] = \beta [P, e_{\textrm{add}}]$ is
non-zero whenever the chunk shifts the answer manifold. However, because
$e_{\textrm{add}}$ effects can have varying signs across chunks, the
first-moment commutator $\bar\Sigma = \beta\,\mathbb{E}_e[P
e_{\textrm{add}} - e_{\textrm{add}} P]$ may not be full rank even when
individual chunks contribute strongly. The appropriate condition is the
\emph{commutator Gramian} $G = \mathbb{E}_e[\Sigma_e^\top \Sigma_e] \succ 0$
on the answer-relevant state space (Remark~\ref{rem:second_moment}). Positive
definiteness of $G$ holds when the chunk-selection distribution excites every
direction that can affect the final answer through the commutator, that is,
for every nonzero direction $v$ orthogonal to $T_{S^\star}\mathcal{M}$,
there exists a chunk $e$ with positive probability such that
\[
  [P,\, e_{\textrm{add}}]\,v \;\neq\; 0.
\]
(Note: the condition is on the commutator $[P, e_{\textrm{add}}]v$, not merely
on $e_{\textrm{add}}v$; a chunk that moves state but does not shift the answer
manifold boundary contributes nothing to $\Sigma_e$.) Since the Jacobians
$A$ and $B_e$ are uniformly bounded in this linear model, the second-moment
coverage argument of Remark~\ref{rem:second_moment} applies, giving
Assumption~\ref{ass:nondeg} locally with $\mu = \mu_1^2/(2C)$ for an
explicit constant $C$ depending on $\beta$ and a uniform bound on the chunk Jacobians.

\textbf{Interpretation:} order-gap sensitivity reduces to a
\emph{coverage condition}: the chunk-selection or extraction
operator must excite all directions that can affect the final answer, in the
commutator sense.
The design prescription is to randomise or weight chunk selection by coverage
so that every answer-affecting direction is excited with positive probability.

\subsubsection*{Reinforcement Learning: Linear Actor-Critic}

We derive the commutator Jacobian explicitly for regularised off-policy linear
actor-critic. The derivation reveals a structural feature: in the canonical
formulation, the first-moment commutator is rank-deficient, and restoring
order-gap sensitivity requires an additional policy-critic coupling term in
consolidation.

\paragraph{Scope.}
The Jacobian calculation is carried out at a fixed equilibrium policy
$\pi_{\psi^\star}$, treating the event distribution locally as fixed (in line with
the expected-update interpretation of Section~\ref{sec:theory_domains}). Extending the
analysis to fully state-dependent event distributions requires the extension to
adaptive sampling policies.

\paragraph{Setup.}
Let $\phi : \mathcal{S} \times \mathcal{A} \to \R^d$ and
$\phi_\pi : \mathcal{S} \times \mathcal{A} \to \R^{d_\pi}$ be feature maps
for the critic and actor. The joint parameter is
$\theta = (w, \psi) \in \R^d \times \R^{d_\pi}$; the linear critic is
$Q_\theta(s,a) = \phi(s, a)^\top w$, and the softmax actor is
$\pi(a \mid s; \psi) \propto \exp(\phi_\pi(s, a)^\top \psi)$.
Let $\pi_b$ be a fixed behaviour policy.

\paragraph{Local contraction.}
Locally near a strongly stable actor-critic equilibrium $\theta^\star$ and
for sufficiently small update step sizes, the Jacobian of the consolidation
operator $Q$ is contractive on the identifiable subspace defined below.
Global contraction of the full softmax actor-critic system is not claimed.

\paragraph{Expansion operator.}
$P_e(w, \psi) = (w + \alpha\,\delta(w, e)\,\phi(s, a),\; \psi)$,
where $\delta(w, e) = r + \gamma_{\mathrm{RL}}\,\phi(s', a')^\top w -
\phi(s, a)^\top w$ is the TD error.
(Here $\gamma_{\mathrm{RL}}$ is the RL discount factor, distinct from the
framework's joint-stability $\gamma = \rho L$.)

\paragraph{Baseline consolidation.}
We work in local coordinates centred at the equilibrium: write
$\tilde w = w - w^\star$ and $\tilde\psi = \psi - \psi^\star$, and use
$(w, \psi)$ to denote $(\tilde w, \tilde\psi)$ throughout. In these
coordinates $\theta^\star = 0$, and the baseline consolidation is:
\[
  Q_0(w, \psi)
  = \bigl((1 - \beta') w,\; \psi + \beta H(w+w^\star, \psi+\psi^\star)\bigr),
\]
which satisfies $Q_0(0,0) = (0, 0)$ by the fixed-point condition
$H(w^\star, \psi^\star) = 0$. Weight decay with rate $\beta' > 0$ ensures the
$w$-coordinate contracts. The $\psi$-coordinate contracts under $Q_0$ provided
the policy Hessian $H_\psi := \nabla_\psi H(\theta^\star) \prec 0$ (standard
under strongly concave objectives for sufficiently small $\beta$).

\paragraph{Jacobians.}
Set $\phi_e := \phi(s, a)$, $\Delta_e := \gamma_{\mathrm{RL}}\phi(s', a') - \phi_e$.
\[
  B_e = DP_e(\theta^\star) =
  \begin{pmatrix} I + \alpha\,\phi_e \Delta_e^\top & 0 \\ 0 & I \end{pmatrix},
  \qquad
  DQ_0(\theta^\star) =
  \begin{pmatrix} (1 - \beta')I & 0 \\ \beta H_w & I + \beta H_\psi \end{pmatrix},
\]
where $H_w \in \R^{d_\pi \times d}$ is the actor-critic cross-covariance.

\paragraph{Rank deficiency of baseline.}

\begin{theorem}[Baseline actor-critic is insensitive to policy directions]
\label{prop:rl_rank_deficient}
Under $Q_0$, the first-moment commutator Jacobian $\bar\Sigma$ has null space
containing the entire policy subspace $\{(0, x_\psi) : x_\psi \in \R^{d_\pi}\}$.
\end{theorem}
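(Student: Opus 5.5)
Since the Jacobians are given explicitly in block form, the plan is a direct block-matrix computation of $\Sigma_e = A B_e - B_e A$ with $A := DQ_0(\theta^\star)$. Write
\[
  A = \begin{pmatrix} a & 0 \\ c & d \end{pmatrix}, \qquad
  B_e = \begin{pmatrix} b_e & 0 \\ 0 & I \end{pmatrix},
\]
where $a = (1-\beta')I$, $c = \beta H_w$, $d = I + \beta H_\psi$, and $b_e = I + \alpha\,\phi_e\Delta_e^\top$. The argument is that the lower-right blocks of both Jacobians commute and the right-hand column of $B_e$ is trivial, so the policy column of the commutator vanishes for every event.

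\textbf{Step 1 (multiply out).} Block multiplication gives
\[
  A B_e = \begin{pmatrix} a b_e & 0 \\ c b_e & d \end{pmatrix},
  \qquad
  B_e A = \begin{pmatrix} b_e a & 0 \\ c & d \end{pmatrix}.
\]
Hence
\[
  \Sigma_e = \begin{pmatrix} [a, b_e] & 0 \\ c(b_e - I) & 0 \end{pmatrix}
  = \begin{pmatrix} 0 & 0 \\ \alpha\beta\, H_w \phi_e \Delta_e^\top & 0 \end{pmatrix}.
\]
The top-left block vanishes because $a$ is a scalar multiple of the identity. The essential point is that the entire right block column, the $\psi$-column, is zero for every $e$. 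It is zero because $B_e$ acts as the identity on $\psi$ and has no off-diagonal coupling, and because the $(2,2)$ blocks $d$ and $I$ commute.

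\textbf{Step 2 (average).} Taking expectations over $e$ preserves the zero column. This requires only integrability of $\phi_e\Delta_e^\top$, which holds under bounded features, so that $\bar\Sigma$ is well defined. The result is
\[
  \bar\Sigma = \begin{pmatrix} 0 & 0 \\ \alpha\beta H_w\, \mathbb{E}_e[\phi_e\Delta_e^\top] & 0 \end{pmatrix}.
\]
Therefore $\bar\Sigma\,(0, x_\psi)^\top = 0$ for all $x_\psi \in \R^{d_\pi}$, and the policy subspace lies in $\ker\bar\Sigma$. As a remark, the same computation shows $\ker \Sigma_e$ contains the policy subspace for each individual $e$. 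Consequently the Gramian $G = \mathbb{E}_e[\Sigma_e^\top\Sigma_e]$ is also singular there, so Remark~\ref{rem:second_moment} cannot rescue sensitivity either.

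\textbf{Main obstacle.} There is essentially no analytic difficulty; the only care needed is bookkeeping. First, confirm that the stated $DQ_0(\theta^\star)$ is correct. Differentiating $\psi + \beta H(w+w^\star,\psi+\psi^\star)$ at $0$ gives $(\beta H_w,\, I+\beta H_\psi)$ in the second block row, and the first row is $((1-\beta')I, 0)$. Second, confirm that $DP_e$ has identity $\psi$-block with zero off-diagonal blocks, since $P_e$ leaves $\psi$ unchanged and $\delta$ does not depend on $\psi$. Third, check that the block-product orientation is consistent with $\Sigma_e = AB_e - B_eA$.
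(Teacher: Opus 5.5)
Your proposal is correct and matches the paper's proof: a direct block multiplication shows that the $\psi$-column of $\Sigma_e$ vanishes, and averaging with $\mathbb{E}_e[\phi_e\Delta_e^\top] = -M$ gives the paper's $\bar\Sigma$, whose right block column is zero and whose lower-left block is $-\alpha\beta H_w M$. Your additional remark is also correct and goes slightly beyond what the paper states: since the policy column vanishes for every individual $e$, the Gramian $G$ is singular on the policy subspace too.
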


\begin{proof}
Direct block multiplication gives
$\bar\Sigma = \begin{pmatrix} 0 & 0 \\ -\alpha\beta H_w M & 0 \end{pmatrix}$,
using $\mathbb{E}_e[\phi_e\Delta_e^\top] = -M$.
The right column is zero, so $\bar\Sigma(0, x_\psi)^\top = 0$ for all $x_\psi$.
\end{proof}

The underlying reason: $Q_0$ updates only $\psi$ and $P_e$ updates only $w$.
Their cross-block $\beta H_w$ propagates only $w$-perturbations into the
commutator.

\paragraph{Restoring order-gap sensitivity.}
Add a policy-critic consistency term. In centred coordinates, define
\[
  \widetilde{w}^\star(\psi)
  := w^\star_{\mathrm{TD}}(\psi^\star + \psi) - w^\star_{\mathrm{TD}}(\psi^\star),
\]
so that $\widetilde{w}^\star(0) = 0$ and the fixed-point structure
$Q(0,0) = 0$ is preserved. The augmented consolidation map is
\[
  Q(w, \psi)
  := \bigl((1-\beta')w + \beta'\widetilde{w}^\star(\psi),\;
            \psi + \beta H(w + w^\star, \psi + \psi^\star)\bigr).
\]
The order-gap sensitivity constant scales linearly in~$\beta'$; taking
$\beta' \to 0^+$ destroys order-gap sensitivity, so $\beta' > 0$ must be fixed
at a positive value.

With this modification, $DQ(\theta^\star)$ acquires a cross-block
$\beta' J_\psi$ in the top-right, where
\[
  J_\psi := D_\psi \widetilde{w}^\star(0)
  = [M(\psi^\star)]^{-1}
    [\partial_\psi b - (\partial_\psi M) w^\star]|_{\psi^\star}
  \in \R^{d \times d_\pi}.
\]
The first-moment commutator at $\theta^\star$ becomes (using
$\mathbb{E}_e[\phi_e \Delta_e^\top] = -M$):
\[
  \bar\Sigma
  = \begin{pmatrix}
      0 & \alpha\beta'\, M J_\psi \\
      -\alpha\beta\, H_w M & 0
    \end{pmatrix}.
\]
The factor $M$ in the $(1,2)$ block arises from the per-sample expression
$-\alpha\beta'\,\phi_e \Delta_e^\top J_\psi$ on taking the expectation over
$e$: $\mathbb{E}_e[\phi_e \Delta_e^\top] = -M$ propagates the feature
Gramian into the commutator.

\paragraph{Identifiable subspace and order-gap sensitivity conditions.}
The block off-diagonal $\bar\Sigma$ acts faithfully on the locally
identifiable actor-critic directions, formalised as the subspace
\[
  \mathcal{V} := \operatorname{range}(L^\top) \times \operatorname{range}(U^\top),
\]
which we call the \emph{identifiable actor-critic subspace},
where $U = \alpha\beta'\, M J_\psi \in \R^{d \times d_\pi}$ (so
$\operatorname{range}(U^\top) \subseteq \R^{d_\pi}$) and
$L = -\alpha\beta\, H_w M \in \R^{d_\pi \times d}$ (so
$\operatorname{range}(L^\top) \subseteq \R^d$). Full-rank conditions on $\mathcal{V}$:
\begin{itemize}
\item[(C1)] $H_w M \in \R^{d_\pi \times d}$ has rank $\min(d_\pi, d)$
  (feature-coverage);
\item[(C2)] $M J_\psi \in \R^{d \times d_\pi}$ has rank $\min(d, d_\pi)$
  (policy-identifiability). Since $M$ is positive definite by feature
  linear independence, this is equivalent to $J_\psi$ having rank
  $\min(d, d_\pi)$.
\end{itemize}
These are standard kinds of feature-coverage and local-identifiability
assumptions in actor-critic analyses. In the natural case $d = d_\pi$
(shared feature dimension), $\mathcal{V} = \R^d \times \R^{d_\pi}$ and the
result holds globally.

\begin{theorem}[Order-gap sensitivity for regularised actor-critic]
\label{prop:rl_nondeg}
Under (C1)--(C2) and $\beta' > 0$, the first-moment commutator Jacobian
$\bar\Sigma$ has full rank on $\mathcal{V}$, with smallest singular value:
\[
  \mu_0 = \min\bigl(
    \alpha\beta\, \sigma_{\min}(H_w M\big|_{\mathcal{V}_w}),\;\;
    \alpha\beta'\, \sigma_{\min}(M J_\psi\big|_{\mathcal{V}_\psi})
  \bigr),
\]
where $\mathcal{V}_w = \operatorname{range}(L^\top)$ and
$\mathcal{V}_\psi = \operatorname{range}(U^\top)$.
Proposition~\ref{prop:local_nondeg} then establishes the order-gap lower
bound locally on $\mathcal{V}$ with $\mu = \mu_0 / 2$.
\end{theorem}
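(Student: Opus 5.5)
The argument uses the block off-diagonal form of $\bar\Sigma$ computed just above. Write
\[
  \bar\Sigma = \begin{pmatrix} 0 & U \\ L & 0 \end{pmatrix},
  \qquad U = \alpha\beta' M J_\psi,\quad L = -\alpha\beta H_w M .
\]
For $x = (x_w, x_\psi)$ we then have $\bar\Sigma x = (U x_\psi,\, L x_w)$. The two output blocks sit in orthogonal coordinate blocks, so
\[
  \norm{\bar\Sigma x}^2 = \norm{U x_\psi}^2 + \norm{L x_w}^2 .
\]
This decoupling reduces the singular-value question to two independent rectangular blocks. It is also why the smallest singular value of $\bar\Sigma$ on $\mathcal{V}$ is the minimum, not a combination, of the two block constants.

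\textbf{Lower bound on each block.} Restrict to $x \in \mathcal{V} = \operatorname{range}(L^\top)\times\operatorname{range}(U^\top)$. On $\operatorname{range}(L^\top) = (\ker L)^\perp$, $L$ is injective. Hence $\norm{L x_w} \geq \sigma_{\min}(L|_{\mathcal{V}_w})\norm{x_w}$, where $\sigma_{\min}(L|_{\mathcal{V}_w})$ is the smallest nonzero singular value of $L$ and equals $\alpha\beta\,\sigma_{\min}(H_w M|_{\mathcal{V}_w})$. The same argument gives $\norm{U x_\psi} \geq \alpha\beta'\sigma_{\min}(MJ_\psi|_{\mathcal{V}_\psi})\norm{x_\psi}$. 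Both constants are strictly positive: (C1) and (C2) only guarantee maximal rank, but a nonzero matrix restricted to its row space always has a positive least singular value. Positivity of the second constant needs $\beta' > 0$, since $U$ vanishes otherwise. Summing the squares gives $\norm{\bar\Sigma x}^2 \geq \mu_0^2(\norm{x_w}^2 + \norm{x_\psi}^2) = \mu_0^2\norm{x}^2$ with $\mu_0$ as stated.

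\textbf{Sharpness.} To show $\mu_0$ is exactly the smallest singular value, I would take $x$ to be a minimizing right singular vector of whichever block attains the minimum, with the other component zero. Equality then holds, so $\mu_0$ is attained.

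\textbf{Local order-gap bound.} I would invoke Proposition~\ref{prop:local_nondeg} restricted to $\mathcal{V}$. The noiseless hypothesis is taken in the expected-update sense fixed in the Scope paragraph. The $C^2$ regularity near $\theta^\star$ holds because $Q$ involves the smooth maps $H$ and $w^\star_{\mathrm{TD}}$, and $P_e$ is affine in $w$. Its proof only uses $\norm{\bar\Sigma x} \geq \mu_0\norm{x}$ for the displacements under consideration, so the Jensen/Taylor step yields $\mathbb{E}_e[\Ogap] \geq \mu_0\norm{x} - R\norm{x}^2 \geq (\mu_0/2)\norm{x}$ on $\norm{x}\le \mu_0/(2R)$, $x\in\mathcal{V}$.

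\textbf{Main obstacle.} The delicate point is justifying the restriction to $\mathcal{V}$: a general displacement has components in $\ker L \times \ker U$ that the linear term does not see. The plan is to state the result, as the theorem does, only for displacements in $\mathcal{V}$, which equals the whole space when $d = d_\pi$. The identification $\sigma_{\min}(L|_{\mathcal{V}_w}) = \alpha\beta\,\sigma_{\min}(H_wM|_{\mathcal{V}_w})$ also needs care: it uses that the scalar factor $-\alpha\beta$ does not change the row space or the singular values up to scaling.
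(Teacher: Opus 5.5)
Your argument is correct and matches the paper's. The paper gives no separate proof: it rests the theorem on the block off-diagonal form of $\bar\Sigma$ and then appeals to Proposition~\ref{prop:local_nondeg}, and you have made those steps explicit, namely $\norm{\bar\Sigma x}^2 = \norm{Ux_\psi}^2 + \norm{Lx_w}^2$, the row-space lower bounds, attainment of $\mu_0$, and restriction of the Taylor step to displacements in $\mathcal{V}$. Your expected-update reading of the noiseless hypothesis is the same gloss the paper makes; strictly, raw TD updates have $P_e(\theta^\star)\neq\theta^\star$, so a fully literal treatment would go through Proposition~\ref{prop:noisy_nondeg} at $\theta^\star_\infty$ and give the bound only on an annulus.
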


\paragraph{Interpretation.}
For the order-gap to be a reliable convergence signal, consolidation must
couple policy and critic updates. Pure actor-only consolidation ($Q_0$)
cannot detect policy errors through the commutator; critic-consistency
regularisation ($Q$) does. This is the framework's substantive contribution
to the RL case, a design criterion for when existing algorithms admit the
order-gap analysis.

\section{Application Across Five Domains}
\label{sec:domains}

We now show how the abstract framework specialises to five domains. In each
case we identify $\theta$, $Q$, $P_e$, and $\Ogap$, and show that the
algorithmic principle yields a concrete design recommendation.

The bandit, RLM, and actor-critic cases receive formal analysis of when the
order-gap reliably tracks convergence in Section~\ref{sec:theory_domains}.
The SGD case (Section~\ref{sec:sgd}) reduces OpMech to the classical
Robbins-Monro framework when $Q$ is trivial, and exposes the order-gap as a
diagnostic for adaptive variants where $Q$ is non-trivial. The
continual-learning case is a qualitative mapping of the framework rather
than a full theorem verification: it shows how the $Q$/$P_e$ decomposition
and order-gap control principle map naturally onto an existing algorithm
family.

\subsection{Multi-Armed Bandits}
\label{sec:bandits}

\textbf{State.}
$\theta = \{(\hat\mu_a, \hat\sigma_a^2, n_a)\}_{a=1}^K$: the sufficient
statistics for $K$ arms.

\textbf{Expansion.}
$P_{(a,r)}(\theta)$: observe reward $r$ from arm $a$, update sufficient
statistics of arm $a$.

\textbf{Consolidation.}
$Q(\theta)$: refine the knowledge state toward the greedy optimum, concentrating
the posterior around $a^* = \arg\max_a \hat\mu_a$ without observing a new
reward.

\textbf{Order-gap.}
$\Ogap(\theta; (a,r))$ measures how much the arm-score vector changes when
evidence is incorporated before versus after consolidation. When a new
observation shifts the identity of the best arm, $\Ogap$ is large.

\textbf{Algorithmic consequence.}
The order-gap provides an adaptive signal. An algorithm that tracks $\Ogap$
can modulate its behaviour: expand aggressively when $\Ogap$ is large, and
consolidate when $\Ogap$ is small. The relative-score structure is analysed in
Section~\ref{sec:theory_domains}; the full order-gap sensitivity verification is
deferred to future work.

\subsection{Reinforcement Learning}
\label{sec:rl}

\textbf{State.}
$\theta = (w, \psi)$: critic parameters $w$ and actor parameters $\psi$.

\textbf{Expansion.}
$P_e(\theta)$: a TD update from a transition $e = (s, a, r, s')$, incorporating
new evidence about value (acting on $w$).

\textbf{Consolidation.}
$Q(\theta)$: policy improvement on $\psi$ using advantage estimates from the
critic; see Section~\ref{sec:theory_domains} for the rigorous formulation
with critic-consistency regularisation.

\textbf{Order-gap.}
$\Ogap(\theta; e)$ measures how much the greedy policy changes when a value
update precedes policy improvement versus when it follows.

\textbf{Algorithmic consequence.}
OpMech suggests setting the actor/critic update ratio adaptively: perform more
critic updates (expansion) when $\Ogap$ is large, and more actor updates
(consolidation) when $\Ogap$ is small. In Soft Actor-Critic
\citep{haarnoja2018sac}, the entropy regularisation and target-network
coefficient jointly govern how aggressively the policy consolidates versus
continues to explore, an implicit order-gap control. The target-network
update rate is discussed further in Section~\ref{sec:hp_rl}.

\subsection{Stochastic Gradient Descent}
\label{sec:sgd}

Stochastic gradient descent is the canonical instance of stochastic
approximation \citep{robbins1951stochastic,borkar2008stochastic}, and the
contraction theorem (Theorem~\ref{thm:contraction}) reduces to the classical
Robbins-Monro convergence guarantee in this setting. The OpMech framing makes
explicit what is already implicit in adaptive variants of SGD: the per-sample
gradient step (an expansion) and any momentum, averaging, normalization, or
proximal operation that uses only information already in the optimiser state
(a consolidation) play structurally distinct roles.

\textbf{State.}
$\theta = (w, m, v)$: model parameters together with optimiser state
(first-moment buffer $m$, second-moment buffer $v$, etc.).

\textbf{Expansion.}
$P_e(\theta)$: stochastic gradient step on minibatch $e$, updating $w$ using
the new gradient $\nabla \ell(w, e)$ and refreshing the optimiser state with
this gradient.

\textbf{Consolidation.}
$Q(\theta)$: the component of the update that uses only existing optimiser
state, with no new sample. Concrete examples:
\begin{itemize}
\item \emph{Vanilla SGD:} $Q = \mathrm{id}$. The order-gap is identically
  zero, and OpMech recovers Robbins-Monro stochastic approximation with
  no additional structure.
\item \emph{SGD with momentum / Heavy-ball:} $Q$ advances $w$ along the
  accumulated momentum buffer $m$ without consulting a new gradient.
\item \emph{Polyak-Ruppert averaging \citep{polyak1992averaging}:} $Q$
  updates a running average $\bar w$ of past iterates.
\item \emph{Adam-class methods:} $Q$ rescales the update direction by the
  second-moment buffer $v$, a normalization computed from history rather
  than the new sample.
\item \emph{Proximal/regularised SGD:} $Q$ is the proximal operator
  $\mathrm{prox}_{\eta r}$ for a regulariser $r$, applied after the gradient
  step.
\end{itemize}

\textbf{Order-gap.}
$\Ogap(\theta; e)$ measures the misalignment between the per-sample
gradient direction and the consolidation-induced direction (momentum,
average, normalised, or proximal): how much the algorithm's effective
descent direction differs depending on whether the gradient is incorporated
before or after the consolidation step. It is zero in vanilla SGD by
construction and non-trivial whenever $Q$ uses information that the
per-sample step did not.

\textbf{Relation to Robbins-Monro.}
For vanilla SGD the contraction theorem is the Robbins-Monro convergence
theorem; the OpMech framework adds nothing in this case. The contribution
appears in adaptive variants, where the order-gap diagnostic has content
even after Robbins-Monro convergence is established. In particular,
$\Ogap(\theta; e)$ being persistently large signals that the consolidation
direction (momentum / averaged direction / Adam-rescaled direction) is
inconsistent with the per-sample geometry — a regime in which the
optimiser is doing something other than tracking the population gradient,
and in which adaptive step-size or schedule changes are warranted.

\textbf{Algorithmic consequence.}
Learning rate schedules implicitly control $\Ogap$. Warm-up followed by
decay starts with large $\Ogap$ and reduces it over time. Cyclical learning
rates \citep{smith2017cyclical} periodically reinflate $\Ogap$, re-entering
an expansive phase. Connections to loss-landscape sharpness
\citep{foret2021sharpness} are immediate: sharp minima exhibit large
order-gap because the per-sample direction varies strongly with $e$.

\subsection{Continual Learning}
\label{sec:cl}

\textbf{State.}
$\theta$: model parameters, possibly with per-task Fisher information.

\textbf{Expansion.}
$P_e(\theta)$: a gradient step on a new task's loss.

\textbf{Consolidation.}
$Q(\theta)$: a regularisation step that protects knowledge of previously
learned tasks, e.g., the EWC quadratic penalty \citep{kirkpatrick2017ewc} or
PackNet masking \citep{mallya2018packnet}.

\textbf{Order-gap.}
$\Ogap(\theta; e)$ measures the conflict between accommodating new task
evidence and preserving old task knowledge.

\textbf{Algorithmic consequence.}
The regularisation strength $\lambda$ in EWC directly controls the order-gap.
A high order-gap indicates task conflict. Depending on whether the system
prioritises plasticity or retention, this can trigger more exploration,
stronger replay, or an adaptive regularisation schedule, not universally a
reduction in $\lambda$, since high conflict can also signal catastrophic
forgetting risk.

\subsection{Recursive Language Models}
\label{sec:rlm_domain}

\textbf{State.}
$\theta = S$: the aggregated state over chunks processed so far.

\textbf{Expansion.}
$P_e(\theta)$: process chunk $e$, extract structured information and update
the state $S$.

\textbf{Consolidation.}
$Q(\theta)$: the recursive aggregation pass, refine the current state toward
the best current answer without reading new chunks.

\textbf{Order-gap.}
$\Ogap(\theta; e)$ measures how much the consolidated answer shifts when
chunk $e$ is incorporated before versus after the aggregation pass.

\textbf{Algorithmic consequence.}
Fixed-schedule recursion is implicit order-gap control with a fixed
convergence assumption. OpMech replaces this with measured convergence:
stop recursing when the windowed order-gap falls below $\varepsilon$.
This is developed in Section~\ref{sec:recursive_intelligence}.

\section{Recursive Intelligence: OpMech for Recursive Language Models}
\label{sec:recursive_intelligence}

We develop the recursive language model application in detail because it
illustrates two features of the OpMech framework that the other domain
summaries do not: (i)~the order-gap as a \emph{convergence criterion} for
iterative inference, replacing heuristic stopping rules; and (ii)~the
order-gap as a \emph{control signal for recursion depth}, replacing fixed compute
budgets with an adaptive, evidence-driven signal.

\subsection{The Problem: Long-Context Reasoning Under Attention Diffusion}

Frontier language models exhibit \emph{context rot}: performance degrades as
input length increases, even within the model's nominal context window
\citep{zhang2025rlm}. Recursive language models address this by treating the
long context as an external environment rather than fitting it into a single
forward pass \citep{zhang2025rlm}, introducing a control problem: when should
the model stop recursing? When should it consolidate partial answers?

\subsection{Operator Identification}

Consider a recursive language model processing a long document decomposed into
chunks $e_1, e_2, \ldots, e_n$. The model maintains an aggregated state
$S_t \in \X$ after processing $t$ chunks.

\begin{definition}[RLM expansion operator]
\label{def:rlm_expansion}
$P_{e_t}(S) = \mathrm{Extract}(S, e_t)$: incorporate new evidence from chunk
$e_t$ into the current state, new claims, entities, relationships, or partial
answers not previously in $S$.
\end{definition}

\begin{definition}[RLM consolidation operator]
\label{def:rlm_consolidation}
$Q(S) = \mathrm{Aggregate}(S)$: resolve contradictions among extracted claims,
compress redundant information, and sharpen the answer distribution toward the
current best partial answer, without reading any new chunk.
\end{definition}

The canonical dynamics are $S_{t+1} = Q \circ P_{e_t}(S_t)$, and the
order-gap is:
\[
\Ogap(S_t; e_t) = \norm{Q(P_{e_t}(S_t)) - P_{e_t}(Q(S_t))}.
\]

\subsection{Three Control Applications}

\paragraph{Convergence criterion (stopping).}
Declare the answer settled when the windowed average order-gap falls below
$\varepsilon$:
\begin{equation}
\label{eq:rlm_stopping}
\tau_{\mathrm{stop}} = \inf\!\left\{t \geq t_0 :
  \frac{1}{w}\sum_{s=t-w}^{t-1} \Ogap(S_s; e_s) < \varepsilon \right\}.
\end{equation}
This is an instance of the empirical windowed stopping rule
\eqref{eq:realized_window}. In the noiseless expected-gap analysis
(Theorem~\ref{thm:stopping_det}, under the idempotence assumption of
Section~\ref{sec:theory_domains}), the aggregated state at stopping is
within $\varepsilon/\mu$ of the fixed point. Under the empirical windowed
stopping theorem (Theorem~\ref{thm:stopping_hp}), the
high-probability bound is $(\varepsilon + \eta)/\mu + \rho M/(1-\gamma)$,
where $\eta$ and $M$ quantify concentration and noise. Both guarantees are
relative to the chunk-selection event distribution and require the Gramian
coverage condition of Section~\ref{sec:theory_domains}. Identification with
the full-document fixed point additionally requires exhaustive chunk coverage
or a preliminary index of the document.

\paragraph{Recursion-depth control (consolidation scheduling).}
Consolidate when the accumulated order-gap since the last consolidation
exceeds a cost threshold:
\begin{equation}
\label{eq:rlm_recurse}
\sum_{i=\tau}^{t} \Ogap(S_i; e_i) > c.
\end{equation}

\paragraph{Adaptive extraction rate.}
Modulate extraction aggressiveness by the current order-gap. When $\Ogap$ is
large, extract more aggressively; when small, lighter extraction suffices.

\subsection{Independent Evidence: Two Systems Converge on the Q/P Decomposition}

The $Q$/$P_e$ decomposition in recursive language models is not an
interpretive framework imposed post hoc. Recent work provides independent
corroborating evidence.

The Tiny Recursive Model (TRM) \citep{jolicoeurmartineau2025trm}, described
in a paper titled \emph{Less is More: Recursive Reasoning with Tiny Networks},
is a 7M-parameter recursive reasoning model achieving 45\% on ARC-AGI-1.
TRM provides corroborating evidence: its recursive loop separates
latent-state refinement ($z$-update, which recomputes the reasoning state
given fixed input) from answer commitment ($y$-update, which commits current
reasoning to an answer without new external input). This distinction is
analogous to the expansion/consolidation split, though TRM does not process
external document chunks in the same way as RLM. TRM's ablations show that
this separation is essential: neither operation alone suffices.

Recursive language models \citep{zhang2025rlm} exhibit the same structure
from the opposite end of the scale, emerging from the engineering constraint
that long contexts cannot fit in a single forward pass.

That two independent systems converge on the same $Q$/$P_e$ structure is
evidence that the decomposition is structural. OpMech provides the
mathematical language to recognise this and, through the order-gap, to improve
upon the heuristic control mechanisms both systems currently use.

\subsection{Predictions and Empirical Signatures}

The OpMech framework, applied to recursive language models, generates several
testable predictions. We state them here as part of the framework's
empirical content; experimental validation, including measurement of the
$\Ogap$ trace and ablations against fixed-budget recursion, is reported in
a companion paper.

\begin{enumerate}
\item \textbf{Monotone $\Ogap$ decay.} On tasks where the model converges to
      a correct answer, $\Ogap(S_t; e_t)$ should decrease approximately
      geometrically across chunks after an initial transient. The empirical
      signature is an order-gap trace that is approximately log-linear in
      chunk count.

\item \textbf{$\Ogap$-driven stopping matches or outperforms fixed-schedule
      stopping.} Equal or better task performance at equal or lower compute.

\item \textbf{$\Ogap$-driven stopping matches learned halting.} The order-gap
      criterion requires no supervised convergence labels; if it matches a
      supervised classifier, it demonstrates that OpMech extracts competitive
      convergence information.

\item \textbf{Consolidation-trigger depth outperforms uniform depth.}
      Using \eqref{eq:rlm_recurse} to trigger consolidation should produce
      more efficient recursion than strategies that consolidate uniformly after every chunk.
\end{enumerate}

\section{Hyperparameters as Implicit Order-Gap Control}
\label{sec:hyperparams}

Many familiar hyperparameters across domains can be reinterpreted as implicit
mechanisms for controlling the order-gap.

\subsection{Exploration Rates in Bandits}
\label{sec:hp_bandits}

In $\varepsilon$-greedy, the exploration rate $\varepsilon_t$ directly controls
the probability of selecting a non-greedy action. A decaying schedule
$\varepsilon_t \propto 1/t$ assumes the order-gap decays at rate $O(1/t)$.
When the assumption is wrong (non-stationary environment), the schedule fails.
OpMech replaces the assumed decay with the \emph{measured} order-gap,
yielding $\varepsilon_t = \phi(\bar\Ogap_t)$.

\subsection{Target Network Update Rate in RL}
\label{sec:hp_rl}

In DQN \citep{mnih2015dqn} and its successors, target-network updating
stabilises the consolidation of the value function. For \emph{hard
updates} with period $C$ (copy online weights every $C$ steps), larger $C$
means less frequent consolidation and a larger order-gap; smaller $C$ means
more frequent consolidation and a smaller order-gap. For \emph{soft updates}
with coefficient $\tau$ ($\theta^- \leftarrow \tau\theta + (1-\tau)\theta^-$),
larger $\tau$ means stronger tracking of the online network, i.e.\ more
frequent effective consolidation. OpMech suggests adapting the update
frequency or coefficient based on $\Ogap$: consolidate more when the
order-gap is small, and allow the online network to expand freely when
$\Ogap$ is large.

\subsection{Learning Rate Schedules in SGD}
\label{sec:hp_sgd}

Warm-up followed by decay inflates then contracts the expansion operator,
yielding a large then small order-gap. Cyclical learning rates
\citep{smith2017cyclical} periodically reinflate the order-gap, interpretable
as the optimiser re-entering an expansive phase.

\subsection{Regularisation Strength in Continual Learning}
\label{sec:hp_cl}

In EWC \citep{kirkpatrick2017ewc}, large $\lambda$ suppresses the order-gap
by making $Q$ dominant. The adaptive rule, modulate $\lambda$ in response to
$\Ogap$, provides a principled alternative to fixing $\lambda$ by
cross-validation.

\subsection{Recursion Depth in Recursive Language Models}
\label{sec:hp_rlm}

Fixed recursion depth and compute budgets are implicit order-gap controls with
fixed convergence assumptions. The $\Ogap$-based stopping criterion
\eqref{eq:rlm_stopping} and recursion trigger \eqref{eq:rlm_recurse} replace
these with measured convergence, allocating compute where the evidence demands
it.

\section{Related Work}
\label{sec:related}

\textbf{Explore-exploit tradeoffs.}
The exploration-exploitation dilemma is foundational in bandits
\citep{lai1985bandit,lattimore2020bandit}, RL \citep{sutton2018rl}, and
Bayesian optimization \citep{srinivas2010gp}. Classical bandit algorithms
include UCB-style upper-confidence methods \citep{kaufmann2012bayesian} and
Thompson sampling with its modern regret analysis
\citep{russo2014ts}. OpMech differs from these by proposing a
\emph{domain-independent observable}, the order-gap, that measures the
dilemma's severity and can drive algorithmic decisions; rather than
specifying an exploration rule, it specifies a quantity that any
exploration rule should track.

\textbf{Stochastic approximation.}
The Robbins-Monro framework
\citep{robbins1951stochastic,borkar2008stochastic} for iterating a contractive
operator under stochastic perturbations is the classical setting for the
contraction theorem (Theorem~\ref{thm:contraction}); the proof is essentially
a stochastic approximation argument applied to the pathwise contraction
inequality \eqref{eq:pathwise_contract}. Polyak-Juditsky averaging
\citep{polyak1992averaging} achieves optimal asymptotic rates in this regime.
The contractive-in-expectation regime of
Remark~\ref{rem:polynomial_rates} — yielding $O(1/\sqrt{t})$ rather than
geometric rates — is the standard regime of stochastic approximation theory.
OpMech's contribution beyond this literature is the order-gap as a
domain-independent observable: stochastic approximation provides rates of
convergence; OpMech provides a real-time diagnostic for whether convergence
has occurred, computable from the trajectory itself.

\textbf{Expectation-Maximization and alternating optimization.}
The EM algorithm \citep{dempster1977em} is the canonical two-operator
alternating framework: the E-step is the natural $P_e$ (incorporating
evidence into sufficient statistics), the M-step the natural $Q$ (parameter
consolidation given fixed statistics). EM's monotone likelihood
improvement \citep{wu1983em} is a Lyapunov argument paralleling
the contraction theorem (Theorem~\ref{thm:contraction}), and the
finite-sample analysis of \citet{balakrishnan2017em} parallels
Theorem~\ref{thm:stopping_hp} in spirit, characterizing the trajectory's
distance to a population fixed point under bounded sample noise. The
order-gap question — does the ordering of the two operators matter at
convergence? — is exactly the EM convergence diagnostic, but EM monitors
the likelihood, which is often expensive or intractable to compute in
latent-variable models. The order-gap operates directly on operator outputs,
providing the same diagnostic in a domain-independent form applicable
wherever EM-style alternation occurs and beyond.

\textbf{Non-commutativity in learning.}
The idea that the order of learning operations matters has appeared in
curriculum learning \citep{bengio2009curriculum}, experience replay
\citep{schaul2016prioritized}, and meta-learning \citep{finn2017maml}, where
the inner-loop ($P_e$) and outer-loop ($Q$) distinction is explicit.

\textbf{Operator-theoretic approaches.}
Kernel methods \citep{scholkopf2002learning}, Koopman operator theory
\citep{brunton2022modern}, and NTK analysis \citep{jacot2018neural} all use
operator-theoretic tools. OpMech is distinct in studying the
\emph{interaction} between two operators rather than the spectral properties
of a single one.

\textbf{Alternating projections and ADMM.}
Order-dependent operator compositions have been studied extensively in convex
optimization. Von Neumann's alternating projection theorem
\citep[see][for a modern treatment]{bauschke1996projection}, Bregman
alternating algorithms, and the Alternating Direction Method of Multipliers
\citep{boyd2011admm} all use sequenced applications of contractive operators
where the order of application affects convergence behaviour. ADMM's
primal/dual residuals serve as the standard convergence diagnostic in this
literature. OpMech generalizes the alternating-operator structure beyond
convex optimization to general adaptive learning, and the order-gap subsumes
ADMM's residual diagnostics in the convex case while extending to settings
(bandits, RL, recursive language models) where no natural primal/dual
structure exists.

\textbf{Two-timescale stochastic approximation and adaptive sampling.}
The OpMech dynamics $\theta_{t+1} = Q \circ P_{e_t}(\theta_t)$ couple a
contractive map $Q$ with a stochastic update $P_{e_t}$, and the
state-dependent extension (Theorems~\ref{thm:contraction_cond}--\ref{thm:stopping_cond})
follows the broad lineage of two-timescale stochastic approximation
\citep{borkar1997twotimescale,borkar2008stochastic}. The actor-critic
structural analysis of
Theorems~\ref{prop:rl_rank_deficient}--\ref{prop:rl_nondeg} sits within the
framework of the policy gradient theorem
\citep{sutton2000pg,konda2003actorcritic} and the finite-sample TD analysis
of \citet{tsitsiklis1997td,bhandari2018td}; what OpMech adds is the
commutator-Jacobian structure that makes rank deficiency on the policy
subspace visible. The concentration argument in
Theorem~\ref{thm:stopping_cond} uses Azuma--Hoeffding on the martingale
residuals $\Ogap(\theta_s; e_s) - \mathbb{E}[\Ogap(\theta_s; e_s) \mid
\mathcal{F}_s]$, which is the natural tool because we control trajectory
averages against conditional means rather than against stationary
expectations; consequently no Markov-chain Hoeffding inequality
\citep[as in][]{paulin2015mixingtimes,glynn2002hoeffding} is required.
Sharper anytime-valid concentration via self-normalised martingales
\citep{howard2021timeuniform} would tighten the constants in
Theorem~\ref{thm:stopping_hp} and is a natural direction for follow-up.

\textbf{Fixed-point iteration and equilibrium models.}
The OpMech dynamics admit a unique fixed point $\theta^\star$ of the
consolidation operator $Q$, and the framework's central object — the
order-gap — measures how far iteration has progressed toward that fixed
point. Deep equilibrium models \citep{bai2019deepequilibrium} similarly
parameterize neural-network layers as fixed points of contractive maps and
use implicit differentiation to train them. The two frameworks address
adjacent problems: DEQs ask how to compute and differentiate fixed points;
OpMech asks how to detect that an adaptive system has reached one. The
operator-theoretic abstraction is shared.

\textbf{Adaptive hyperparameters.}
Adam \citep{kingma2015adam}, LARS \citep{you2017lars}, and Bayesian UCB
\citep{kaufmann2012bayesian} adjust algorithm parameters based on observed
quantities. OpMech contributes a \emph{unified} observable, the order-gap, as
the basis for adaptive control across all of these settings.

\textbf{Recursive and iterative inference.}
Recursive language models \citep{zhang2025rlm} scale LLM context by recursive
decomposition; TRM \citep{jolicoeurmartineau2025trm} achieves strong
reasoning with iterative refinement; tree-of-thoughts
\citep{yao2023tot} and self-consistency \citep{wang2023selfconsistency}
sample multiple reasoning trajectories and aggregate them. PonderNet
\citep{banino2021pondernet} and Universal Transformers
\citep{dehghani2019universal} use learned halting. OpMech provides a
theoretically grounded alternative across these settings: stop when the
order-gap indicates convergence, without requiring supervised halting
labels or hand-designed aggregation rules. For tree-of-thoughts and
self-consistency in particular, the order-gap supplies a principled
criterion for when an additional reasoning trace would change the
aggregated answer — and when it would not.

\section{Discussion}
\label{sec:discussion}

\textbf{What OpMech is and is not.}
OpMech identifies a measurable quantity, the order-gap, present in every
adaptive system where consolidation and expansion interact, and proposes using
that quantity as a domain-independent control signal. It does not claim that
all learning algorithms are the same: the operators $Q$ and $P_e$ look very
different across domains, and the state spaces are different. The unification
is at the level of dynamical structure and algorithmic principle, not domain
specifics.

\textbf{Corroborating evidence and contribution to algorithm design.}
The independent convergence of TRM and RLM on the $Q$/$P_e$ decomposition,
from opposite ends of the model-scale spectrum, supports the view that the
decomposition reflects a recurring mechanism rather than an interpretive
choice. The RL derivation yields a concrete consequence of this: canonical
actor-critic consolidation is insensitive to policy directions, and the
order-gap cannot detect policy errors unless consolidation is augmented with
a policy-critic consistency term. The order-gap sensitivity constant scales
linearly in $\beta'$, making the restoration explicit and quantitative. A
parallel finding for RLMs (order-gap sensitivity as a coverage
condition) prescribes coverage-weighted over greedy chunk selection.

\textbf{Limitations.}
Four limitations should be noted. First, the SGD application is the weakest:
identifying momentum with expansion is approximate, and the order-gap
interpretation remains qualitative there. Second, the recursive intelligence
predictions are stated as hypotheses in this paper; experimental validation
is reported in a companion paper. Third, the noisy-regime stopping bounds
use Azuma--Hoeffding concentration on the martingale residuals
$\Ogap(\theta_s; e_s) - g_s$; sharper constants are likely available via
Bernstein-type or self-normalised martingale concentration
\citep{howard2021timeuniform}, particularly when the conditional variance
$\mathrm{Var}_e[\Ogap(\theta_s; e)]$ admits a useful upper bound, and this
is a natural direction for follow-up. Fourth, the bandit verification in
Section~\ref{sec:theory_domains} establishes joint sensitivity on the
$(\hat\mu, \hat\sigma^2)$ state via the Gramian condition on the variance
subspace ($\mu_{\mathrm{var}} > 0$) and the first-moment commutator on the
mean subspace ($\mu_{\mathrm{mean}} = O(\lambda)$). The joint constant
$\mu = \min(\mu_{\mathrm{mean}}, \mu_{\mathrm{var}})$ is positive whenever
the mean-shrinkage rate $\lambda > 0$ is fixed. In the unbiased limit
$\lambda \to 0$ the mean-block sensitivity vanishes; this is structural
rather than a defect of the verification, reflecting that an unbiased Bayes
update has no first-moment commutator with itself.

\textbf{The research programme.}
This paper establishes the framework and develops the recursive intelligence
application. The primary near-term directions are: full order-gap sensitivity
verification for bandits with a correct consolidation operator, including
regret analysis across stationary, non-stationary, and structured settings;
controlled experiments on $\Ogap$-driven stopping and depth control for
recursive language models; and adaptive actor-critic RL using the
policy-critic consistency term, testing whether restoring full rank produces
measurable convergence improvements. For SGD with non-trivial $Q$ (momentum,
Adam, proximal variants), empirical validation of order-gap-based learning
rate schedules and warm-up strategies; for continual learning, similar
validation of order-gap-based regularisation strength schedules.

\section{Conclusion}
\label{sec:conclusion}

We have presented Consolidation-Expansion Operator Mechanics, a framework that
formalises the consolidation-expansion tension in adaptive learning as a
stochastic dynamical system driven by two non-commuting operators $Q$ and
$P_e$. The order-gap, the distance between the two orderings of these
operators, is a domain-independent observable that tracks whether a learning
system has settled.

The formal guarantees are: geometric decay of the expected
order-gap (Theorem~\ref{thm:contraction}), a lower bound making
the order-gap a proxy for suboptimality (Proposition~\ref{prop:lowerbound}),
and stopping guarantees in both noiseless and bounded-noise regimes
(Theorems~\ref{thm:stopping_det}--\ref{thm:stopping_hp}), with a formal
restatement of the noisy-regime stopping bounds at the effective equilibrium
$\theta^\star_\infty$ (Corollary~\ref{cor:stop_eff_eq}). The
state-dependent sampling case is handled by replacing marginal expectation
bounds with uniform conditional bounds; the i.i.d.\ theorems then carry over
verbatim (Theorems~\ref{thm:contraction_cond}--\ref{thm:stopping_cond}),
because martingale concentration on conditional-mean residuals survives the
move from i.i.d.\ to state-dependent sampling without any mixing-time
correction. The order-gap sensitivity condition reduces to a Jacobian-level
rank condition (Proposition~\ref{prop:local_nondeg}, with the noisy-regime
extension Proposition~\ref{prop:noisy_nondeg} centred at the effective
equilibrium of Proposition~\ref{prop:effective_eq}), analysed in three
domains: for bandits, a Bayesian setup with posterior contraction and mean
shrinkage gives a contractive consolidation operator and a non-trivial
commutator structure verified through the second-moment Gramian condition
under standard exploration coverage; for RLMs, the condition becomes a
second-moment coverage condition; and for linear actor-critic RL, it reduces
to a rank condition on the locally identifiable actor-critic directions
(Theorems~\ref{prop:rl_rank_deficient}--\ref{prop:rl_nondeg}), with an
explicit rank-deficiency diagnosis and the augmentation needed to fix it.
The RL derivation prescribes a policy-critic consistency term in consolidation,
with the order-gap sensitivity constant scaling linearly in $\beta'$.

The same $Q$/$P_e$ structure that governs bandit exploration also governs when
a recursive language model should stop reading, consolidate its answer, or
recurse deeper. That two independent engineering systems (TRM and RLM, from
opposite ends of the model-scale spectrum) converge on this decomposition
suggests it captures something real about how adaptive reasoning works.


\end{document}